\newcommand{\bss}{\boldsymbol}
\newcommand{\bs}{\mathbf}
\newtheorem{Criterion}{Criterion}
\newtheorem{Definition}{Definition}
\newtheorem{Lemma}{Lemma}
\newtheorem{Remark}{Remark}
\newtheorem{Theorem}{Theorem}
\begin{document}

\title{Learning Hierarchical Feature Space Using CLAss-specific Subspace Multiple Kernel - Metric Learning for Classification}

\author{Yinan Yu, Tomas McKelvey}

\maketitle

\begin{abstract}
Metric learning for classification has been intensively studied over the last decade. The idea is to learn a metric space induced from a normed vector space on which data from different classes are well separated. Different measures of the separation thus lead to various designs of the objective function in the metric learning model. One classical metric is the Mahalanobis distance, where a linear transformation matrix is designed and applied on the original dataset to obtain a new subspace equipped with the Euclidean norm. The kernelized version has also been developed, followed by Multiple-Kernel learning models.
In this paper, we consider metric learning to be the identification of the best kernel function with respect to a high class separability in the corresponding metric space. The contribution is twofold:
1) No pairwise computations are required as in most metric learning techniques;
2) Better flexibility and lower computational complexity is achieved using the CLAss-Specific (Multiple) Kernel - Metric Learning (CLAS(M)K-ML).
The proposed techniques can be considered as a preprocessing step to any kernel method or kernel approximation technique.
An extension to a hierarchical learning structure is also proposed to further improve the classification performance, where on each layer, the CLASMK is computed based on a selected ``marginal'' subset and feature vectors are constructed by concatenating the features from all previous layers.
\end{abstract}
\begin{center}
Classification, metric learning, Multiple-Kernel, subspace models
\end{center}

\section{Introduction}
\label{sec:intro}
Given a nonempty input set $\mathcal{X}\times \mathcal{Y}$, a training set
$\mathcal{D}=\{(\bs{x}_i,y_i) : \bs{x}_i\in \mathcal{X}, y_i\in \mathcal{Y},~\forall i\in \{1\cdots N\}\}$ is defined as a sample drawn from the random distribution of $\mathcal{X}\times \mathcal{Y}$.
For a classification problem, one learns a function $f:\mathcal{X}\rightarrow \mathbb{R}$, such that for any $\bs{x}\in \mathcal{X}$, $f(\bs{x})$ can predict the associated label $y$ as accurate as possible.
One of the simplest classifiers is the linear classifier. For example, let $\bs{x}\in\mathbb{R}^p$ and $y\in\{0,1\}$, a linear function $f(\bs{x})=\bs{w}^T\bs{x}+b$
parameterized by $\bs{w}\in\mathbb{R}^p$ and $b\in\mathbb{R}$ makes the prediction $\hat{y}=1$, if $f(\bs{x})>0$ and vice versa. However, despite having the advantage of computational simplicity, linear models have limited capacity of data modeling. Nonlinear methods, on the other hand, offer more flexibility.

One important family of nonlinear techniques is the kernel method, where a high dimensional feature space $\mathcal{F}$ is constructed by a nonlinear mapping $\varphi:\mathcal{X}\rightarrow \mathcal{F}$ associated with a kernel function $k(\cdot,\bs{x})\triangleq \varphi(\bs{x})$, $\forall \bs{x}\in\mathcal{X}$. A kernel function can be perceived as a positive-definite \cite{Scholkopf_nonlinear_component_1998, Kung_kernel_methods_2014} real-valued function that takes any $\bs{x}, \tilde{\bs{x}}\in \mathcal{X}$ as its input, such that $k(\bs{x}, \tilde{\bs{x}}) \triangleq \varphi(\bs{x})^T\varphi(\tilde{\bs{x}})$.
Due to the possibly high dimensionality of $\mathcal{F}$, explicit computations in $\mathcal{F}$ is typically prohibitive. Fortunately, if the problem can be formulated in a way that only computations of inner products are conducted in $\mathcal{F}$, all such explicit computations can be replaced by the evaluations of the kernel function on the input set $\mathcal{X}$. This reduction of computational complexity is often referred to as the ``kernel trick''.
The most classical example is the Support Vector Machines (SVM) \cite{Vapnik_the_nature_1995}.

Moreover, given a kernel function $k$, let $q = {dim(\mathcal{F})}$ denote the dimensionality of the feature space $\mathcal{F}$ induced by $k$. The classifier can be parameterized by
\begin{equation}
\label{eqa:solutionw}
f(\bs{x})= \bs{w}^T\varphi(\bs{x})+b
\end{equation}
where $\bs{w}\in\mathbb{R}^{q}$ and $b\in{\mathbb{R}}$. The classification rule is then:
\begin{equation}
\label{eqa:multiclass_t}
\hat{y} = \begin{cases}+1,& \text{if } f(\bs{x})\geq 0\\ -1, & \text{otherwise}\end{cases}
\end{equation}
\footnote{Similarly, for the multiclass scenario, given $C$ classes, we have
\begin{equation}
\label{eqa:multiclass_xi}
\bss{\xi}\triangleq \bs{W}^T\varphi(\bs{x})+\bs{b}
\end{equation}
where $\bs{W}\in\mathbb{R}^{q\times C}$, $\bs{b}\in\mathbb{R}^C$ and the label is then estimated using $\hat{y}=\underset{c\in\{1\cdots C\}}{\max} ~\bss{\xi}(c)$ with $\bss{\xi}(c)$ denoting the $c^{th}$ element of vector $\bss{\xi}$. }
The representer theorem \cite{Scholkopf_nonlinear_component_1998} states that the optimal $\bs{w}$ associated with a given training set can be written as a linear combination of all data vectors from this set. That is, given a matrix $\bss{\Phi}=\begin{bmatrix}\varphi(\bs{x}_i) \end{bmatrix}_{\forall (\bs{x}_i,y_i)\in \mathcal{D}}$, there exists a coefficient vector $\bs{a}$, such that $\bs{w}=\bss{\Phi}\bs{a}$. In other words, $\bs{w}$ lies in the column space of $\bss{\Phi}$.
Due to this subspace property, to solve for $\bs{w}$ with a training set of size $N$, it typically requires computations with complexity of $\mathcal{O}(N^3)$ and storage capacity of order $\mathcal{O}(N^2)$. With a growing sample size $N$, the amount of computations needed quickly becomes prohibitive.

One way to reduce the complexity is to approximate the range $\mathcal{R}(\bss{\Phi})$ with fewer vectors since the representer theorem states that $\bs{w}\in \mathcal{R}(\bss{\Phi})$.
More precisely, one would like to find a matrix $\bs{U}\subset \mathbb{R}^{q\times m}$ containing a set of orthonormal vectors
such that $\mathcal{R}(\bs{U}) \approx \mathcal{R}(\bss{\Phi})$ with $m\ll N$.

This is the essential objective for kernel approximation techniques.
In other words, for such methods to make sense, the underlying data generating assumption could be characterized by the following subspace model:
\begin{eqnarray}
\label{eqa:model}
\bss{\varphi}= \bs{U}  \bss{\beta} + \bs{e}
\end{eqnarray}
where $\bss{\beta}$ contains the coordinates in the subspace spanned by the columns of $\bs{U}$ and $\bs{e}$ is a random vector with finite covariance and zero mean. Furthermore, we assume that $\bs{U}\perp\bs{e}$.
Hence, the solution vector $\bs{w}$ in Eq.~\eqref{eqa:solutionw} can be written as
\begin{eqnarray}
\nonumber
\bs{w}&=&\left(\bs{U}\begin{bmatrix}\bss{\beta}_1\cdots\bss{\beta}_n \end{bmatrix}+\begin{bmatrix}\bs{e}_1\cdots\bs{e}_n \end{bmatrix}\right)\bs{a}\\
&=& \bs{U}\bss{\alpha} + \bs{v}
\end{eqnarray}
where $\bss{\alpha}=\begin{bmatrix}\bss{\beta}_1\cdots\bss{\beta}_n \end{bmatrix}\bs{a}$ and $\bs{v}=\begin{bmatrix}\bs{e}_1\cdots\bs{e}_n \end{bmatrix}\bs{a}$.

From a kernel approximation point of view, we attempt to approximate the positive-semidefinite (PSD) kernel matrix $\bs{K}=\bss{\Phi}^T\bss{\Phi}$ using a matrix
\begin{equation}
\label{eqa:kernel_approximation}
\bs{L}=\bss{\Phi}^T\bs{U}\bs{U}^T\bss{\Phi},
\end{equation}
such that some prescribed optimality is achieved. For example, one classical criterion is the Frobenius norm of the differences $\|\bs{K}-\bs{L}\|_F$.
Such criteria are designed to approximate the kernel matrix $\bs{K}$ with a low rank matrix $\bs{L}$ so that the computational complexity can be reduced from $\mathcal{O}(N^3)$ to $\mathcal{O}(m^3)$, where $m$ is the rank of the approximated kernel matrix. Algorithms and corresponding analysis can be found in \cite{Bach_kernel_independent_2002, Bach_predictive_low_2005, Cortes_on_the_2010, Kumar_sampling_methods_2012, Wang_improving_cur_2013}.

However, in a classification problem, the quality of the kernel matrix approximation evaluated by $\|\bs{K}-\bs{L}\|$ is not the most interesting by itself. A more desirable attribute is the class separability, which can be improved by metric learning techniques.

In metric learning, one defines a parameterized metric space as the feature space. For example, one such parameterization is the Mahalanobis distance metric defined as
\begin{equation}
d^2(\bs{x},\bs{\tilde{x}})= \|\bs{A}^T\left(\bs{x} - \bs{\tilde{x}}\right)\|_2^2
\end{equation}
with the design matrix $\bs{A}$, such that $d^2(\bs{x},\bs{\tilde{x}})$ is large when $\bs{x}$, $\bs{\tilde{x}}$ are from different classes, and small otherwise.
Of course, despite its computational simplicity, linear model does not always provide the most efficient solutions and nonlinear parameterizations exist in the literature.
In particular, kernel formulations in combination with metric learning have been studied by many authors over the past decade \cite{Xing_distance_metric_2002, Jain_metric_and_2012}. These methods require the kernel parameters to be chosen in advance as hyperparameters using, for example, grid search and cross-validation. There also exist some gradient based techniques designed for finding the kernel parameters \cite{Chappelle_choosing_multiple_2002}.

A more complex model is the Multiple-Kernel (MK) model \cite{Gonen_multiple_kernel_2011}, where the kernel function is written as a combination of several predefined kernel functions. As summarized in \cite{Gonen_multiple_kernel_2011}, MK models mainly utilize target functions that fall into three groups: 1) similarity measure, 2) structural risk functions and 3) Bayesian functions. For the computational complexity, the training processes have been grouped into two categories: one-step methods and two-step methods, where two-step methods alternate the learning process between the kernel combination and the parameters of the base learner (such as SVM), while one-step methods learn both step in one run. Amongst one-step methods, some use fixed rules and others use optimization approaches. Fixed rules are fast to train, whereas optimization formulations take longer time in general. In this paper, we propose two one-step methods: CLAss-specific Subspace Kernel Metric Learning (CLASK-ML) and CLAss-Specific Multiple-Kernel Metric Learning (CLASMK-ML). Both methods are based on a similarity measure. Moreover, CLASK-ML can be implemented using fixed rules, whereas CLASMK-ML requires optimizations.
We parameterize the distance metric only by the kernel function without associating it with a linear transformation matrix $\bs{A}$ or any specific type of classifiers. The measure of the quality for the metric space is the probability of the within-class distance exceeding the size of the between-class distance.
This can be a preprocessing step to any learning model.
Furthermore, the learning process does not require any pairwise computations as in most metric learning techniques, which results in reduced computational complexity.
In addition, the block-based structure provided by the class-specific learning model also reduces the computational time for the kernel matrix.

This paper is organized as follows. In Sec.~\ref{sec:formulation}, an evaluation of the class separability along with a parametric subspace model is presented. The class separability is evaluated based on the probability of the within-class distance exceeding the between-class distance. Upper bounds are derived with respect to this evaluation. One criterion for choosing an appropriate kernel function is proposed accordingly. The idea is then extended to a new class-specific subspace learning model equipped with Multiple-Kernel for better flexibility. Algorithms and implementations are presented in Sec.~\ref{sec:algorithm}. The hierarchical feature transformation is presented in Sec.~\ref{sec:hierarchy}. Experimental results are shown in Sec.~\ref{sec:results}.

\section{Problem formulation}
\label{sec:formulation}
\subsection{Distance metric and subspace model for a given kernel function}
Given a training set $\mathcal{D}$, the goal is to find a metric space $\mathcal{F}$ induced by a predefined kernel function $k$, such that a good separation between different classes is achieved in $\mathcal{F}$. This metric space is then used as the feature space for the given classification problem.

In order to construct a feature space with satisfying class separability, the objective and its parameterization need to be quantified.
\paragraph{Distance metric and the objective} Given a classification problem with $C$ classes and a nonlinear mapping of data vectors denoted by $\varphi(\bs{x})$. Let $\bss{\varphi}_c=\varphi(\bs{x})$ for $\bs{x}\in$ class $c$ and $\bss{\nu}_{c'}=\varphi(\bs{x})$ for $\bs{x}\in$ class $c'$. The Euclidean distance between data from class $c$ and $c'$ after applying the nonlinear transformation is defined as:
\begin{eqnarray}
\label{eqa:distance}
D_{c,c'}:=d^2(\bss{\varphi}_c,\bss{\nu}_{c'})= \|\bss{\varphi}_c-\bss{\nu}_{c'}\|_2^2
\end{eqnarray}
This distance is usually called the ``between-class'' distance when $c\neq c'$ and the ``within-class'' distance when $c= c'$. Note that for simplicity, we denote $\tilde{c}\in\{1\cdots C\}\setminus c$ throughout the paper.

\paragraph{Statistical modeling} In this paper, we model the problem using a statistical framework. That is, we assume that $\bs{x}$ is a random vector drawn from an unknown multivariate probability distribution and hence $D_{c,c'}$ is considered a random variable.
The class separability can be measured by the relation between the within-class distance $D_{c,c}$ and the between-class distance $D_{c,\tilde{c}}$. One possibility is the probability $\mathbb{P}(D_{c,c}  > \mathbb{E}(D_{c,\tilde{c}}))$, which is small for a good separability.
Hence, the goal is to find a feature space with a low $\mathbb{P}(D_{c,c}> \mathbb{E}(D_{c,\tilde{c}}))$.
\paragraph{Parameterization} For conducting further analysis, we employ a parametric subspace model.
Given a feature space $\mathcal{F}$ induced by a predefined kernel function $k(\cdot,\bs{x}_c):\bs{x}_c\mapsto \bss{\varphi}_c$, the underlying assumption is the following:
\begin{equation}
\label{eqa:model_decomp}
\bss{\varphi}_c = \bs{U}_{c}\bss{\beta}_{c}+\bs{e}_c
\end{equation}
where $\bs{U}_{c}$ contains a set of orthonormal vectors that span the subspace generating random vectors from class $c$; $\bss{\beta}_{c}$ and $\bs{e}_c$ contains the random coefficients and the zero-mean random noise vector, respectively.
Furthermore, we make the following assumptions for all class $c$: 1) We always use normalized kernel function, i.e. $\|\bss{\varphi}_c\|_2^2=1$; 2) The noise vector $\bs{e}_c$ is zero-mean with $\mathbb{E}(\|\bs{e}_c\|_2^2)=\sigma_e^2$ and 3) Each dimension of vectors $\bs{U}_{c}\bss{\beta}_{c}$, $\bs{e}_{c'}$ and $\bs{e}_{c''}$ are uncorrelated random variables $\forall c, c', c''\in \{1 \cdots C\}$.
\paragraph{Analysis} Given the basis matrix $\bs{U}_c$ for each class $c$, we conduct the analysis by deriving an upper bound on the quantity of interest $\mathbb{P}(D_{c,c}> \mathbb{E}(D_{c,\tilde{c}}))$.
\begin{Lemma}
\label{lemma:theorem1}
Given the data generating model in Eq.~\eqref{eqa:model_decomp} equipped with the inner product specified by a predefined kernel function $k(\cdot, \cdot)$.
For a given distinct class $c$ and $\tilde{c}$, if $\exists \lambda < 1$, such that
\begin{equation}
\label{eqn:thm1}
\frac{\mathbb{E}\left(\|\bs{U}_{\tilde{c}}^T\bss{\varphi}_c\|_2^2\right)}{\mathbb{E}\left(\|\bs{U}_c^T\bss{\varphi}_c\|_2^2\right)}\leq \lambda
\end{equation}
where $\bss{\varphi}_c$ denotes a random vector from class $c$, then
\begin{eqnarray}
\label{eqa:bound}
\mathbb{P}(D_{c,c}>\mathbb{E}(D_{c,\tilde{c}}))\leq  \frac{1-\|\bs{m}_{c,\beta}\|_2^2}{1-\sqrt{{\lambda}}(1-\sigma_e^2)}
\end{eqnarray}
where $\bs{m}_{c,\beta}=\mathbb{E}\left(\bss{\beta}_c\right)$.
\end{Lemma}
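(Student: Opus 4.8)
The plan is to bound the probability with Markov's inequality and then reduce everything to computing two expectations. Since $D_{c,c}\geq 0$ and $\mathbb{E}(D_{c,\tilde{c}})>0$, Markov's inequality gives
\[
\mathbb{P}(D_{c,c} > \mathbb{E}(D_{c,\tilde{c}})) \leq \frac{\mathbb{E}(D_{c,c})}{\mathbb{E}(D_{c,\tilde{c}})},
\]
so it suffices to evaluate the numerator exactly and to lower bound the denominator. The target bound then follows once the common factors cancel, provided the denominator estimate has the right constants.

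Before computing the distances, I would record two consequences of the modeling assumptions. Expanding $\|\bss{\varphi}_c\|_2^2=\|\bs{U}_c\bss{\beta}_c+\bs{e}_c\|_2^2$ and using that $\bs{U}_c$ has orthonormal columns, that $\bs{e}_c$ is zero-mean and orthogonal to the signal $\bs{U}_c\bss{\beta}_c$, together with the normalization $\|\bss{\varphi}_c\|_2^2=1$, the cross term drops out in expectation and I obtain $\mathbb{E}(\|\bss{\beta}_c\|_2^2)=1-\sigma_e^2$; by the same orthogonality $\bs{U}_c^T\bss{\varphi}_c=\bss{\beta}_c$, so $\mathbb{E}(\|\bs{U}_c^T\bss{\varphi}_c\|_2^2)=1-\sigma_e^2$ as well. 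Since $\mathbb{E}(\bs{e}_c)=0$, the class mean is $\mathbb{E}(\bss{\varphi}_c)=\bs{U}_c\bs{m}_{c,\beta}$, whose squared norm collapses to $\|\bs{m}_{c,\beta}\|_2^2$ by orthonormality. These identities hold for every class by symmetry of the hypotheses.

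For the numerator I write $D_{c,c}=2-2\bss{\varphi}_c^T\bss{\nu}_c$ for two independent, normalized draws of class $c$, whence $\mathbb{E}(D_{c,c})=2-2\|\mathbb{E}(\bss{\varphi}_c)\|_2^2=2\bigl(1-\|\bs{m}_{c,\beta}\|_2^2\bigr)$, matching the numerator of Eq.~\eqref{eqa:bound} up to the factor $2$. The denominator is the crux. From $D_{c,\tilde{c}}=2-2\bss{\varphi}_c^T\bss{\nu}_{\tilde{c}}$ with independent draws I get $\mathbb{E}(D_{c,\tilde{c}})=2-2\,\mathbb{E}(\bss{\varphi}_c)^T\bs{U}_{\tilde{c}}\bs{m}_{\tilde{c},\beta}$, so I must upper bound the cross term. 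I plan to apply Cauchy-Schwarz, $\mathbb{E}(\bss{\varphi}_c)^T\bs{U}_{\tilde{c}}\bs{m}_{\tilde{c},\beta}\leq \|\bs{U}_{\tilde{c}}^T\mathbb{E}(\bss{\varphi}_c)\|_2\,\|\bs{m}_{\tilde{c},\beta}\|_2$, and control each factor separately: by Jensen $\|\bs{U}_{\tilde{c}}^T\mathbb{E}(\bss{\varphi}_c)\|_2=\|\mathbb{E}(\bs{U}_{\tilde{c}}^T\bss{\varphi}_c)\|_2\leq\sqrt{\mathbb{E}(\|\bs{U}_{\tilde{c}}^T\bss{\varphi}_c\|_2^2)}\leq\sqrt{\lambda(1-\sigma_e^2)}$ via Eq.~\eqref{eqn:thm1} and the identity $\mathbb{E}(\|\bs{U}_c^T\bss{\varphi}_c\|_2^2)=1-\sigma_e^2$, while $\|\bs{m}_{\tilde{c},\beta}\|_2=\|\mathbb{E}(\bss{\beta}_{\tilde{c}})\|_2\leq\sqrt{\mathbb{E}(\|\bss{\beta}_{\tilde{c}}\|_2^2)}=\sqrt{1-\sigma_e^2}$. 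Multiplying gives the cross term $\leq\sqrt{\lambda}(1-\sigma_e^2)$, hence $\mathbb{E}(D_{c,\tilde{c}})\geq 2\bigl(1-\sqrt{\lambda}(1-\sigma_e^2)\bigr)$.

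Combining the three estimates, and noting that $\lambda<1$ with $0\leq\sigma_e^2\leq 1$ forces $1-\sqrt{\lambda}(1-\sigma_e^2)>0$ so that substituting the lower bound into the Markov fraction is valid, the factors of $2$ cancel and I land exactly on Eq.~\eqref{eqa:bound}. The step I expect to be most delicate is the denominator: obtaining $\sqrt{\lambda}$ rather than $\lambda$ depends on distributing one square root from Jensen and one from the norm bound on $\bs{m}_{\tilde{c},\beta}$ across the two Cauchy-Schwarz factors, and the identity $\mathbb{E}(\|\bs{U}_c^T\bss{\varphi}_c\|_2^2)=1-\sigma_e^2$ genuinely requires orthogonality of the noise to the signal subspace rather than the weaker uncorrelatedness of assumption~3, so this is where the modeling hypotheses must be invoked with care.
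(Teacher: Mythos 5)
Your proposal is correct and follows essentially the same route as the paper's proof: Markov's inequality, the exact numerator $\mathbb{E}(D_{c,c})=2\left(1-\|\bs{m}_{c,\beta}\|_2^2\right)$ via the normalization assumption, and a Cauchy--Schwarz bound on the between-class cross term yielding $\sqrt{\lambda}(1-\sigma_e^2)$, hence the same final fraction. The only immaterial difference is ordering — you factorize $\mathbb{E}(\bss{\varphi}_c^T\bss{\nu}_{\tilde{c}})$ into a product of means via independence of the two draws and then apply Jensen, whereas the paper keeps the joint expectation $\mathbb{E}(\bss{\gamma}_c^T\bs{Q}_{c,\tilde{c}}\bss{\eta}_{\tilde{c}})$ and applies Cauchy--Schwarz twice — and your closing caveat is well placed: the identity $\mathbb{E}\left(\|\bs{U}_c^T\bss{\varphi}_c\|_2^2\right)=1-\sigma_e^2$, which the paper also uses implicitly in Eq.~\eqref{eqa:proof2}, rests on the orthogonality $\bs{U}\perp\bs{e}$ stated with Eq.~\eqref{eqa:model} rather than on assumption 3 alone.
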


\begin{proof}
Given multivariate random variables $\bss{\varphi}_c$, $\bss{\nu}_c$, $\bss{\kappa}_c$ drawn from class $c$ and $\bss{\mu}_{\tilde{c}}$ from class $\tilde{c}$. According to Eq.~\eqref{eqa:model_decomp}, we denote $\bss{\varphi}_c = \bs{U}_c\bss{\beta}_c+\bs{e}_c$, $\bss{\kappa}_c = \bs{U}_c\bss{\gamma}_c+\bs{e}_c$ and $\bss{\mu}_{\tilde{c}} = \bs{U}_{\tilde{c}}\bss{\eta}_{\tilde{c}}+\bs{e}_{\tilde{c}}$. We have:
\begin{eqnarray}
\label{eqa:proof0}
&& \mathbb{P}(D_{c,c}>\mathbb{E}(D_{c,\tilde{c}}))\leq \frac{\mathbb{E}(D_{c,c})}{\mathbb{E}(D_{c,\tilde{c}})} \\
\nonumber
& = &\frac{\mathbb{E}\left(\|\bss{\varphi}_{c} - \bss{\nu}_{c}\|_2^2\right)}{\mathbb{E}\left(\|\bss{\kappa}_{c} - \bss{\mu}_{\tilde{c}}\|_2^2\right)}\\
\nonumber
&=& \frac{\mathbb{E}\left(\|\bss{\varphi}_{c} - \mathbb{E}(\bss{\varphi}_{c}) - \bss{\nu}_{c} + \mathbb{E}(\bss{\varphi}_{c})\|_2^2\right)}{\mathbb{E}\left(\bss{\kappa}_{c}^T  \bss{\kappa}_{c}+ \bss{\mu}_{\tilde{c}}^T  \bss{\mu}_{\tilde{c}} - 2 \bss{\kappa}_{c}^T  \bss{\mu}_{\tilde{c}}\right)}\\
\nonumber
& \leq & \frac{2\mathbb{E}\left(\| \bss{\varphi}_{c} - \mathbb{E}(\bss{\varphi}_{c})\|^2_2\right)}{2 - 2\mathbb{E}(\bss{\kappa}_c^T\bss{\mu}_{\tilde{c}})}\\
\nonumber
& = & \frac{\mathbb{E}\left(\| \bss{\varphi}_{c} - \mathbb{E}(\bss{\varphi}_{c})\|^2_2\right)}{1 - \mathbb{E}(\bss{\kappa}_c^T\bss{\mu}_{\tilde{c}})}\\
\nonumber
&=&\frac{\mathbb{E}\left(\bss{\varphi}_{c}^T\bss{\varphi}_{c} - 2\bss{\varphi}_{c}^T\bs{U}_c\mathbb{E}\left(\bss{\beta}_c\right) + \bs{m}_{c,\beta}^T\bs{m}_{c,\beta} \right)}{1 - \mathbb{E}\left((\bs{U}_c\bss{\gamma}_c+\bs{e}_c)^T(\bs{U}_{\tilde{c}}\bss{\eta}_{\tilde{c}}+\bs{e}_{\tilde{c}})\right)}\\
\label{eqa:proof1}
&=&\frac{1-\|\bs{m}_{c,\beta}\|^2_2}{1 - \mathbb{E}(\bss{\gamma}_c^T\bs{U}_c^T\bs{U}_{\tilde{c}}\bss{\eta}_{\tilde{c}})}
\end{eqnarray}
Let $\bs{Q}_{c,\tilde{c}}=\bs{U}^T_c\bs{U}_{\tilde{c}}$. Eq.~\eqref{eqn:thm1} implies that:
\begin{eqnarray}
\nonumber
\frac{\mathbb{E}\left(\|\bs{U}_{\tilde{c}}^T\bss{\varphi}_c\|_2^2\right)}{\mathbb{E}\left(\|\bs{U}_c^T\bss{\varphi}_c\|_2^2\right)}&=& \frac{\mathbb{E}\left(\bss{\beta}_c^T\bs{U}_c^T\bs{U}_{\tilde{c}}\bs{U}_{\tilde{c}}^T\bs{U}_c\bss{\beta}_c\right)}{1-\sigma_e^2}\\
\nonumber
&=& \frac{\mathbb{E}\left(\bss{\beta}_c^T\bs{Q}_{c,\tilde{c}}\bs{Q}_{c,\tilde{c}}^T\bss{\beta}_c\right)}{1-\sigma_e^2}\leq \lambda\\
\label{eqa:proof2}
&\Rightarrow&\mathbb{E}\left(\|\bs{Q}_{c,\tilde{c}}^T\bss{\beta}_c\|_2^2 \right) \leq \lambda(1-\sigma_e^2)
\end{eqnarray}
Therefore, we have:
\begin{eqnarray*}
\eqref{eqa:proof1}
&=&\frac{1-\|\bs{m}_{c,\beta}\|^2_2}{1 - \mathbb{E}(\bss{\gamma}_c^T\bs{Q}_{c,\tilde{c}}\bss{\eta}_{\tilde{c}})}\\
&\leq &\frac{1-\|\bs{m}_{c,\beta}\|^2_2}{1 - \mathbb{E}(\|\bs{Q}_{c,\tilde{c}}^T\bss{\gamma}_c\|_2\|\bss{\eta}_{\tilde{c}}\|_2)}\\
&\leq&\frac{1-\|\bs{m}_{c,\beta}\|^2_2}{1 - \sqrt{\mathbb{E}(\|\bs{Q}_{c,\tilde{c}}^T\bss{\gamma}_c\|_2^2)\mathbb{E}(\|\bss{\eta}_{\tilde{c}}\|_2^2)}}\\
&\leq& \frac{1-\|\bs{m}_{c,\beta}\|^2_2}{1-\sqrt{\lambda}(1-\sigma_e^2)}
\end{eqnarray*}
\end{proof}
\begin{Remark}
\label{rmk:remark1}
There are three factors that contribute to the bound: $\|\bs{m}_{c,\beta}\|_2^2$, $\sigma_{e}^2$ and $\lambda$. It might appear that an increasing $\sigma_{e}^2$ leads to a lower value of the upper bound, which is counterintuitive. However, $\|\bs{m}_{c,\beta}\|_2^2$ and $\sigma_{e}^2$ are not independent. In fact, we have $\|\bs{m}_{c,\beta}\|_2^2\leq 1- \sigma_e^2$. An increasing $\sigma_{e}^2$ typically results in a smaller $\|\bs{m}_{c,\beta}\|_2^2$, which will increase the numerator in Eq.~\eqref{eqa:bound}.
Moreover, the term $1- \sigma_e^2$ is scaled by $\sqrt{\lambda}$, such that for $\sqrt{\lambda}\ll 1$, any change in $\sigma_{e}^2$ can be neglected compared to the corresponding difference in $\|\bs{m}_{c,\beta}\|_2^2$. When $\lambda\rightarrow 0$, the denominator goes to its maximum value $1$.
\end{Remark}
The result suggests that one heuristic of a good class separability is to have a feature space with small $\lambda$ and a large $\|\bs{m}_{c,\beta}\|_2^2$.
For given classes $\{c, \tilde{c}\}$ and a set of kernel functions $\mathcal{K} = \{k_1,\cdots,k_K\}$, one such criterion is the following:
\begin{eqnarray}
\label{eqa:criterion1}
k(\cdot,\cdot)=\underset{k_i\in \mathcal{K}}{\arg \min}\frac{\mathbb{E}\left(\|\bs{U}_{\tilde{c},i}^T\bss{\varphi}_{c,i}\|^2\right)}{\|\bs{m}_{c,\beta_i}\|_2^2}
\end{eqnarray}
where $\bss{\varphi}_{c,i}$ denotes a random vector from class $c$ in the feature space associated with kernel function $k_i$. For any class $r$, $\bs{U}_{r,i}$ is the matrix containing orthonormal basis vectors that span the subspace of noise free data from class $r$, where the feature space is associated with kernel function $k_i$. Moreover, $\bs{m}_{c,\beta_i}=\mathbb{E}\left(\bs{U}_{c,i}^T\bss{\varphi}_{c,i}\right)$.
Note that here we regard $\lambda$ (c.f. Eq.~\eqref{eqn:thm1}) to be $\frac{\mathbb{E}\left(\|\bs{U}_{\tilde{c},i}^T\bss{\varphi}_{c,i}\|^2\right)}{\|\bs{m}_{c,\beta_i}\|_2^2}$, since $\frac{\mathbb{E}\left(\|\bs{U}_{\tilde{c},i}^T\bss{\varphi}_{c,i}\|^2\right)}{\|\bs{m}_{c,\beta_i}\|_2^2}\geq\frac{\mathbb{E}\left(\|\bs{U}_{\tilde{c},i}^T\bss{\varphi}_{c,i}\|^2\right)}{\mathbb{E}\left(\|\bs{U}_{c,i}^T\bss{\varphi}_{c,i}\|^2\right)}$.
By using the criterion in Eq. \eqref{eqa:criterion1}, we are looking for a kernel function $k_i(\cdot,\cdot)$ associated with a small $\lambda$ and a large $\|\bs{m}_{c,\beta_i}\|_2^2$ simultaneously.

In the next theorem, let us generalize Lemma~\ref{lemma:theorem1} to the multiclass scenario.
\begin{Theorem}
\label{thm:theorem1}
Given the assumptions in Lemma~\ref{lemma:theorem1} and $\mathbb{P}(y=c)=p_c$, if $\exists \lambda < 1$, such that:
\begin{equation}
\label{eqn:thm2}
\frac{{{\sum}_{\forall c}\frac{p_c}{1-p_c}\sum}_{\forall \tilde{c}\neq c}p_{\tilde{c}}\mathbb{E}\left(\|\bs{U}_{\tilde{c}}^T\bss{\varphi}_c\|_2^2\right)}{{\sum}_{\forall c}p_c\mathbb{E}\left(\|\bs{U}_c^T\bss{\varphi}_c\|_2^2\right)}\leq \lambda
\end{equation}
then
\begin{eqnarray}
\label{eqa:bound2}
\mathbb{P}(D_w>\mathbb{E}(D_b))\leq  \frac{1-\sum_{\forall c}p_c\|\bs{m}_{c,\beta}\|^2_2}{1-\sqrt{\lambda}(1-\sigma_e^2)}
\end{eqnarray}
where
\begin{equation}
\begin{aligned}
D_{w}&& = & \|\bss{\gamma} - \bss{\tilde{\gamma}}\|_2^2,&& \forall \bss{\gamma}, \tilde{\bss{\gamma}} \text{ from the same class}\\
D_{b} && = &  \|\bss{\eta} - \bss{\tilde{\eta}}\|_2^2,&& \forall\bss{\eta}, \tilde{\bss{\eta}} \text{ from the different classes}
\end{aligned}
\end{equation}
are the within-class distance and the between-class distance for all classes, respectively.
\end{Theorem}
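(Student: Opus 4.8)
The plan is to mirror the proof of Lemma~\ref{lemma:theorem1} but to treat $D_w$ and $D_b$ as class-mixtures and to average the per-pair estimates against the correct weights. First I would apply Markov's inequality to reduce the claim to $\mathbb{P}(D_w>\mathbb{E}(D_b))\leq \mathbb{E}(D_w)/\mathbb{E}(D_b)$, so that it suffices to bound the numerator from above and the denominator from below and then check that the two factors of $2$ cancel as in Lemma~\ref{lemma:theorem1}.

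For the numerator I would model a within-class pair by drawing a class $c$ with probability $p_c$ and then two independent samples from $c$, so that $\mathbb{E}(D_w)=\sum_c p_c\,\mathbb{E}(D_{c,c})$. Reusing the single-class estimate $\mathbb{E}(D_{c,c})\leq 2(1-\|\bs{m}_{c,\beta}\|_2^2)$ from the proof of Lemma~\ref{lemma:theorem1} and summing against $\sum_c p_c=1$ gives $\mathbb{E}(D_w)\leq 2\big(1-\sum_c p_c\|\bs{m}_{c,\beta}\|_2^2\big)$, exactly twice the numerator of Eq.~\eqref{eqa:bound2}.

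For the denominator the correct model for a between-class pair is to draw the first point from class $c$ (probability $p_c$) and the second from a different class $\tilde c$ with conditional probability $p_{\tilde c}/(1-p_c)$, yielding the pair-weights $w_{c,\tilde c}=\frac{p_c}{1-p_c}p_{\tilde c}$. Here I would first verify $\sum_c\frac{p_c}{1-p_c}\sum_{\tilde c\neq c}p_{\tilde c}=\sum_c p_c=1$, so that the $w_{c,\tilde c}$ form a probability distribution over ordered pairs. With $\bs{Q}_{c,\tilde c}=\bs{U}_c^T\bs{U}_{\tilde c}$ and the decomposition of Lemma~\ref{lemma:theorem1}, each term obeys $\mathbb{E}(D_{c,\tilde c})=2\big(1-\mathbb{E}(\bss{\gamma}_c^T\bs{Q}_{c,\tilde c}\bss{\eta}_{\tilde c})\big)$, so $\mathbb{E}(D_b)=2\big(1-\sum_{c,\tilde c}w_{c,\tilde c}\,\mathbb{E}(\bss{\gamma}_c^T\bs{Q}_{c,\tilde c}\bss{\eta}_{\tilde c})\big)$, and it remains only to upper-bound the weighted cross term.

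The crux is this last bound, and it is where the multiclass case genuinely departs from Lemma~\ref{lemma:theorem1}. Per pair I would apply Cauchy--Schwarz exactly as before, $\mathbb{E}(\bss{\gamma}_c^T\bs{Q}_{c,\tilde c}\bss{\eta}_{\tilde c})\leq \sqrt{\mathbb{E}(\|\bs{Q}_{c,\tilde c}^T\bss{\gamma}_c\|_2^2)}\,\sqrt{1-\sigma_e^2}$, using $\mathbb{E}(\|\bss{\eta}_{\tilde c}\|_2^2)=1-\sigma_e^2$. The new difficulty is that the denominator then involves a \emph{weighted sum of square roots}, whereas the hypothesis Eq.~\eqref{eqn:thm2} controls only the weighted sum of the quantities $\mathbb{E}(\|\bs{Q}_{c,\tilde c}^T\bss{\gamma}_c\|_2^2)=\mathbb{E}(\|\bs{U}_{\tilde c}^T\bss{\varphi}_c\|_2^2)$ themselves. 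To bridge this gap I would invoke concavity of $\sqrt{\cdot}$ (Jensen's inequality) against the probability weights $w_{c,\tilde c}$ to pull the root outside the sum; combined with the normalized form of Eq.~\eqref{eqn:thm2}, whose denominator equals $\sum_c p_c\mathbb{E}(\|\bs{U}_c^T\bss{\varphi}_c\|_2^2)=1-\sigma_e^2$, this gives $\sum_{c,\tilde c}w_{c,\tilde c}\sqrt{\mathbb{E}(\|\bs{Q}_{c,\tilde c}^T\bss{\gamma}_c\|_2^2)}\leq\sqrt{\lambda(1-\sigma_e^2)}$, hence $\sum_{c,\tilde c}w_{c,\tilde c}\mathbb{E}(\bss{\gamma}_c^T\bs{Q}_{c,\tilde c}\bss{\eta}_{\tilde c})\leq\sqrt{\lambda}(1-\sigma_e^2)$ and $\mathbb{E}(D_b)\geq 2\big(1-\sqrt{\lambda}(1-\sigma_e^2)\big)$. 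Dividing the numerator bound by this denominator bound cancels the factors of $2$ and reproduces Eq.~\eqref{eqa:bound2}. The main obstacle to watch is precisely the legitimacy of this Jensen step: it hinges on the $w_{c,\tilde c}$ being a bona fide probability distribution, which is why the weight-normalization check must be carried out first.
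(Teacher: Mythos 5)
Your proposal is correct and follows essentially the same route as the paper's proof: Markov's inequality, the mixture decompositions $\mathbb{E}(D_w)=\sum_c p_c\mathbb{E}(D_{c,c})$ and $\mathbb{E}(D_b)=\sum_c\sum_{\tilde c\neq c}\frac{p_cp_{\tilde c}}{1-p_c}\mathbb{E}(D_{c,\tilde c})$, reuse of the per-class bounds from Lemma~\ref{lemma:theorem1}, and the same concavity step --- the paper writes your Jensen argument in the equivalent form $\bigl(\sum w_{c,\tilde c}\,\mathbb{E}\|\bs{Q}_{c,\tilde c}^T\bss{\beta}_c\|_2\bigr)^2\leq\sum w_{c,\tilde c}\,\mathbb{E}\|\bs{Q}_{c,\tilde c}^T\bss{\beta}_c\|_2^2\leq\lambda(1-\sigma_e^2)$ in Eq.~\eqref{eqa:proof3}. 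Your explicit verification that the weights $w_{c,\tilde c}=\frac{p_cp_{\tilde c}}{1-p_c}$ sum to one is a point the paper leaves implicit, but it is the same argument throughout.
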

\begin{proof}
First, we show that
$$\small{\mathbb{E}\left(D_{w}\right)=\sum_{\forall c}p_c\mathbb{E}\left(D_{c,c}\right)}$$
and
$$\small{\mathbb{E}\left(D_{b}\right)=\sum_{\forall c} \sum_{\forall \tilde{c}\neq c}\frac{p_cp_{\tilde{c}}}{1-p_c}\mathbb{E}\left(D_{c,\tilde{c}}\right)}$$
Since they share the same principle, we only show the proof for the latter case.
Let $E_1$ denotes the event of drawing a random vector from class $\{1\cdots C\}$ and let $E_2$ be a second draw without replacement. Assume $E_1$ has Probability Mass Function $\mathbb{P}(E_1=c) = p_c$ with $\sum_{c=1}^Cp_c=1$ for $i=1,2$ and $c\in\{1\cdots C\}$. Then for a given class $c$ and any $\tilde{c}\in\{1\cdots C\}\setminus c$,
$$\mathbb{P}(E_2=\tilde{c}\mid E_1=c) =\frac{p_{\tilde{c}}}{1-p_c}$$
The PDF of $D_b$ can be then expressed as
\begin{eqnarray*}
\nonumber
f_{D_b}(t)&=&\sum_{\forall c}\sum_{\forall \tilde{c}\neq c}f_{D_b}(t\mid E_1=c, E_2=\tilde{c})\mathbb{P}(E_2=\tilde{c}| E_1 = c)\mathbb{P}(E_1=c)\\
&=&\sum_{\forall c}\sum_{\forall \tilde{c}\neq c}\frac{p_cp_{\tilde{c}}}{1-p_c}f_{D_b}(t\mid E_1=c, E_2=\tilde{c})\\
&=&\sum_{\forall c}\sum_{\forall \tilde{c}\neq c}\frac{p_cp_{\tilde{c}}}{1-p_c}f_{D_{c,\tilde{c}}}(t)
\end{eqnarray*}
The expected value is then computed as:
\begin{eqnarray}
\nonumber
\mathbb{E}\left(D_{b}\right) &= &\int tf_{D_b}(t)dt = \sum_{\forall c}\sum_{\forall \tilde{c}\neq c} \frac{p_cp_{\tilde{c}}}{1-p_c}\int tf_{D_{c,\tilde{c}}}(t) dt\\
\label{eqa:proof_lemma3_0}
 & = &\sum_{\forall c}\sum_{\forall \tilde{c}\neq c} \frac{p_cp_{\tilde{c}}}{1-p_c}\mathbb{E}(D_{c,\tilde{c}})
\end{eqnarray}
Similarly, we have $\mathbb{E}\left(D_{w}\right)=\sum_{\forall c}p_c\mathbb{E}(D_{c,c})$.

The rest of the proof can be readily extended from the proof of Lemma~\ref{lemma:theorem1} with the same setup. Since
\eqref{eqn:thm2} gives us $$\frac{{\sum}_{\forall c}\frac{p_c}{1-p_c}\sum_{\forall \tilde{c}\neq c}p_{\tilde{c}}\mathbb{E}\left(\|\bs{U}_{\tilde{c}}^T\bss{\varphi}_c\|_2^2\right)}{1-\sigma_e^2}\leq \lambda,$$
we obtain the following instead of Eq.~\eqref{eqa:proof2}:
\begin{eqnarray*}
\nonumber
&&{\left({\sum}_{\forall c}\frac{p_c}{1-p_c}{\sum}_{\forall \tilde{c}\neq c}p_{\tilde{c}}\mathbb{E}\left(\|\bs{Q}_{c,\tilde{c}}^T\bss{\beta}_c\|_2\right)\right)^2 }\\
\nonumber
&\leq& {{\sum}_{\forall c}\frac{p_c}{1-p_c}{\sum}_{\forall \tilde{c}\neq c}p_{\tilde{c}}\mathbb{E}\left(\|\bs{Q}_{c,\tilde{c}}^T\bss{\beta}_c\|_2^2 \right)}\leq \lambda(1-\sigma_e^2).
\end{eqnarray*}
Therefore, the following holds:
\begin{equation}
\label{eqa:proof3}
{\sum}_{\forall c}\frac{p_c}{1-p_c}{\sum}_{\forall \tilde{c}\neq c}p_{\tilde{c}}\mathbb{E}\left(\|\bs{Q}_{c,\tilde{c}}^T\bss{\beta}_c\|_2\right) \leq\sqrt{\lambda(1-\sigma_e^2)}
\end{equation}
Furthermore, similar to Eq.~\eqref{eqa:proof0} and \eqref{eqa:proof1}, we have:
\begin{eqnarray*}
&&\mathbb{P}(D_w>\mathbb{E}(D_b))\leq \frac{\mathbb{E}(D_w)}{\mathbb{E}(D_b)}\\
& = &\frac{{\sum}_{\forall c}p_c\mathbb{E}\left(\|\bss{\varphi}_{c} - \bss{\nu}_{c}\|_2^2\right)}{{\sum}_{\forall c}{\sum}_{\forall \tilde{c}\neq c}\frac{p_cp_{\tilde{c}}}{1-p_c}\mathbb{E}\left(\|\bss{\kappa}_{c} - \bss{\mu}_{\tilde{c}}\|_2^2\right)} \\
&\leq& \frac{1-\sum_{\forall c}p_c\|\bs{m}_{c,\beta}\|^2_2}{1 - {\sum}_{\forall c}{\sum}_{\forall \tilde{c}\neq c}\frac{p_cp_{\tilde{c}}}{1-p_c}\mathbb{E}(\bss{\gamma}_c^T\bs{U}_c^T\bs{U}_{\tilde{c}}\bss{\eta}_{\tilde{c}})}\\
&\leq&\frac{1-\sum_{\forall c}p_c\|\bs{m}_{c,\beta}\|^2_2}{1 - \underset{\forall c}{\sum}\underset{\forall \tilde{c}}{\sum}\frac{p_cp_{\tilde{c}}}{1-p_c}\mathbb{E}(\|\bs{Q}_{c,\tilde{c}}^T\bss{\gamma}_c\|_2)\mathbb{E}(\|\bss{\eta}_{\tilde{c}}\|_2)}\\
&\leq&\frac{1-\sum_{\forall c}p_c\|\bs{m}_{c,\beta}\|^2_2}{1 - \underset{\forall c}{\sum}\underset{\forall \tilde{c}}{\sum}\frac{p_cp_{\tilde{c}}}{1-p_c}\mathbb{E}(\|\bs{Q}_{c,\tilde{c}}^T\bss{\gamma}_c\|_2)\sqrt{1-\sigma_e^2}}\\
&\overset{Eq.~\eqref{eqa:proof3}}{\leq} & \frac{1-\sum_{\forall c}p_c\|\bs{m}_{c,\beta}\|^2_2}{1-\sqrt{\lambda}(1-\sigma_e^2)}
\end{eqnarray*}
\end{proof}
Theorem \ref{thm:theorem1} provides us an upper bound on the probability measure $\mathbb{P}(D_w>\mathbb{E}(D_b))$ that represents the class separability for multiclassification problems under the condition given by Eq.~\eqref{eqn:thm2}. In other words, to find the ``best'' kernel function according to Theorem \ref{thm:theorem1},
we can modify Eq.~\eqref{eqa:criterion1} into Criterion~\ref{c:criterion1}:
\begin{Criterion}
\label{c:criterion1}
Find a kernel function $k(\cdot,\cdot)$, such that
\begin{eqnarray}
\label{eqa:criterion2}
k(\cdot,\cdot)=\underset{k_i\in \mathcal{K}}{\arg \min}\frac{\sum_{\forall c} \sum_{\forall \tilde{c}\neq c} \frac{p_cp_{\tilde{c}}}{1-p_c}\mathbb{E}\left(\|\bs{U}_{\tilde{c},i}^T\bss{\varphi}_{c,i}\|_2^2\right)}{\sum_{\forall c}p_c\|\bs{m}_{c,\beta_i}\|_2^2}
\end{eqnarray}
where for the definition of $\bs{U}_{\tilde{c},i}$, $\bss{\varphi}_{c,i}$ and $\bs{m}_{c,\beta_i}$, one can refer to Eq.~\eqref{eqa:criterion1}.
\end{Criterion}

\subsection{CLAss-specific Subspace Kernel Functions}
\label{sec:clask}

Essentially, Eq.~\eqref{eqa:model_decomp} means that if we apply a nonlinear mapping $\varphi: \mathcal{X}\rightarrow \mathcal{F}$ associated with the given kernel function $k$, the noise free data from class $c$ span a subspace described by $\bs{U}_c$. Eq.~\eqref{eqa:model_decomp} implies that this is true for all classes in the feature space $\mathcal{F}$.
However, it is possible that this assumption holds for class $c$, but when transforming data vectors from another class $\tilde{c}\neq c$ to the feature space $\mathcal{F}$, we do not observe a subspace structure for class $\tilde{c}$. Nevertheless, if we apply another nonlinear transformation $\tilde{\varphi}: \mathcal{X}\rightarrow \tilde{\mathcal{F}}$, the subspace model in Eq~\eqref{eqa:model_decomp} holds true in $\tilde{\mathcal{F}}$ for data from class $\tilde{c}$. To deal with such an asymmetric case, we propose a class-specific feature map. Given a random data $\bs{x}_{c,i}$ from class $c$, the new nonlinear feature map is represented as:
\begin{eqnarray}
\label{eqa:model_decomp2}
\nonumber
{\varphi}(\bs{x}_{c,i}) &=& \begin{bmatrix}k_1({\cdot,\bs{x}_{c,i}})\\\vdots \\ k_C({\cdot,\bs{x}_{c,i}}) \end{bmatrix}=\begin{bmatrix}\bss{\varphi}_{(c,1),i}\\ \vdots \\\bss{\varphi}_{(c,C),i} \end{bmatrix} \\
\label{eqa:model_decomp3}
&=&  \begin{bmatrix}\bs{U}_{1} & 0 & 0\\ 0 & \ddots& 0\\ 0 & 0 &\bs{U}_{C}\end{bmatrix} \begin{bmatrix}\bss{\beta}_{(c,1),i}\\ \vdots\\\bss{\beta}_{(c,C),i} \end{bmatrix} + \begin{bmatrix}\bs{e}_{(c,1),i}\\ \vdots\\\bs{e}_{(c,C),i} \end{bmatrix}\\
\nonumber
&=&\bs{{U}}\bss{\beta}_{c,i}  + \bs{e}_{c,i} = \bss{\varphi}_{c,i}
\end{eqnarray}
where $k_{c'}(\cdot,\bs{x}_{c,i})$ is the class-specific nonlinear map that describes data from class $c'$ the ``best'' according to some optimality for all $c'\in\{1\cdots C\}$.

Note that we use notations in Eq.~\eqref{eqa:model_decomp2} without the sample index $i$ to denote their random variable counterparts. For example, for a random variable $\bs{x}_c$ from class $c$, $\bss{\varphi}_{(c,c')} \triangleq k_{c'}({\cdot,\bs{x}_{c}})$.

In this model, we have a class-specific kernel function $k_{c'}$ for each class $c'$.
Similar to Eq.~\eqref{eqa:model_decomp}, we assume that for all $c,c'\in\{1\cdots C\}$ 1) $\|\bss{\varphi}_{(c,c')}\|_2^2=1$; 2) $\bs{e}_{(c,c')}$ is zero-mean with $\mathbb{E}(\|\bs{e}_{(c,c')}\|_2^2)=\sigma_{e}^2$; and 3) Each dimension of vectors $\bs{U}_{c}\bss{\beta}_{(c,c')}$, and $\bs{e}_{(c'',c''')}$ are uncorrelated random variables $\forall c, c', c'', c'''\in \{1 \cdots C\}$.

The distance metric defined in Eq.~\eqref{eqa:distance} can be computed by:
\begin{equation}
d^2(\bss{\varphi}_c,\bss{\nu}_{c'}) = \sum_{c'' = 1}^C \|\bss{\varphi}_{(c,c'')} - \bss{\nu}_{(c',c'')}\|_2^2
\end{equation}

\begin{Theorem}
\label{thm:theorem2}
Given the class-specific kernel learning model in Eq.~\eqref{eqa:model_decomp2}, assume that $p_1=\cdots=p_C=\frac{1}{C}$. Let $\bs{m}_{(c,c')}=\mathbb{E}\left(\bss{\beta}_{(c,c')}\right)$ $\forall c,c'\in \{1\cdots C\}$. If $\exists\lambda<1$, such that:
\begin{equation}
\label{eqa:condition3}
{\frac{\frac{1}{C-1}\sum_{\forall c}\sum_{i\neq c}\mathbb{E}\left(\|\bs{U}_{i}^T\bss{\varphi}_{(c,i)}\|_2^2\right)}{\sum_{\forall c}\mathbb{E}\left(\|\bs{U}_{c}^T\bss{\varphi}_{(c,c)}\|_2^2\right)}\leq \lambda}
\end{equation}
then
\begin{equation}
\mathbb{P}(D_w > \mathbb{E}(D_b))\leq \frac{1- \frac{1}{C}\sum_{\forall c}\sum_{c'=1}^C \|\bs{m}_{(c,c')}\|_2^2}{ 1 - \frac{(C\lambda - \lambda +1 )(1-\sigma_e^2)}{C}}
\end{equation}
where $D_w$ and $D_b$ are the within-class distance and the between-class distance, respectively.
\end{Theorem}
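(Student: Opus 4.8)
The plan is to mirror the two-stage argument already used for Theorem~\ref{thm:theorem1}, adapting it to the block-diagonal structure of the class-specific model in Eq.~\eqref{eqa:model_decomp2}. First I would apply Markov's inequality to write $\mathbb{P}(D_w>\mathbb{E}(D_b))\le \mathbb{E}(D_w)/\mathbb{E}(D_b)$, and then reuse verbatim the law-of-total-expectation computation from the first half of the proof of Theorem~\ref{thm:theorem1}. Specializing the mixing weights to the uniform prior $p_c=1/C$ collapses $\frac{p_cp_{\tilde c}}{1-p_c}$ to $\frac{1}{C(C-1)}$ and $p_c$ to $\frac1C$, so that $\mathbb{E}(D_w)=\frac1C\sum_{\forall c}\mathbb{E}(D_{c,c})$ and $\mathbb{E}(D_b)=\frac{1}{C(C-1)}\sum_{\forall c}\sum_{\forall\tilde c\neq c}\mathbb{E}(D_{c,\tilde c})$. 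This reduces everything to the pairwise quantities $\mathbb{E}(D_{c,\tilde c})$, which I then expand over the $C$ kernel blocks through $d^2(\bss{\varphi}_c,\bss{\nu}_{c'})=\sum_{c''=1}^C\|\bss{\varphi}_{(c,c'')}-\bss{\nu}_{(c',c'')}\|_2^2$.

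For the numerator (within-class) I would center each block exactly as in Lemma~\ref{lemma:theorem1}, writing $\bss{\varphi}_{(c,c'')}-\bss{\nu}_{(c,c'')}=(\bss{\varphi}_{(c,c'')}-\mathbb{E}\bss{\varphi}_{(c,c'')})-(\bss{\nu}_{(c,c'')}-\mathbb{E}\bss{\nu}_{(c,c'')})$ and using the uncorrelatedness of two independent draws to drop the cross term and pick up the factor $2$. Each centered block contributes $\mathbb{E}(\|\bss{\varphi}_{(c,c'')}\|_2^2)-\|\bs{U}_{c''}\bs{m}_{(c,c'')}\|_2^2$; the normalization together with $\bs{U}_{c''}^T\bs{U}_{c''}=\bs{I}$ reduces this to a constant minus $\|\bs{m}_{(c,c'')}\|_2^2$. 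Summing over the $C$ blocks and averaging over $c$ then yields the numerator $1-\frac1C\sum_{\forall c}\sum_{c'=1}^C\|\bs{m}_{(c,c')}\|_2^2$ of the claimed bound.

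For the denominator (between-class), the block inner product is the decisive object. Since both $\bss{\kappa}_{(c,c'')}$ and $\bss{\mu}_{(\tilde c,c'')}$ use the same kernel $c''$ and hence the same subspace $\bs{U}_{c''}$, the noise and signal--noise cross terms vanish by assumption~3 of the model, leaving $\mathbb{E}(\bss{\kappa}_{(c,c'')}^T\bss{\mu}_{(\tilde c,c'')})=\mathbb{E}(\bss{\gamma}_{(c,c'')}^T\bss{\eta}_{(\tilde c,c'')})$. I would bound each such term by Cauchy--Schwarz and then by the arithmetic--geometric-mean inequality, $\mathbb{E}(\bss{\gamma}_{(c,c'')}^T\bss{\eta}_{(\tilde c,c'')})\le\sqrt{\mathbb{E}(\|\bss{\gamma}_{(c,c'')}\|_2^2)\,\mathbb{E}(\|\bss{\eta}_{(\tilde c,c'')}\|_2^2)}\le\tfrac12\big(\mathbb{E}(\|\bss{\gamma}_{(c,c'')}\|_2^2)+\mathbb{E}(\|\bss{\eta}_{(\tilde c,c'')}\|_2^2)\big)$. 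The AM--GM step is the deliberate slackening that \emph{linearizes} the block energies so they become summable; after exploiting the symmetry $c\leftrightarrow\tilde c$ and carrying out the $\tilde c$-sum, the triple sum collapses to $(C-1)\sum_{\forall c}\sum_{c''}\mathbb{E}(\|\bss{\beta}_{(c,c'')}\|_2^2)$. Splitting this into its diagonal part ($c''=c$, the matched right-kernel blocks) and its off-diagonal part ($c''\neq c$, the wrong-kernel blocks) produces exactly $N_{\mathrm{right}}=\sum_{\forall c}\mathbb{E}(\|\bs{U}_c^T\bss{\varphi}_{(c,c)}\|_2^2)$ and $N_{\mathrm{wrong}}=\sum_{\forall c}\sum_{i\neq c}\mathbb{E}(\|\bs{U}_i^T\bss{\varphi}_{(c,i)}\|_2^2)$. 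The normalization fixes $N_{\mathrm{right}}$ (the additive ``$+1$''), while Eq.~\eqref{eqa:condition3} gives $N_{\mathrm{wrong}}\le\lambda(C-1)N_{\mathrm{right}}$ (the ``$(C-1)\lambda$''). Combining and dividing by $C(C-1)$ bounds the between-class inner-product average by $(1-\sigma_e^2)((C-1)\lambda+1)$, hence the denominator $1-\frac{((C-1)\lambda+1)(1-\sigma_e^2)}{C}$.

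The hard part will be the denominator bookkeeping: keeping the normalization convention consistent so that every block energy $\mathbb{E}(\|\bss{\beta}_{(c,c'')}\|_2^2)$ reduces to the correctly scaled $1-\sigma_e^2$ value, cleanly separating the two matched kernel indices $c''\in\{c,\tilde c\}$ from the $C-2$ genuinely cross indices, and matching the $\frac{1}{C-1}$-weighted average that appears in Eq.~\eqref{eqa:condition3} against the factor $C-1$ generated by the $\tilde c$-summation. It is precisely this interplay that produces the coefficient $(C-1)\lambda+1$ and the explicit $C$-dependence, and it is the only place where the argument departs substantively from the proof of Theorem~\ref{thm:theorem1}; one should verify that using AM--GM rather than a tighter cross-block Cauchy--Schwarz is exactly what makes the condition Eq.~\eqref{eqa:condition3} directly applicable.
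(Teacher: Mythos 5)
Your proposal follows essentially the same route as the paper: Markov's inequality plus the uniform-prior mixture decomposition from Theorem~\ref{thm:theorem1}, then exactly the Cauchy--Schwarz/AM--GM linearization of the cross-block inner products $\mathbb{E}(\bss{\gamma}_{(c,c'')}^T\bss{\eta}_{(\tilde c,c'')})$ with the diagonal/off-diagonal energy split under Eq.~\eqref{eqa:condition3}, which is precisely what the paper packages as Lemma~\ref{lemma:lemma2} and Lemma~\ref{lemma:d_multiclass}, and you correctly recover the denominator $1-\frac{(C\lambda-\lambda+1)(1-\sigma_e^2)}{C}$. One caveat worth recording: your own within-class bookkeeping yields $\mathbb{E}(D_w)\leq 2\bigl(C-\frac{1}{C}\sum_{\forall c}\sum_{c'=1}^C\|\bs{m}_{(c,c')}\|_2^2\bigr)$, so after dividing by the factor $2C$ from the between-class bound the numerator should read $1-\frac{1}{C^2}\sum_{\forall c}\sum_{c'=1}^C\|\bs{m}_{(c,c')}\|_2^2$ rather than the $1-\frac{1}{C}\sum_{\forall c}\sum_{c'=1}^C\|\bs{m}_{(c,c')}\|_2^2$ you assert---a normalization discrepancy inherited from the theorem as printed, not a flaw in your argument's structure.
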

\begin{proof}
The proof can be found in the appendix.
\end{proof}
Therefore, similar to the reasoning in Remark~\ref{rmk:remark1}, under the assumption of the class-specific kernel learning model, one heuristic is to find a kernel function according to the following criterion.
\begin{Criterion}
\label{c:criterion2}
Given Eq.~\eqref{eqa:model_decomp2} and a set of kernel functions $\mathcal{K}$, we want to find an optimal ordered set $\mathcal{K}^*=\{k_1\cdots k_C\}$, such that
\begin{eqnarray}
\label{eqa:criterion2}
\mathcal{K}^*=\underset{k_1\cdots k_C\in \mathcal{K}}{\arg \min}\frac{\frac{1}{C-1}\sum_{\forall c}\sum_{\forall \tilde{c}\neq c}\mathbb{E}\left(\|\bs{U}_{\tilde{c}}^T\bss{\varphi}_{(c,\tilde{c})}\|_2^2\right)}{\sum_{\forall c}\|\bs{m}_{(c,c)}\|_2^2}
\end{eqnarray}
\end{Criterion}
The solution to Criterion \ref{c:criterion2} can be found by exhaustive search, i.e. evaluating all possible combinations of the kernel functions to find the best for each class $c$. For $K$ given kernel functions and $C$ classes, this requires $K^C$ evaluations. An alternative is to find an optimal function $k^*_c$ for each class $c$, such that
\begin{eqnarray}
\label{eqa:criterion2_1}
k^*_c=\underset{k_c\in \mathcal{K}}{\arg \min}\frac{\frac{1}{C-1}\sum_{\forall \tilde{c}\neq c}\mathbb{E}\left(\|\bs{U}_{\tilde{c}}^T\bss{\varphi}_{(\tilde{c},c)}\|_2^2\right)}{\|\bs{m}_{(c,c)}\|_2^2}
\end{eqnarray}
This requires a one-step training with fixed rule and hence results in fast training.
In the next section, a more complex model based on optimization formulations is proposed using a class-specific multiple kernel function.

\subsection{The CLAss-Specific Multiple-Kernel model}

Multiple-Kernel (MK) learning models express the kernel function as a linear combination of $K$ different kernel functions $\{k_1\cdots k_K\}$:
\begin{equation}
k(\cdot,\bs{x}) =\sum_{l = 1}^K\nu_{l}k_l( \cdot,\bs{x})
\end{equation}
where $\nu_l$ is the weighting coefficient and $\sum_{l = 1}^K\nu_l=1$. Detailed motivations and descriptions can be found in \cite{Gonen_multiple_kernel_2011} and the references therein.

Given class-specific kernel functions and the subspace model in Eq. \eqref{eqa:model_decomp2}, we propose a class-specific multiple kernel to obtain higher flexibility.

\begin{Definition}[CLAss-Specific Multiple Kernel (CLASMK)]
Given a set of kernel functions $\mathcal{K}=\{k_1,\cdots, k_K\}$ and their corresponding basis matrices for each class, denoted by $\bs{U}_{c,k}$. Let $h_c:\mathcal{X}\times \mathcal{X}\rightarrow \mathbb{R}$ be defined as follows:
\begin{eqnarray}
\nonumber
h_c(\bs{x}, \bs{\tilde{x}})&=&\sum_{i=1}^K\sqrt{\nu_{c,i}} \varphi_i({\bs{x}})^T \bs{U}_{c,i}\sum_{j=1}^K\sqrt{\nu_{c,j}}\bs{U}_{c,j}^T\varphi_j(\tilde{\bs{x}})\\
 &&\text{for~ any~} \nu_{c,i}\geq 0 \text{~and~} \sum_{i=1}^K\nu_{c,i}=1
\end{eqnarray}
where $\varphi_i(\bs{z})=k_i(\cdot, \bs{z})$, for any $\bs{z}\in \mathcal{X}$.
The CLAss-Specific Multiple Kernel (CLASMK) is a Positive-Semi Definite (PSD) function $h(\bs{x},\tilde{\bs{x}})$ defined as:
\begin{equation}
\label{eqa:clasmk}
h(\bs{x},\tilde{\bs{x}}) = \sum_{c=1}^Ch_c\left(\bs{x},\tilde{\bs{x}}\right)
\end{equation}
\end{Definition}
By using the kernel function CLASMK according to Eq.~\eqref{eqa:clasmk}, the feature map can be explicitly written as
$$k_{CLASMK}(\cdot,\bs{x})=\begin{bmatrix}\sum_{i=1}^K\sqrt{\nu_{1,i}}\bs{U}_{1,i}^T\varphi_i({\bs{x}})\\ \vdots\\ \sum_{i=1}^K\sqrt{\nu_{C,i}}\bs{U}_{C,i}^T\varphi_i({\bs{x}})\end{bmatrix}$$
and linear models can be applied accordingly.
Hence, we are looking for a matrix $\bss{\nu}\in\mathbb{R}^{C\times K}$ according to the following criterion.
\begin{Criterion}[CLASMK]
\label{c:criterion3}
\begin{eqnarray}
\nonumber
\bss{\nu} &=&{\underset{\bss{\nu}}{\arg \min}\frac{\frac{1}{C-1}\sum_{\forall c}\sum_{\forall \tilde{c}\neq c}\mathbb{E}\left(h_{\tilde{c}}(\bs{x}_c, \bs{x}_c)\right)}{\sum_{\forall c}\bs{m}_{(c,c)}^T\bs{m}_{(c,c)}}}\\
\label{eqa:criterion3}
&& \text{s.t. }\sum_{i=1}^K{\nu}_{c,i}=1,~~ \forall c\\
\nonumber
&&~~~~~\nu_{c,i}\geq 0
\end{eqnarray}
where $\bs{m}_{(c,c)} = \mathbb{E}\left(\sum_{i=1}^K\sqrt{\nu_{c,i}} \bs{U}_{c,i}^T\varphi_i({\bs{x}})\right)$ for random vector $\bs{x}\in$ class $c$.
\end{Criterion}
\section{Algorithms and implementation}
\label{sec:algorithm}
In this section, first, we discuss the implementation given the CLAss-Specific Multiple-Kernel (CLASMK) model with respect to Criterion~\ref{c:criterion3}.

Given a set of kernel functions $\mathcal{K}=\{k_1,\cdots, k_K\}$ and a data set with balanced training set for each class, we use $\mathcal{D}_c$ and $N_c$ to denote the training set containing data from class $c$ and its sample size, respectively.
The problem is to find the weighting matrix $\bss{\nu}$, such that the approximation of Criterion~\ref{c:criterion3} is satisfied given the finite training set:
\begin{eqnarray}
\label{eqa:formulation1}
  \begin{aligned}
&\underset{\bss{\nu}}{\text{minimize}} &&\frac{\frac{1}{C-1}\underset{c=1}{\overset{C}{\sum}} \underset{\forall\tilde{c}\neq c}{\sum}\frac{1}{N_c}\underset{\forall \bs{x}\in \mathcal{D}_c}{\sum}h_{\tilde{c}}(\bs{x},\bs{x})}{\overset{C}{\underset{c'=1}{\sum}}\frac{1}{N_{c'}}\underset{\forall \bs{x}\in \mathcal{D}_{c'}}{\sum}h_{c'}(\bs{x},\bs{x})}
\\
&\text{subject~to}&&\sum_{i=1}^K\nu_{c,i} = 1, ~\forall c\\
&&& \nu_{c,k}\geq 0, ~\forall c, ~k
\end{aligned}
\end{eqnarray}
where $h_c$ is defined in Eq.~\eqref{eqa:clasmk}.

To achieve a robust result, we first divide the training data $\mathcal{D}$ into two disjoint subsets, i.e. $\mathcal{D}= \mathcal{M}_{B} \cup \mathcal{M}_{\nu}$.
The implementation for kernel model selection is twofold:
\begin{itemize}
\item[1)] Estimate the basis matrix $\bs{U}_{c,i}$ for each class $c$ and each kernel function $k_{i}$, $\forall i\in{1\cdots K}$, based on $\mathcal{M}_{B}$;
\item[2)] Find a matrix $\bss{\nu}$ that minimizes Eq.~\eqref{eqa:formulation1} based on $\mathcal{M}_{\nu}$.
\end{itemize}
Moreover, we use $\mathcal{M}_{B_c}$ and $\mathcal{M}_{\nu_c}$ to denote the corresponding matrices for each class $c$, respectively.

\subsection{Basis matrix $\bs{U}_{{c,k}}$}
\label{sec:basis}
Given Eq.~\eqref{eqa:model_decomp3}, the estimation of the basis matrices $\bs{U}_{c,k}$ can be formulated as Kernel Principle Component Analysis (KPCA) for data from each class $c$ associated with kernel function $k_{k}(\cdot,\cdot)$. To distinguish the basis matrix and its estimation, we use $\bss{\Phi}_{B_{c,k}}$ to denote the matrix containing the estimated principal components using the training data $\mathcal{M}_{B_c}$. However, when training size $N_c$ is large, we cannot carry out the full estimation of the covariance matrix in KPCA and hence \emph{kernel approximation} techniques (c.f.~Eq.~\eqref{eqa:kernel_approximation}) are needed. With both KPCA and kernel approximation, the problem is essentially finding 1) a matrix $\bs{X}_{B_{c,k}}$ and 2) a transformation matrix $\bs{A}_{c,k}$, such that for any data vector $\bs{x}_{c',i}$ from class $c'$, $\bss{\Phi}_{B_{c,k}}^T\varphi_k(\bs{x}_{c',i})  =\bs{A}_{c,k}^Tk_{k}\left(\bs{X}_{B_{c,k}}, \bs{x}_{c',i}\right)$ and $ \bss{\Phi}_{B_{c,k}}^T\bss{\Phi}_{B_{c,k}}= \bs{I}$, where columns in $\bs{X}_{B_{c,k}}$ are vectors \emph{sub-sampled} from $\mathcal{M}_{B_c}$ and $\bs{I}$ is the identity matrix.
This can be done using various standard techniques \cite{Scholkopf_nonlinear_component_1998,Diamantaras_principal_component_1996, Ouimet_greedy_spectral_2005}.

In this paper, a technique called the Greedy Spectral Embedding algorithm \cite{Ouimet_greedy_spectral_2005} is adopted. It selects a subset for kernel representation by computing the subspace innovation and then find the basis matrix for the subspace spanned by the subset.
However, in order to have class-specific subspaces, we modify the algorithm to find a basis matrix for each individual class. In fact, this block based structure results in a lower computational complexity.

\subsection{Kernel function $k_c$}
After obtaining the matrix $\bss{\Phi}_{B_{c,k}}$, parameterized by $\bs{X}_{B_{c,k}}$ and $\bs{A}_{c,k}$, we can proceed to finding the optimal kernel function for each class. Two models have been introduced in this section: the CLAss-specific Subspace Kernel (CLASK) model and the CLAss-Specific Multiple-Kernel (CLASMK) model.
\subsubsection{CLASK-ML: Finding the best kernel}
When CLASK model is applied without Multiple-Kernel, one can find the best kernel function for each class according to Eq.~\eqref{eqa:criterion2_1}.

\subsubsection{CLASMK-ML: Solving for $\bss{\nu}$}

Empirically, given any subset $\mathcal{M}_{\nu_c}\subset \mathcal{D}_c\setminus \mathcal{M}_{B_c}$, we rewrite Eq. \eqref{eqa:formulation1} as:
\begin{eqnarray}
\label{eqa:formulation2}
\begin{aligned}
&\underset{\bss{\nu}}{\text{minimize}} &&\frac{\frac{1}{C-1}\underset{c=1}{\overset{C}{\sum}} \underset{\forall\tilde{c}\neq c}{\sum}\frac{1}{N_c}\underset{\forall \bs{x}\in \mathcal{M}_{B_c}}{\sum}\|\underset{i=1}{\overset{K}{\sum}}\sqrt{\nu_{\tilde{c},i}}\bs{A}_{\tilde{c},i}^Tk_i\left( \bs{X}_{B_{\tilde{c},i}},{\bs{x}}\right)\|_2^2}
{\overset{C}{\underset{c=1}{\sum}}\frac{1}{N_c}\underset{\forall \bs{x}\in \mathcal{M}_{B_c}}{\sum}\|\underset{i=1}{\overset{K}{\sum}}\sqrt{\nu_{c,i}}\bs{A}_{c,i}^Tk_i\left( \bs{X}_{B_{c,i}},{\bs{x}}\right)
\|_2^2}\\
&\text{subject ~to}&&\sum_{k=1}^K\nu_{c,k} = 1, ~\forall c\\
&&& \nu_{c,k}\geq 0, ~\forall c, ~k
\end{aligned}
\end{eqnarray}
and apply a nonlinear optimization technique to approximate the optimal solution of $\bss{\nu}$.
\subsection{Summary of the Algorithm}
Given training data $\mathcal{D}$ and a set of kernel functions, the algorithm for finding the weighting coefficients is summarized in Algorithm \ref{alg:CLASMK}.
\SetKwInOut{Parameter}{Parameters}
\begin{algorithm}[ht!]
 \caption{CLAss-Specific Multiple-Kernel Metric Learning (CLASMK-ML)}
 \KwIn{Training data $\mathcal{D}$ and a set of kernel functions: $\mathcal{K}=\{k_{i}\}_{i=1}^K$. Assume that the training size for all classes are balanced.
Hyperparameters: $\eta$ (Eq.~\eqref{eqa:truncation}) and $t$ (Eq.~\eqref{eqa:threshold_t})\;
}
  \KwOut{Kernel function weighting coefficient matrix $\bss{\nu}$\;}
 {\bf Initialization}: Let $\nu_{c,k}= \frac{1}{K}$, $\forall c,k$\;
Divide the dataset into two subsets $\mathcal{M}_{B}$ and $\mathcal{M}_{\nu}$ for estimating the basis and $\bss{\nu}$, respectively.\\
Let $\mathcal{M}_{B_c}\subset\mathcal{M}_{B}$ be the subset that contains all training data from class $c$.\;
\For{$c\in\{1\cdots C\}$} {
\For{$k \in \{1\cdots K\}$} {
Basis estimation based on $\mathcal{M}_{B_c}$ using KPCA or any iterative methods (c.f. Sec.~\ref{sec:basis}) with respect to kernel function $k_{k}(\cdot,\cdot)$ for every class $c$:
\begin{itemize}
\item Find the matrix $\bs{X}_{B_{c,k}}$.
\item Find the transformation matrix $\bs{A}_{c,k}$.
\end{itemize}
}
}
Truncate the non-representative kernels according to Sec.~\ref{sec:truncation}\;
Find $\bss{\nu}$ by solving Eq.~\eqref{eqa:formulation2} using all $(\bs{x},t)\in \mathcal{M}_{\nu}$\;
\vspace{5mm}
\label{alg:CLASMK}
\end{algorithm}

\subsection{Remarks}
\subsubsection{Truncating Non-representative Kernel Functions}
\label{sec:truncation}
For a given kernel function $k_i$ from the set $\mathcal{K}$, the objective function in Eq.~\eqref{eqa:criterion3} can be empirically estimated as follows:
\begin{equation}
h = \frac{h_b}{h_w} =\frac{\frac{1}{C-1}\underset{c=1}{\overset{C}{\sum}} \underset{\forall\tilde{c}\neq c}{\sum}\frac{1}{N_c}\underset{\forall \bs{x}\in \mathcal{M}_{\nu_c}}{\sum}\|\bs{A}_{\tilde{c},i}^Tk_i\left( \bs{X}_{B_{\tilde{c},i}},{\bs{x}}\right)\|_2^2}
{\overset{C}{\underset{c=1}{\sum}}\frac{1}{N_c}\underset{\forall \bs{x}\in \mathcal{M}_{\nu_c}}{\sum}\|\bs{A}_{c,i}^Tk_i\left( \bs{X}_{B_{c,i}},{\bs{x}}\right)
\|_2^2}
\end{equation}
Generally speaking, the smaller $h$ is, the better class separating ability $k_i$ have. However, in some scenarios, although $h\approx 0$, the feature space associated with kernel function $k_i$ results in a poor class separation. This is due to the possibility of non-representative subspaces, i.e. the estimated basis cannot represent the data vectors $\bs{x}\in \mathcal{M}_{\nu_c}$, which implies that $h_w\approx 0$.

Due to the small value of $h\approx 0$, such phenomenon severely influences the outcome of the optimization result. It often occurs for kernel functions with high complexities, such as RBF kernels with small $\sigma$.
To resolve this issue, we insert a pre-truncating step before the optimization.
A kernel is defined as non-representative if
\begin{equation}
\label{eqa:truncation}
{\overset{C}{\underset{c=1}{\sum}}\frac{1}{N_c}\underset{\forall \bs{x}\in \mathcal{M}_{\nu_c}}{\sum}\|\bs{A}_{c,i}^Tk_i\left( \bs{X}_{B_{c,i}},{\bs{x}}\right)
\|_2^2}\leq \eta
\end{equation}
for given hyperparameter $\eta\in (0,1]$. Note that $\bs{A}_{c,i}$ and $\bs{X}_{B_{c,i}}$ are trained using subset $\mathcal{M}_{B_c}\subset \mathcal{D}_c$, whereas Eq.~\eqref{eqa:truncation} is evaluated using $\mathcal{M}_{\nu_c}\subset \mathcal{D}_c\setminus \mathcal{M}_{B_c}$.
Then the truncation is performed according to:
\begin{equation}
\mathcal{K}\leftarrow \mathcal{K}\setminus \mathcal{\tilde{K}}
\end{equation}
where $\mathcal{\tilde{K}}$ contains all kernel functions that are non-representative.

\subsubsection{Robustness and Empirical Thresholding}
\label{sec:remark2}
We model data as random vectors that are drawn from a unknown multivariate probability distribution. Given different training sets, the solutions $\bss{\nu}$ computed using Algorithm \ref{alg:CLASMK} result in a certain variation, which gives rise to the notion of robustness. For a given sample size, we attempt to improve robustness by empirical thresholding, i.e. given $\nu_{c,k}$ for all $c=\{1\dots C\}$ and $k=\{1\dots K\}$ and a threshold $t$, such that
\begin{equation}
\label{eqa:threshold_t}
\tilde{\nu}_{c,k}=\begin{cases}\nu_{c,k},  & \text{if} ~~ \nu_{c,k}\geq t\underset{\forall k}{\max}\nu_{c,k} \\ 0, & \text{otherwise}\end{cases}
\end{equation}
and then the new weights are re-scaled as follows:
\begin{equation}
\label{eqa:nu_threshold}
\nu_{c,k} \leftarrow  \frac{\tilde{\nu}_{c,k}}{\sum_{i=1}^K\tilde{\nu}_{c,i}}
\end{equation}

\subsubsection{Finding the Best Kernel Function}
\label{sec:best_kernel}
Algorithm \ref{alg:CLASMK} can also be used as an approximation to the CLASK-ML formulation presented in Criterion \ref{c:criterion2}. In that case, given the output of Algorithm \ref{alg:CLASMK}, denoted by $\bss{\nu}$, we choose the optimal kernel function $k^*_c$ for each class as follows. Let $\{k^{(1)},\cdots, k^{(K)}\}$ be the set of kernel functions, such that each $k^{(i)}$ is corresponding to $\bss{\nu}(:,i)$. Then $k^*_c = k^{(i_c^*)}$, where
\begin{equation}
\label{eqa:k_star}
i_c^* = \underset{i}{\arg\max}~ \bss{\nu}(c,i)
\end{equation}

\section{Learning Hierarchical CLASMK Feature Network}
\label{sec:hierarchy}

In this section, we propose a multi-layer kernel feature network constructed using the CLAss-specific Subspace Multiple Kernel - Metric Learning (CLASK-ML) on each layer. The structure of the network can be found in Fig.~\ref{fig:hierarchy}, where each layer $l$ corresponds to a feature space $\mathcal{F}_{(l)}^C$ constructed using CLASMK-ML (c.f.~Algorithm~\ref{alg:CLASMK}) for a given set of kernel functions $\mathcal{K}$.
For each layer, the idea is to select a subset of training data with small margins and learn a new feature space based on that. The final feature space is the concatenation of the resulting features from all the previous layers.

The learning process is explained as follows:
\begin{itemize}
\item Input: Training set $\mathcal{D}$, a set of kernel functions $\mathcal{K}$ ($\left|\mathcal{K}\right|=K$), base classifier $f(\bs{z}; \bs{W},\bs{b})=\bs{W}^T\bs{z}+\bs{b}$, parameterized by $\bs{W}\in\mathbb{R}^{n\times C}$, $\bs{z}\in\mathbb{R}^n$, $n\in\mathbb{N}^+$, $\bs{b}\in\mathbb{R}^C$;
\item Hyperparameters: Maximum number of layers $L_{\max}$, marginal threshold $T_{\kappa}\in(0,1)$, hyperparameters for Algorithm~\ref{alg:CLASMK}, $t,\eta\ll 1$;
\item Procedure:
\begin{itemize}
\item Initialization:
\begin{itemize}
\item $\mathcal{M}\triangleq\mathcal{D}$;
\item $d_{\bss{\nu}}=1$, $\delta_-=0$;
\item $q = 1$;
\item $\bss{\nu}^{(0)}(i,j)=1/K$, $\forall i\in\{1\cdots C\}, ~\forall j\in\{1\cdots K\}$;
\item $\bss{\psi}_i=$ empty array $\forall i\in\mathcal{I}_{\mathcal{D}}$;
\item $L=1$.
\end{itemize}
\item \text{while}$\left(L\leq L_{\max}~\text{and}~ d_{\bss{\nu}}>\epsilon \right)$
\begin{itemize}
\item Apply Algorithm~\ref{alg:CLASMK} to compute $\bss{\nu}^{(l)}$ on $\mathcal{M}$;
\item For each training data $i\in\mathcal{I}_{\mathcal{D}}$, construct feature vector:
\begin{equation}
\bss{\psi}^{(l)}_i\triangleq
\begin{bmatrix}
\sum_{i=1}^K\sqrt{\nu^{(l)}_{1,k}} \mathbf{U}_{1,k}^T\varphi_1(\mathbf{x}_i)\\
\vdots\\
\sum_{k=1}^K\sqrt{\nu^{(l)}_{C,k}} \mathbf{U}_{C,k}^T\varphi_C(\mathbf{x}_i)
\end{bmatrix}
\end{equation}
and
\begin{equation}
\bss{\psi}_i\leftarrow
\begin{bmatrix}
\bss{\psi}_i\\
\bss{\psi}_i^{(l)}
\end{bmatrix}
\end{equation}
\item $q = dim\left(\bss{\psi}_i\right)$;
\item Train the base classifier using all $i\in\mathcal{I}_{\mathcal{D}}$
\begin{equation}
\{\bs{W}^*,\bs{b}^*\} = \underset{\bs{W},\bs{b}}{\arg\min}~\mathcal{R}\left(f(\bss{\psi}_i;\bs{W},\bs{b})\right)
\end{equation}
where $\mathcal{R}$ is the predefined empirical risk.
\item Compute the slack variables \cite{Vapnik_the_nature_1995} for each data $i$:
\begin{equation}
\bss{\xi}_i(c) = \bs{W}(:,c)^T\bss{\psi}_i + \bs{b}(c)
\end{equation}
where $c\in\{1\cdots C\}$; $\bs{W}(:,c)$ and $\bs{b}(c)$ represent the $c^{th}$ column and the $c^{th}$ element of matrix $\bs{W}$ and $\bs{b}$, respectively.
\item Check the confidence level $\kappa_i$ of each data $i$
\begin{equation}
\label{eqa:confidence}
\kappa_i \triangleq \frac{\left(\underset{c}{\max}(\bss{\xi}_i(c)) - \underset{j\neq c}{\max}(\bss{\xi}_i(j))\right)}{2}
\end{equation}
\item Select the ``marginal'' subset:
\begin{equation}
\label{eqa:marginal}
\mathcal{I}_{\mathcal{M}}\leftarrow \begin{Bmatrix}i: \kappa_i\leq T_{\kappa}\end{Bmatrix}
\end{equation}
\item $L\leftarrow L +1 $;
\item $\delta_{+} = \left|\left|\bs{\nu}^{(L)}-\bs{\nu}^{(L-1)}\right|\right|_F$;
\item $d_{\bss{\nu}}= \left|\delta_{+}-\delta_{-}\right|$;
\item $\delta_-\leftarrow \delta_+$;
\end{itemize}
\end{itemize}
\item Output: $\{\bss{\nu}^{(1)}\cdots \bss{\nu}^{(L)}\}$; $\bss{\psi}_i$, $\forall i\in\mathcal{I}_{\mathcal{D}}$.
\end{itemize}

\begin{Remark}[Confidence $\kappa$]
Given Eq.~\eqref{eqa:multiclass_xi} and Eq.~\eqref{eqa:multiclass_t},
ideally speaking, for a data point $\bs{x}_i\in$ class $c$, we expect the following:
\begin{eqnarray*}
\underset{j}{\arg\max}(\bss{\xi}_i(j))&=&c\\
\bss{\xi}_i(c)&\geq&1\\
\bss{\xi}_i(\tilde{c})&\leq&-1,~\forall \tilde{c}\neq c
\end{eqnarray*}
Hence, the quantity $\kappa$ (c.f.~Eq.~\eqref{eqa:confidence}) is considered as a confidence measure and we call data $\bs{x}_i$ ``marginal'' if $-1<\kappa_i<1$.
By training on the marginal dataset, we focus on the ``uncertain'' area and the CLASMK can be designed accordingly on the corresponding layer.
\end{Remark}

\begin{figure}
\begin{center}
\includegraphics[width=120mm]{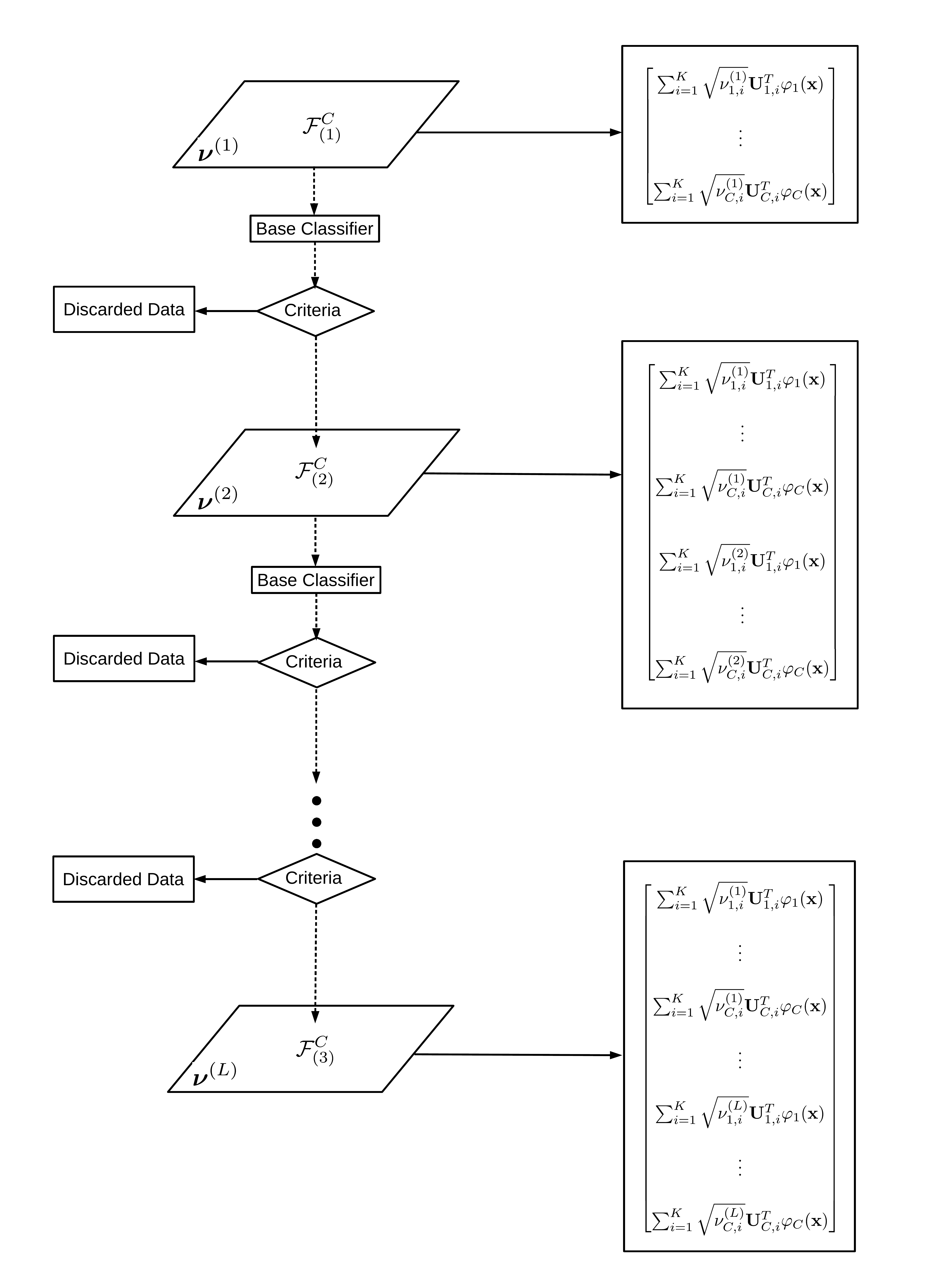}
\caption{The structure of the hierarchical feature network. At each layer $l$, a new CLASMK is learned using the learning algorithm CLASKMK-ML presented in Algorithm \ref{alg:CLASMK}. The feature vectors at $l$ is the concatenation of the outputs from all previous layers. After the feature space is constructed, a predefined base classifier is applied to select the ``marginal'' subset of the training data, which is then passed to the next layer for learning $\bss{\nu}^{(l+1)}$.}
\label{fig:hierarchy}
\end{center}
\end{figure}

\section{Experimental Results}
\label{sec:results}

The experimental results consist of the following:
\begin{itemize}
\item[\ref{sec:one_layer}] {\bf One layer CLASK-ML and CLASMK-ML compared to single kernel learning:}
The purpose is to verify the gain of using learning class-specific kernel functions using CLASK-ML and CLASMK-ML algorithms compared to learning with a single kernel.
\item[\ref{sec:two_layer}] {\bf Multi-layer CLASMK-ML compared to other MK techniques:}
In this section, we compare the multi-layer CLASMK-ML to the state-of-the-art techniques on standard datasets. To simplify the comparison, we follow \cite{zhuang_two_layer_2011} and use 2-layer network for comparison.
\item[\ref{sec:multi_layer}] {\bf Classification accuracy improvement with respect to the number of layers:}
We investigate how the following items effect the classification performance (accuracy, training time, resulting dimensionality) using the hierarchical CLASMK-ML algorithm described in Sec.~\ref{sec:hierarchy}:
\begin{itemize}
\item number of layers;
\item number of kernel functions;
\item training size.
\end{itemize}
\end{itemize}
The hyperparameters are chosen to be $L_{\max}=10$, $T_{\kappa}=1$, $\eta=0.1$ and $t = 0.1$.

\subsection{One Layer CLASK-ML and CLASMK-ML Compared to Single Kernel Learning}
\label{sec:one_layer}
In this section, we use the well known radial basis function (RBF) and the polynomial kernels with different values of the hyperparameters.
To verify the efficiency of Algorithm \ref{alg:CLASMK}, we compare the classification performance with respect to the following schemes:
\begin{itemize}
\item[1)] Standard kernel approximation with eight different kernel functions, including:
\begin{itemize}
\item RBF kernel $k_{\text{RBF}}(\bs{x},\bs{\tilde{x}}) = \exp(\frac{\|\bs{x}-\tilde{\bs{x}}\|_2^2}{\sigma^2})$ with hyperparameters $\sigma\in \begin{Bmatrix}10, 1, 0.1, 0.01 \end{Bmatrix}$;
\item polynomial kernel $k_{\text{POLY}}(\bs{x},\bs{\tilde{x}}) = (1+\bs{x}^{\text{T}}\tilde{\bs{x}})^d$ with hyperparameters $d\in \begin{Bmatrix}8, 12, 24, 48 \end{Bmatrix}$;
\end{itemize}
\item[2)] CLAss-specific Subspace Kernel Metric Learning (CLASK-ML) using Algorithm \ref{alg:CLASMK} with the ``best'' kernel function for each class selected from set of all given kernel functions $\mathcal{K}$ according to Eq.~\eqref{eqa:k_star};
\item[3)] CLAss-Specific Multiple Kernel Metric Learning (CLASMK-ML) using Algorithm \ref{alg:CLASMK} with kernel functions from set $\mathcal{K}$ according to Eq.~\eqref{eqa:clasmk}.
\end{itemize}
\paragraph{Configuration and hyperparameters:}
For the choice of implementations and hyperparameters in this experiments, we list the following:
\begin{itemize}
\item Optimization technique: To approximate the optimal solution in Eq.~\eqref{eqa:formulation2}, we implement the algorithm using the MATLAB (2015b) built-in function ``fmincon'' with the optimizer (``interior-point technique'').
\item Hyperparameters $\eta$ (Eq.~\eqref{eqa:truncation}) and $t$ (Eq.~\eqref{eqa:threshold_t}) are heuristically chosen to be $0.01$ and $10^{-4}$, respectively, for truncating the obtained optimal $\bss{\nu}$ given training data.
\end{itemize}

\paragraph{Experiments and results:}
The classifier used to evaluate the performance is the classical multiclass LS-SVM \cite{Suykens_least_1999} due to its simplicity.
The classification results on the testing data are listed in Table~\ref{tab:result} for various datasets \cite{uci}. These results are obtained by 10-fold validation, where at each iteration, 10\% data are used for testing and the rest for training. The presented results include the mean value of the classification error on the testing set $\pm$ the standard deviation.
The result is organized as follows. For each dataset, we first test each single kernel function from the set $\mathcal{K}$ of eight kernel functions. The best results are marked as bold letters. The algorithm CLASK-ML is then applied to identify the best kernel function amongst members of $\mathcal{K}$ according to Sec. \ref{sec:best_kernel}. Finally, we use the CLASMK-ML algorithm to determine the best combination of all kernel functions.
We can see that CLASMK-ML provides the best classification results.
In Table~\ref{tab:time}, the size and training time for each dataset are listed. Kernel approximation using CLASK model is faster to train due to the block-based structure (c.f.~Eq.~\eqref{eqa:model_decomp2}), which results in a reasonable overall training time.
\begin{table*}[ht!]
\begin{center}
    \begin{tabular}{ | c | c | c | c | c | }
    \hline
    Dataset & \multicolumn{4}{|c|}{LS-SVM Classification Error (\%) Using Different Kernels} \\ \hline
     \multirow{5}{*}{Banana} & poly (8) &  poly (12) &   poly (24) & { poly (48)}\\
& $29.53\pm 2.92$& $21.00\pm 2.59$  & $ 14.09\pm 1.97$ & ${ 11.04\pm 1.05}$      \\\cline{2-5}
& RBF (1) &  RBF (0.5) &  RBF (0.1) &  {\bf RBF (0.05)} \\
& $35.23\pm 4.52$& $17.35\pm3.14 $  & $ 10.04\pm 1.99 $ & ${\bf 9.98\pm 1.21} $       \\\cline{2-5}
\multirow{5}{*}{} & \multicolumn{2}{|c|}{ CLASK-ML $ 10.02 \pm 1.08$} & \multicolumn{2}{|c|}{ {\bf CLASMK-ML} ${\bf 9.36 \pm 1.01}$} \\ \hline\hline
\multirow{5}{*}{Pendigits}& poly (8) &  poly (12) & poly (24) &  {\bf  poly (48)} \\
& $2.49\pm 1.30$ & $1.57\pm 0.95$ & $0.98\pm 0.68$ & $ {\bf 0.67\pm 0.33 }$  \\\cline{2-5}
& RBF (1) &  RBF (0.5) &  RBF (0.1) &  RBF (0.05)\\
& $3.31 \pm 0.90 $& $ 3.39\pm 0.92$  & $ 3.39\pm 0.92 $ & $35.21\pm 4.90 $     \\\cline{2-5}
\multirow{5}{*}{} & \multicolumn{2}{|c|}{ {\bf CLASK-ML} ${\bf  0.45 \pm 0.35} $} & \multicolumn{2}{|c|}{ {\bf CLASMK-ML} $ {\bf 0.42 \pm 0.31} $ }\\ \hline\hline
     \multirow{5}{*}{Optdigits}& poly (8) &  { poly (12)} & {\bf  poly (24)} &  poly (48)\\
&  $4.25\pm 2.38$&  ${ 3.76 \pm 1.86}$ & ${\bf 3.01\pm 1.57}$ & $3.28\pm 1.89$  \\\cline{2-5}
& RBF (1) &  RBF (0.5) &  RBF (0.1) &  RBF (0.05) \\
& $3.60\pm 1.60 $& $3.23 \pm 1.88 $  & $89.41 \pm 1.32$ & $90.00\pm 0.81$ \\\cline{2-5}
\multirow{5}{*}{} & \multicolumn{2}{|c|}{ {\bf CLASK-ML} $ {\bf 3.01 \pm 1.48}$} &\multicolumn{2}{|c|}{  {\bf CLASMK-ML} ${\bf  2.96 \pm 1.45}$} \\ \hline\hline
     \multirow{5}{*}{Phoneme}& poly (8) &  poly (12) &  poly (24) &  poly (48)\\
&  $16.13\pm 2.06$&  $15.38\pm 1.70$ & $13.43\pm 1.63$ & $12.82\pm 0.98$  \\\cline{2-5}
&  RBF (1) &  RBF (0.5) &  { RBF (0.1)} & {\bf RBF (0.05)} \\
& $18.70\pm 1.87 $  & $15.11 \pm 1.34$ & ${ 10.93\pm 1.09}$ & ${\bf 10.18\pm 0.96}$ \\\cline{2-5}
\multirow{5}{*}{} & \multicolumn{2}{|c|}{ {\bf CLASK-ML} $ {\bf 9.93\pm 0.91 }$} & \multicolumn{2}{|c|}{ {\bf CLASMK-ML} ${\bf  9.93\pm 0.91}$} \\ \hline
    \end{tabular}
\caption{Experimental results on various datasets using 10-fold validation. The results are presented as the averaged classification error $\pm$ its standard deviation using different kernel functions. The base classifier is the LS-SVM. }
\label{tab:result}
\end{center}
\end{table*}

We observe that the training time for kernel approximation (c.f.~Sec.~\ref{sec:basis}) increases when using RBF kernels with a small $\sigma$. The reason is twofold. First, the computation for RBF kernels is more complex compared to polynomial kernels. Secondly, when $\sigma$ is small, it means that the correlation between data points in the kernel induced feature space is small.
Hence, for the same tolerance $\|\bs{K}-\bs{L}\|$ (c.f.~Eq.~\eqref{eqa:kernel_approximation}), more data points are needed for constructing $\bs{L}$.

\begin{table}[ht!]
\begin{center}
    \begin{tabular}{ | c | c | c | c | c | }
    \hline
    Dataset (Size) & \multicolumn{4}{|c|}{Training Time Using Different Kernel Functions} \\ \hline
     \multirow{2}{*}{Banana} & poly (8) &  poly (12) &   poly (24) & { poly (48)}\\
& $0.60$ & $0.61$  & $ 0.73$ & $1.24$      \\\cline{2-5}
\multirow{3}{*}{($2\times 5300$)}& RBF (1) &  RBF (0.5) &  RBF (0.1) &  {RBF (0.05)} \\
 & $0.040$& $ 0.041 $  & $ 0.09 $ & $0.41 $       \\\cline{2-5}
\multirow{5}{*}{} &\multicolumn{4}{|c|}{CLASMK-ML  $75.95$ $(2.70+73.25)$} \\ \hline\hline
\multirow{2}{*}{Pendigits}& poly (8) &  poly (12) & poly (24) &  {  poly (48)} \\
& $3.65$ & $6.723$ & $19.72$ & $ { 63.74}$  \\\cline{2-5}
 \multirow{3}{*}{($16\times 3498$)}& RBF (1) &  RBF (0.5) &  RBF (0.1) &  RBF (0.05)\\
& $0.25$& $9.82 $  & $ 275.32 $ & $292.23 $     \\\cline{2-5}
\multirow{5}{*}{} &\multicolumn{4}{|c|}{CLASMK-ML  235.53 $(226.8+8.73)$}\\ \hline\hline
     \multirow{2}{*}{Optdigits}& poly (8) &  { poly (12)} & poly (24)&  poly (48)\\
&  $9.63$&  $22.04$ & $44.21$ & $51.24$  \\\cline{2-5}
\multirow{3}{*}{($64\times 1797$)} & RBF (1) &  RBF (0.5) &  RBF (0.1) &  RBF (0.05) \\
& $20.02 $& $38.94 $  & $29.01 $ & $35.82$ \\\cline{2-5}
\multirow{5}{*}{} &\multicolumn{4}{|c|}{CLASMK-ML  $33.74$ $(27.51+6.23)$} \\ \hline\hline
     \multirow{2}{*}{Phoneme}& poly (8) &  poly (12) &  poly (24) &  poly (48)\\
&  $1.82$&  $2.91$ & $6.08$ & $11.62$  \\\cline{2-5}
\multirow{3}{*}{($5\times 5404$)} &  RBF (1) &  RBF (0.5) &  { RBF (0.1)} & {RBF (0.05)} \\
& $0.06$  & $0.10$ & $76.24$ & $595.32$ \\\cline{2-5}
\multirow{5}{*}{} & \multicolumn{4}{|c|}{CLASMK-ML $594.22$ $(73.55+520.67)$} \\ \hline
    \end{tabular}
\caption{This table summarizes the datasets and their training time. With a single predefined kernel function, the training time is the kernel approximation. For CLASMK-ML, it is the training time of (kernel approximation + metric learning). Note that the kernel approximation for CLASMK-ML is the summation of the computational time for all eight kernel functions.}
\label{tab:time}
\end{center}
\end{table}

\subsection{Multi-Layer CLASMK-ML Compared to Other Multi-Layer MK Techniques}
\label{sec:two_layer}
We have compared the hierarchical CLASMK-ML to other multi-layer multiple-kernel learning techniques on 12 standard datasets \cite{varma_more_generality_2009, Kloft_efficient_and_2008,  Gehler_inifite_kernel_2008, Cho_kernel_methods_2009, Xu_an_extended_2008}. The description of the datasets can be found in Table.~\ref{tab:datasets_2class}. We follow the exact same setup as in \cite{zhuang_two_layer_2011, Xu_an_extended_2008}.
For each data set, we use 13 kernel function including (c.f.~Sec.~\ref{sec:one_layer}) $k_{\text{RBF}}$ with $\sigma\in\{2^{-3}, 2^{-2},\cdots,2^{6}\}$ and $k_{\text{POLY}}$ with $d\in \{1,2,3\}$.
We use 50\% data randomly selected from all instances as training data, and the rest for testing. The preprocessing on the training
data is to remove the mean value and normalize to unit
variance. The same preprocessing is done on the test instances with the same mean and variance computed from the training data.
We repeat 20 times and report the sample mean and the standard deviation of the result in Table~\ref{tab:accuracy}.
The average of the results are summarized on the last row in the table.

We observe that out of the twelve datasets, with a two layer structure, CLASMK-ML outperforms 2L-MKL on six datasets and has equivalent results on three datasets. On average, 2L-CLASMK-ML achieves a similar result compared to 2L-MKL.

\begin{table}[ht!]
\begin{center}
    \begin{tabular}{ | c | c | c |  c | c | c| c|}
    \hline
    Dataset   & Diabetes & Breast &    Australian  & { Titanic} & Ionosphere &Banana \\\hline
 $(p, N)$ &(8, 768)& (30, 569)& (14, 690)& (3, 2201)& (33, 351)& (2, 5300) \\ \hline \hline
Dataset & Ringnorm & Heart & Sonar & {Thyroid} & Liver & German \\\hline
 $(p, N)$& (20, 7400)& (13, 270)&  (60, 208) & (20, 7200)&  (6, 345)& (20, 1000)\\ \hline
    \end{tabular}
\caption{Description of binary classification datasets used in Sec.~\ref{sec:two_layer}. The pair $(p, N)$ indicates the (dimension, training size).}
\label{tab:datasets_2class}
\end{center}
\end{table}

\begin{table}[ht!]
\begin{center}
    \begin{tabular}{ | c | c | c |  c | c |}
    \hline
    Dataset & CLASMK-ML  &2L-CLASMKL-ML & MKL & 2L-MKL \\ \hline
Diabetes &$70.8\pm3.4$ & $75.0\pm2.2$&$75.8\pm 2.5$ & ${\bf 76.6\pm 1.9}$ \\\hline
     {Breast} & $97.0\pm 0.7 $& ${\bf 97.1\pm 1.0}$& $96.5\pm 0.8$& $96.9\pm 0.7$    \\ \hline
    Australian   & $87.2\pm 5.7$  & ${\bf 87.8\pm5.5}$   & $85\pm 1.5$&  $85.7\pm 1.6$\\\hline
{ Titanic} & $78.5\pm 0.8$ & ${\bf 78.9\pm 0.7}$ & $77.1\pm 2.9$& $77.8\pm 2.6$\\\hline
{Ionosphere} &  $93.9\pm 1.58$ &${\bf 94.4\pm 2.0}$& $91.7 \pm 1.9$& ${\bf 94.4\pm 0.9}$ \\\hline
{Banana} & $90.2\pm 0.4$ & ${\bf  90.3\pm0.5}$ & $90.2\pm 2$& $90.2\pm 1.6$\\\hline
Ringnorm & $97.9\pm 0.4$& ${\bf 98.5\pm 0.4}$& $98.1\pm 0.8$  & ${\bf 98.5\pm 0.8}$\\\hline
Heart & $79.7\pm3.1$ & $81.1\pm 2.7$ & $83.0\pm 2.9$& ${\bf 83.6\pm 2.4}$ \\\hline
{Sonar} & $79.9\pm 4.3$ & ${\bf 84.7\pm 3.2}$& $78.3\pm 3.5$&$84.6\pm 2.4$ \\\hline
{Thyroid} & $94.4\pm 0.3$& $94.5\pm 0.3$& $92.9\pm 2.9$& ${\bf 94.8\pm 2.2}$ \\\hline
Liver & $63.8\pm 4.4$ & ${\bf 64.5\pm 3.1}$ & $62.3\pm 4.5$ & $62.7\pm3.1$\\\hline
German &$72.1\pm 2.4$ & ${\bf 74.2\pm 1.8}$ & $71.4\pm 2.8$& ${\bf 74.2\pm 2.0}$\\\hline
Summary &$83.8\pm 2.2$& ${\bf 85.1\pm2.0}$& $83.5 \pm2.4$ & ${\bf 85.0 \pm1.9}$\\\hline
    \end{tabular}
\caption{Experiments conducted on benchmark UCI datasets and comparison to other Multiple Kernel and hierarchical Multiple Kernel techniques.}
\label{tab:accuracy}
\end{center}
\end{table}

\subsection{Multi-Layer CLASMK-ML Performance with Respect to the Number of Layers}
\label{sec:multi_layer}
The datasets used in this section are: optdigits, pendigits, mnist, banana, phoneme,
adult. The dimensionality, data size and number of classes are summarized in Table.~\ref{tab:datasets2}. The setup is the same as in Sec.~\ref{sec:two_layer}. By default, we use 17 kernel functions: polynomial kernel with degree $d\in\{2^0, 2^1, \cdots, 2^6\}$ and RBF kernel with $\sigma\in \{2^{-3}, 2^{-2},\cdots,2^{6}\}$.

\begin{table}[ht!]
\begin{center}
    \begin{tabular}{ | c | c | c |  c |}
    \hline
    Dataset   & Optdigits & Pendigits &    MNIST  \\\hline
$(C,p,N)$& (10, 64, 1797) &  (10, 16, 3498)& (10, 784, 60000+10000) \\ \hline \hline
Dataset &Banana & Adult & Phoneme \\\hline
$(C,p,N)$& (2, 2, 5300)& (2, 14, 32561+16281)& (2, 5, 5404)\\ \hline
    \end{tabular}
\caption{Description of datasets used in Sec.~\ref{sec:multi_layer}. The triplet $(C,p,N)$ indicates the (number of classes, dimension, training size). When a default testing set is available, we write (number of classes, dimension, training size+testing size).}
\label{tab:datasets2}
\end{center}
\end{table}

\subsubsection{Classification Accuracy vs Number of Kernel Functions}

In this section, we investigate the empirical effect of the increasing number of kernel functions on the classification accuracy, where we keep the range of the kernel parameters unchanged. As shown in Fig.~\ref{fig:banana_acc_v_layer}, the best result is achieved when we use 98 kernel functions. Generally speaking, a decreasing number of kernels results in a degraded performance. Similar results can be observed in Fig.~\ref{fig:pendigits_acc_kernel}, Fig.~\ref{fig:phoneme_acc_kernels} and Fig.~\ref{fig:optdigits_acc_kernel}.
\begin{figure}[p!]
\centering
\includegraphics[width=90mm]{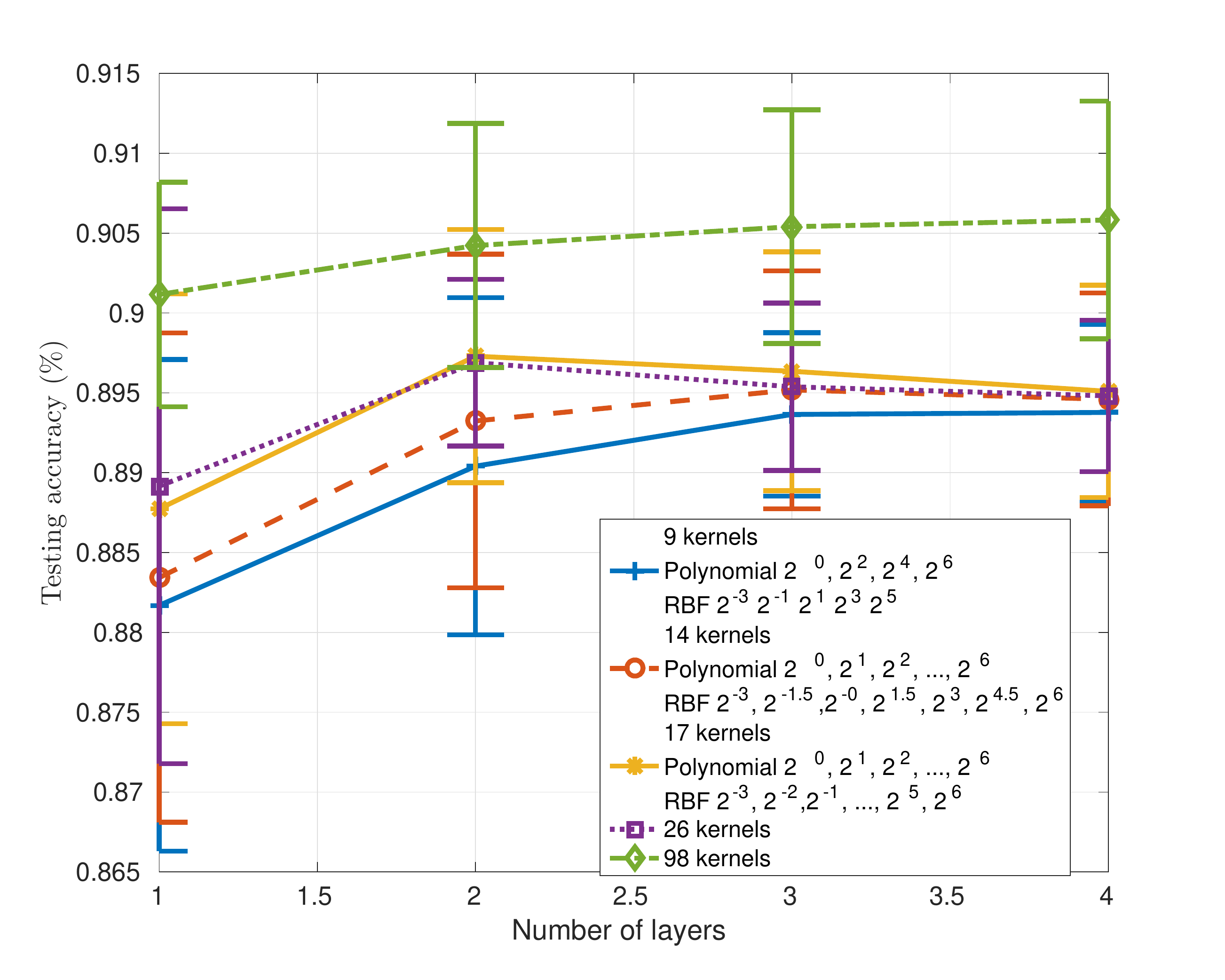}
\caption{Classification accuracy on the dataset banana using different number of kernel functions.}
\label{fig:banana_acc_v_layer}
\end{figure}

\begin{figure}[ht!]
\centering
\includegraphics[width=90mm]{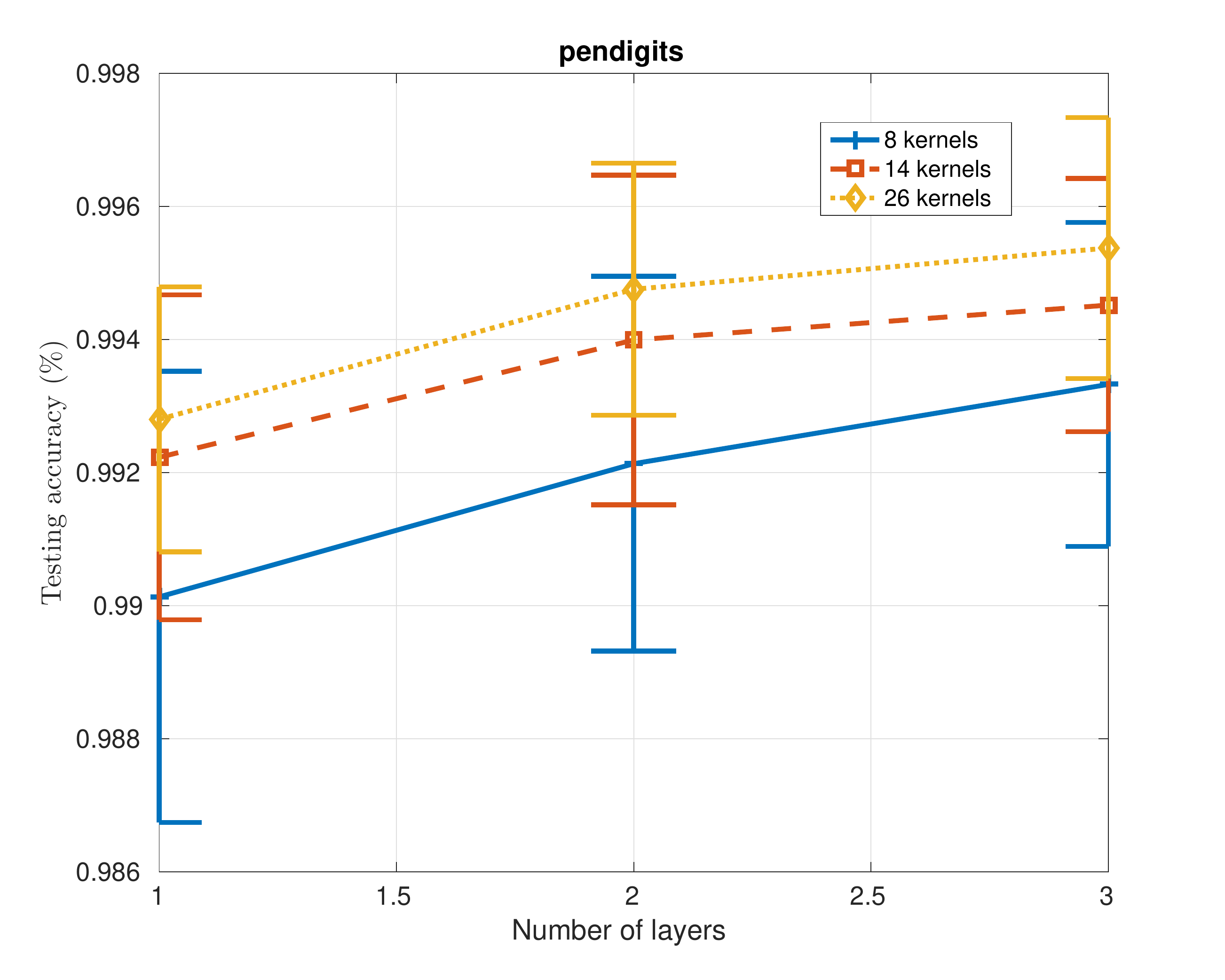}
\caption{Classification accuracy on the dataset pendigits using different number of kernel functions.}
\label{fig:pendigits_acc_kernel}
\end{figure}

\begin{figure}[ht!]
\centering
\includegraphics[width=90mm]{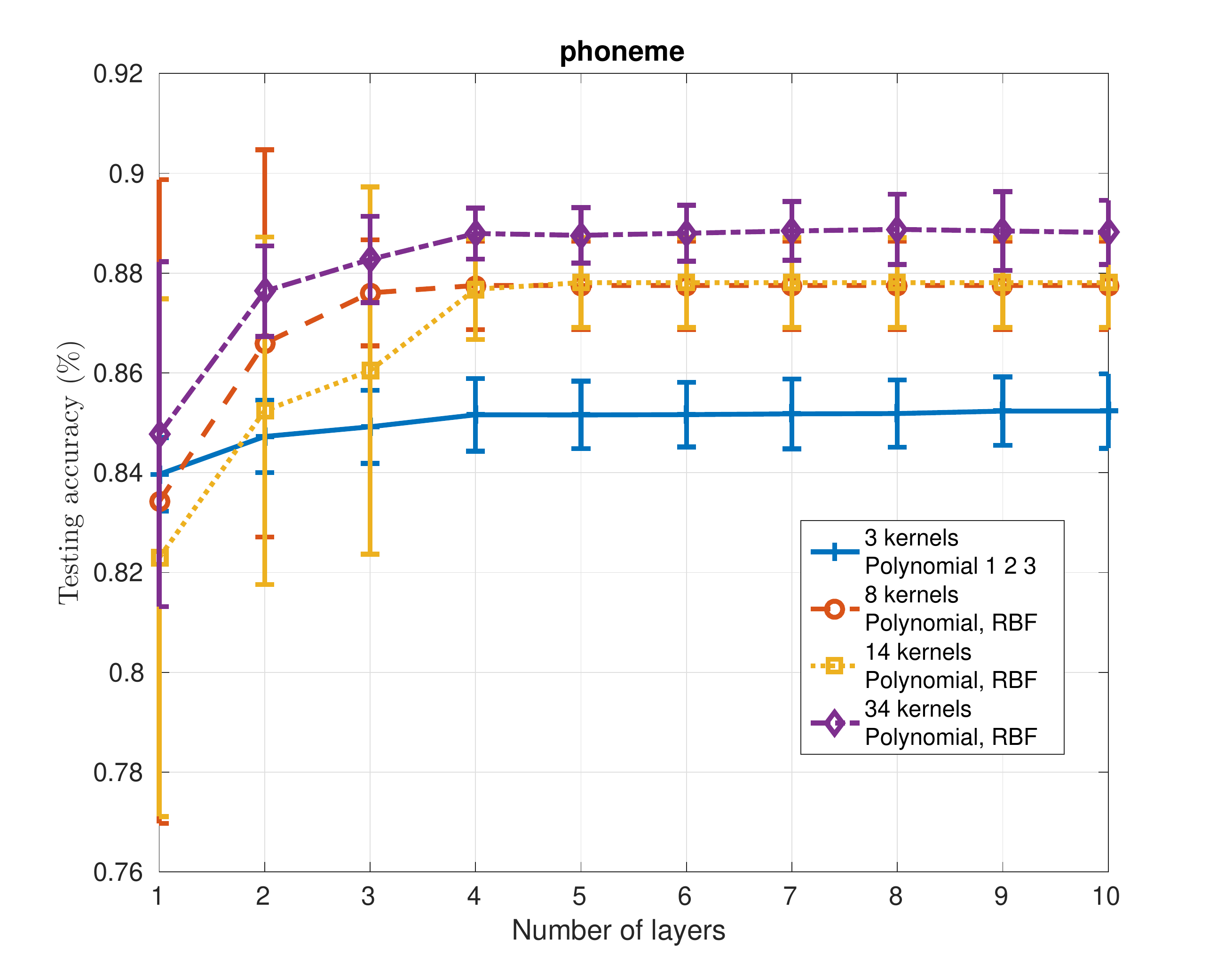}
\caption{Classification accuracy on the dataset phoneme using different number of kernel functions.}
\label{fig:phoneme_acc_kernels}
\end{figure}

\begin{figure}[ht!]
\centering
\includegraphics[width=90mm]{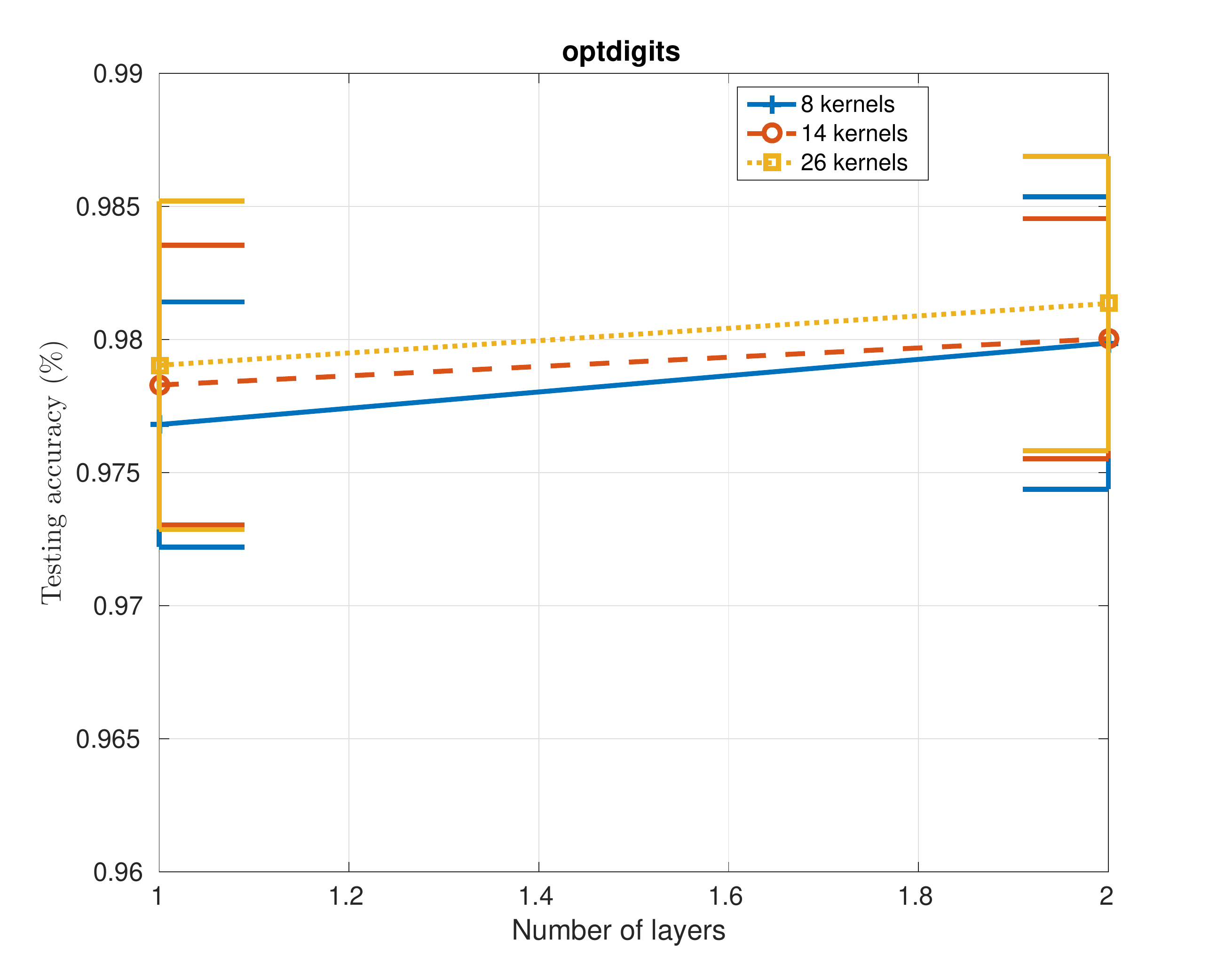}
\caption{Classification accuracy on the dataset optdigits using different number of kernel functions.}
\label{fig:optdigits_acc_kernel}
\end{figure}

\subsubsection{Classification Accuracy vs Number of Layers with Different Training Sizes}

To study how the classification accuracy changes with respect to the number of layers for different training sizes, we conduct experiments using randomly selected subsets for training. Results are shown in Fig.~\ref{fig:mnist_acc_size} to Fig.~\ref{fig:pendigits_acc_size}. We can see that the accuracy increases with increasing number of training sizes as expected.

\begin{figure}[ht!]
\centering
\includegraphics[width=90mm]{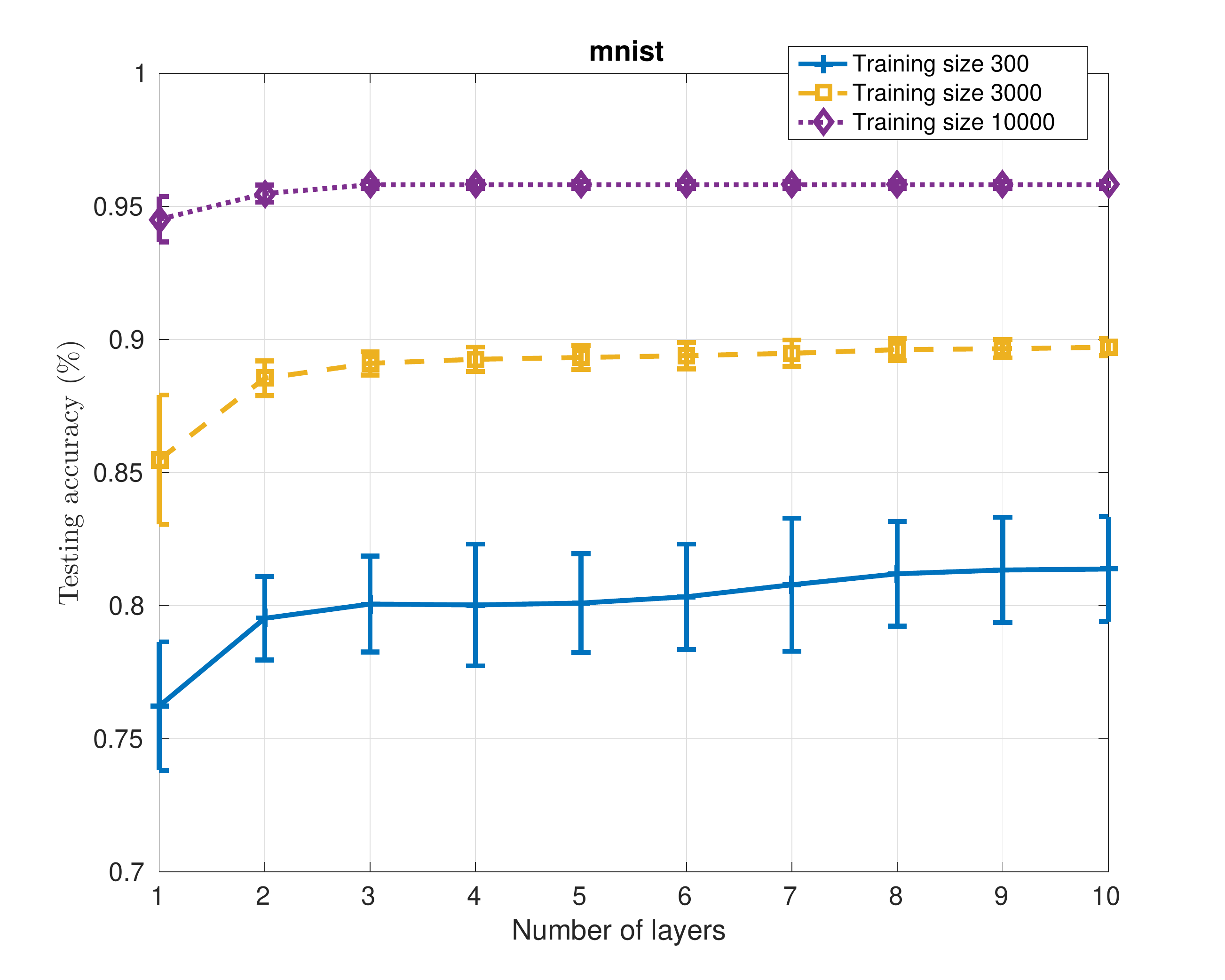}
\caption{Classification accuracy on the dataset MNIST using different training sizes.}
\label{fig:mnist_acc_size}
\end{figure}
\begin{figure}[ht!]
\centering
\includegraphics[width=90mm]{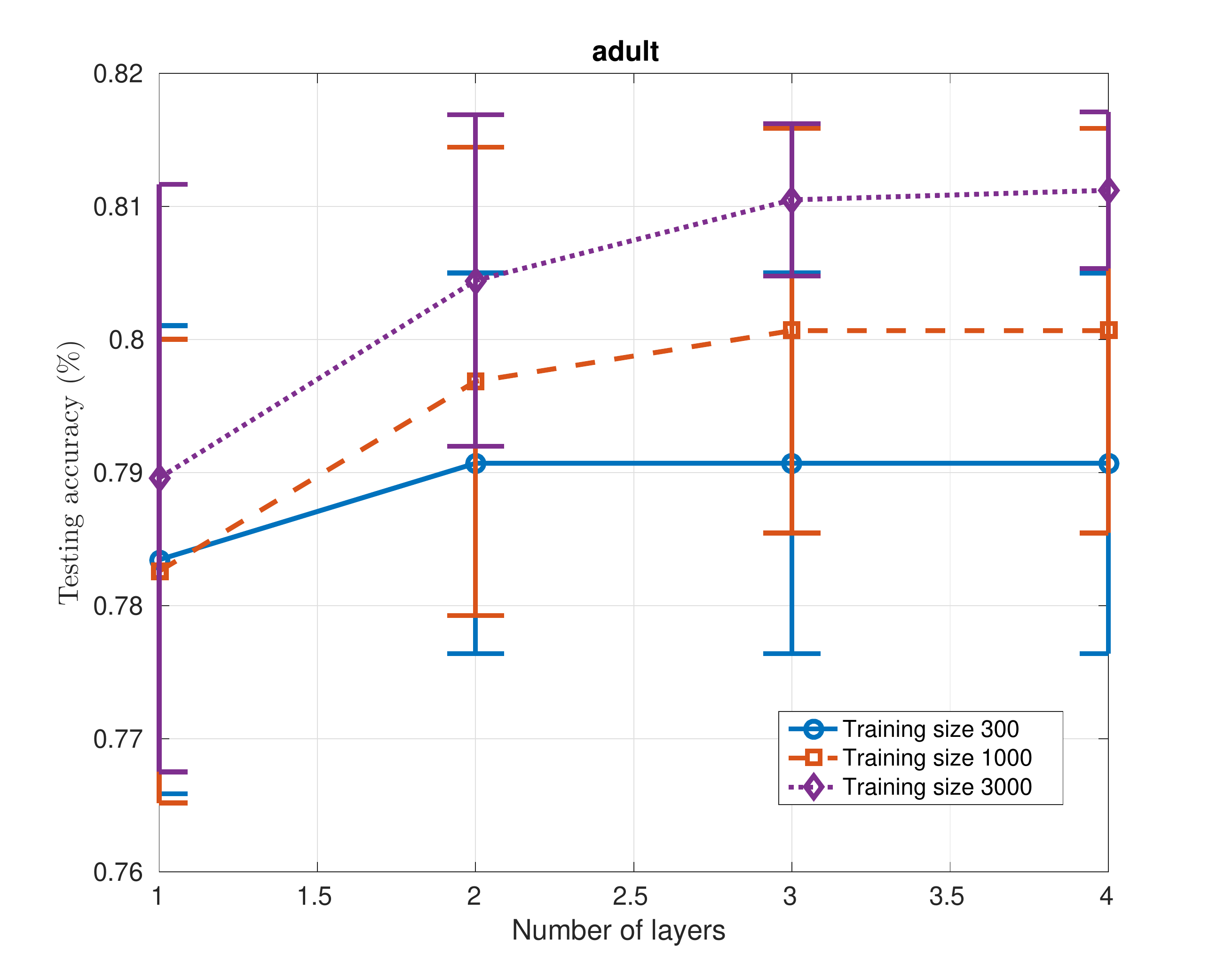}
\caption{Classification accuracy on the dataset adult using different training sizes.}
\label{fig:adult_acc_size}
\end{figure}
\begin{figure}[ht!]
\centering
\includegraphics[width=90mm]{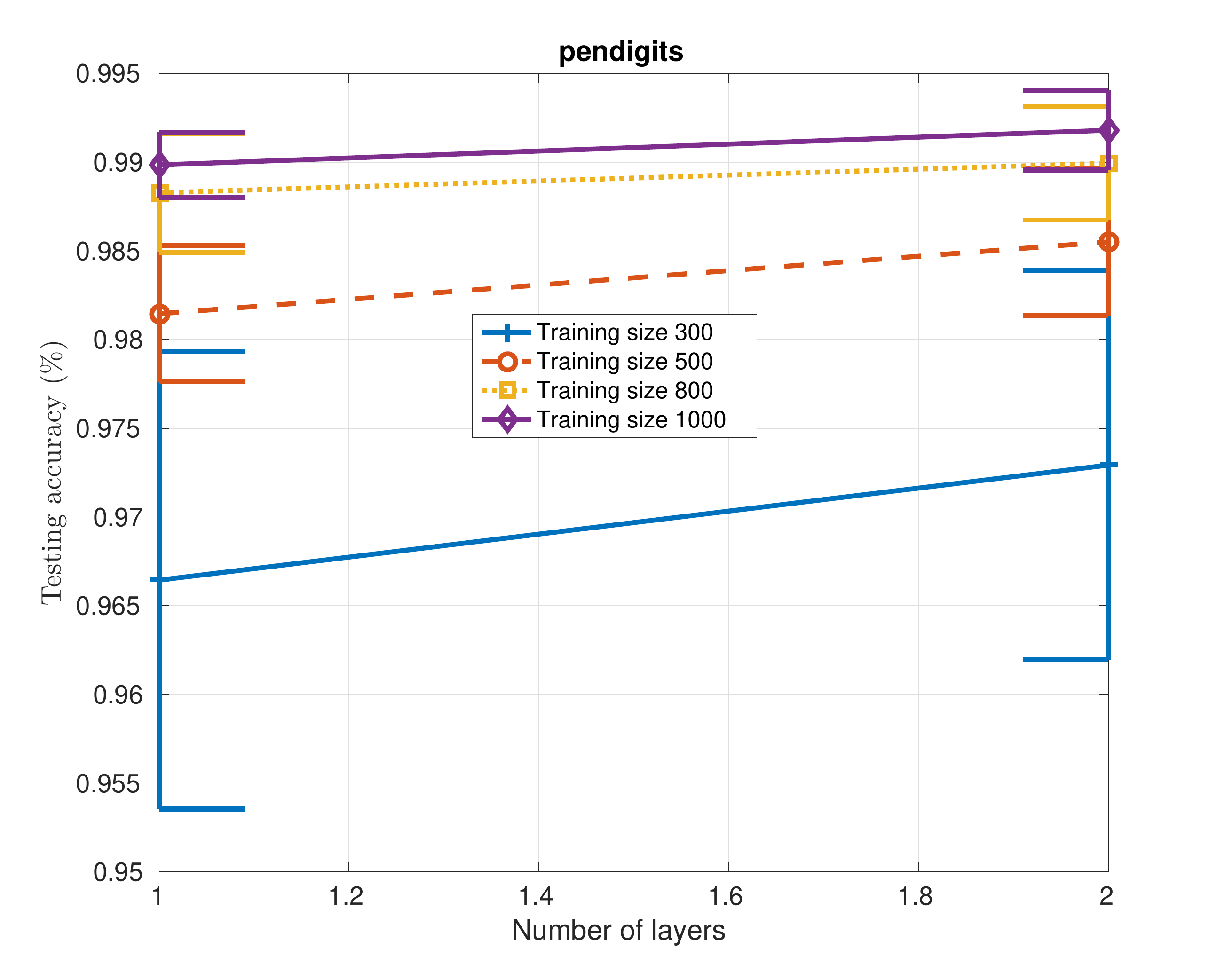}
\caption{Classification accuracy on the dataset pendigits using different training sizes.}
\label{fig:pendigits_acc_size}
\end{figure}

\subsubsection{Training Time}

The training time is evaluated using 8 core Intel i7 CPU with 16G of RAM running MATLAB (2015b) on Linux 4.4.0. The time reported is the optimization step for finding the optimal $\bss{\nu}$ on each layer (note that it is not the accumulated time of all previous layers). The results can be found in Fig.~\ref{fig:banana_time_v_layer} to Fig.~\ref{fig:pendigits_time_size}. More specifically, Fig.~\ref{fig:banana_time_v_layer} to Fig.~\ref{fig:optdigits_time_kernel} have shown the training time with different number of kernels and Fig.~\ref{fig:adult_time_size} to Fig.~\ref{fig:pendigits_time_size} have illustrated the training time for various training sizes.
From the numerical results, we observe that the training time for the optimization is in linear with respect the number of kernel functions.
\begin{figure}[ht!]
\centering
\includegraphics[width=90mm]{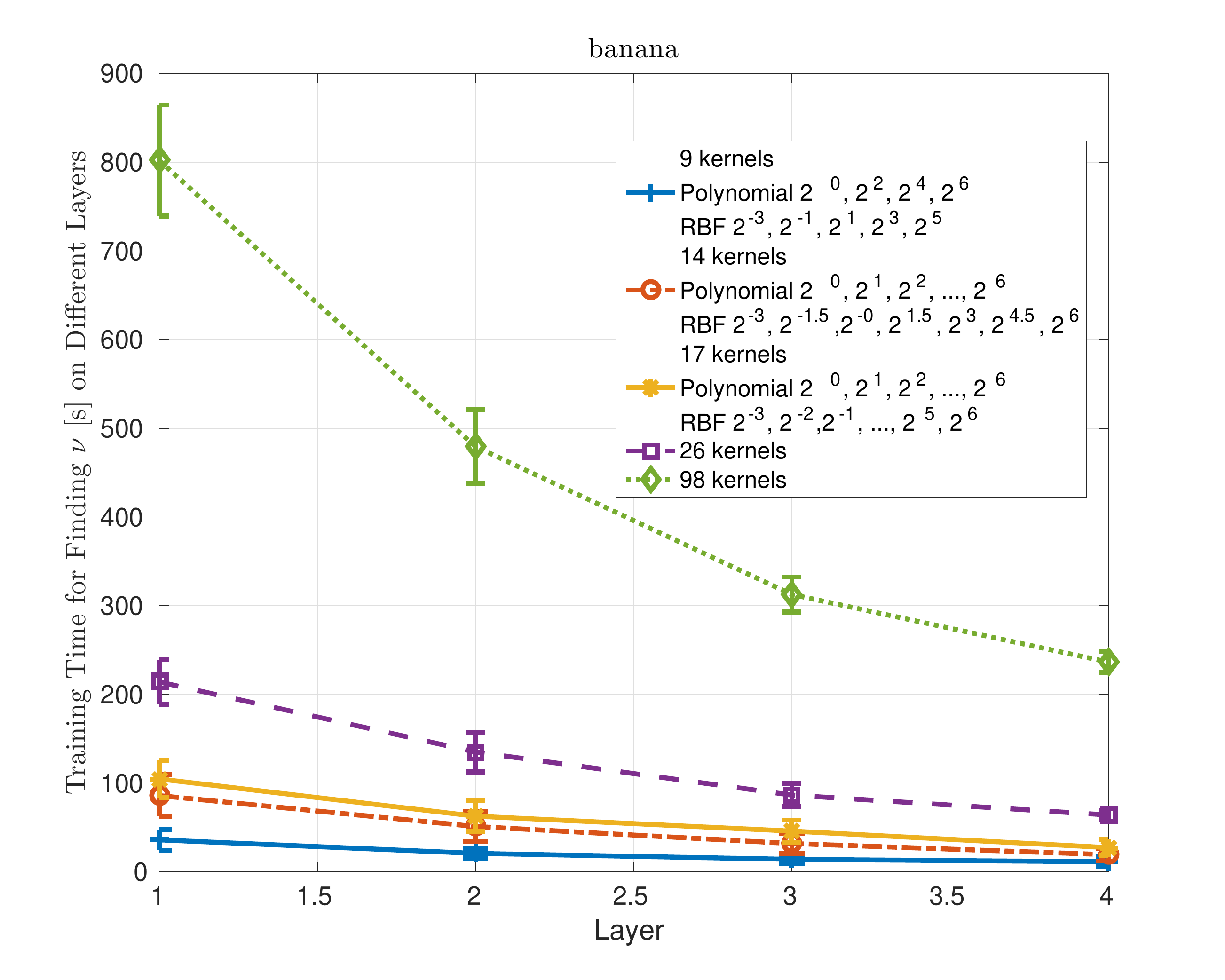}
\caption{Training time on the dataset banana with different numbers of kernel functions.}
\label{fig:banana_time_v_layer}
\end{figure}

\begin{figure}[ht!]
\centering
\includegraphics[width=90mm]{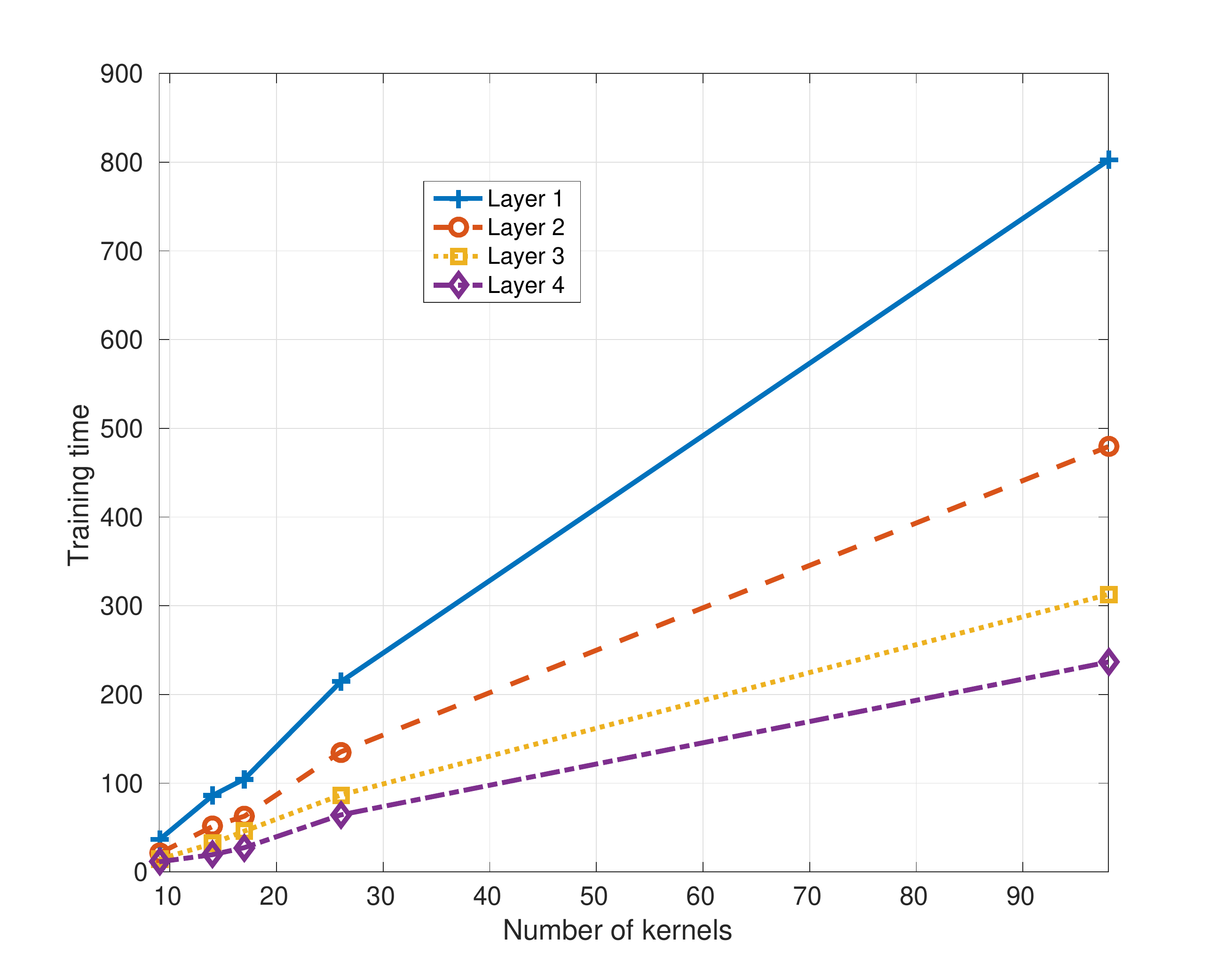}
\caption{Training time on the dataset banana with different numbers of kernel functions for different layers.}
\label{fig:banana_all_time}
\end{figure}
\begin{figure}[ht!]
\centering
\includegraphics[width=90mm]{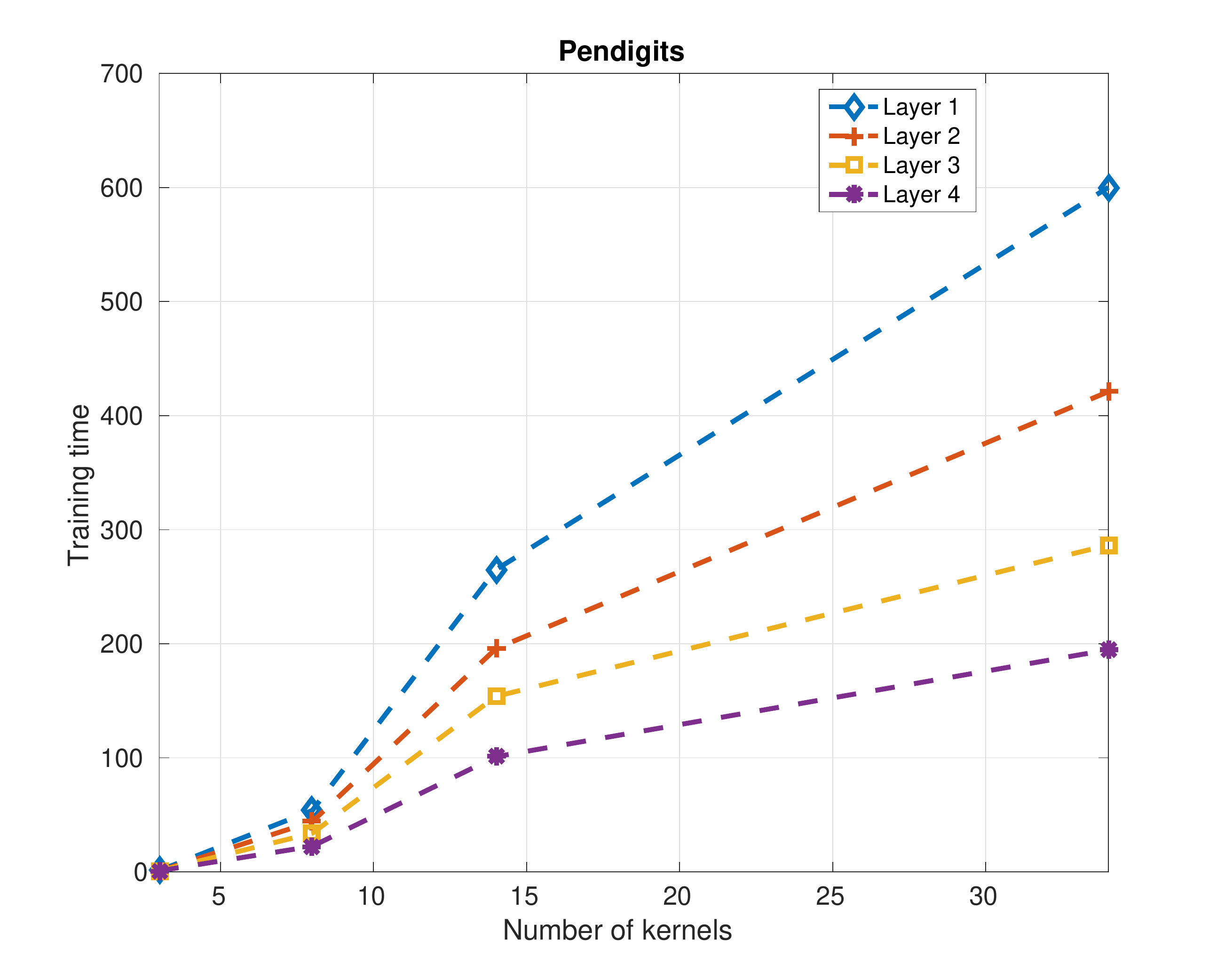}
\caption{Training time on the dataset phoneme with different numbers of kernel functions for different layers.}
\label{fig:phoneme_all_time}
\end{figure}
\begin{figure}[ht!]
\centering
\includegraphics[width=90mm]{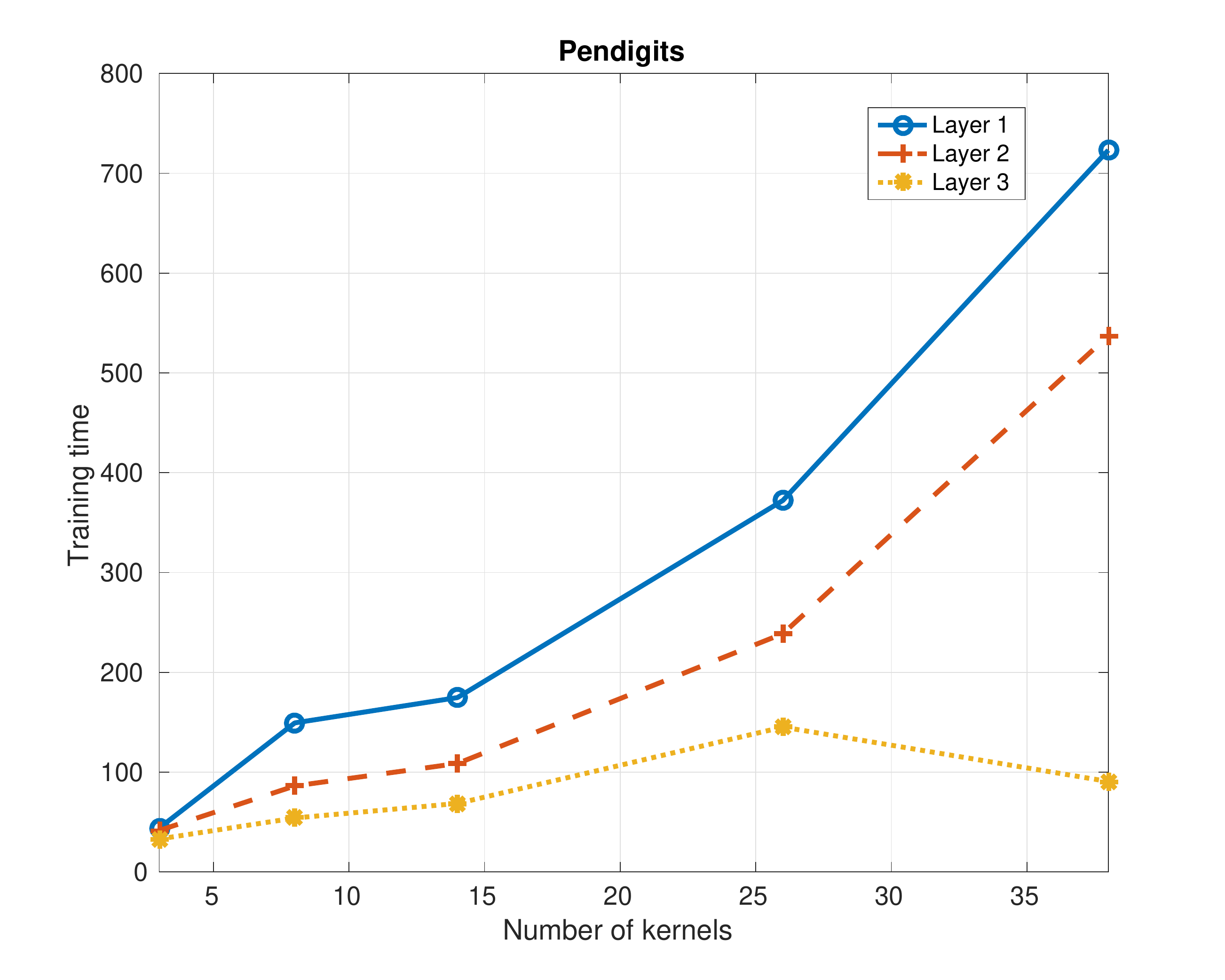}
\caption{Training time on the dataset pendigits with different numbers of kernel functions for different layers.}
\label{fig:pendigits_all_time}
\end{figure}

\begin{figure}[ht!]
\centering
\includegraphics[width=90mm]{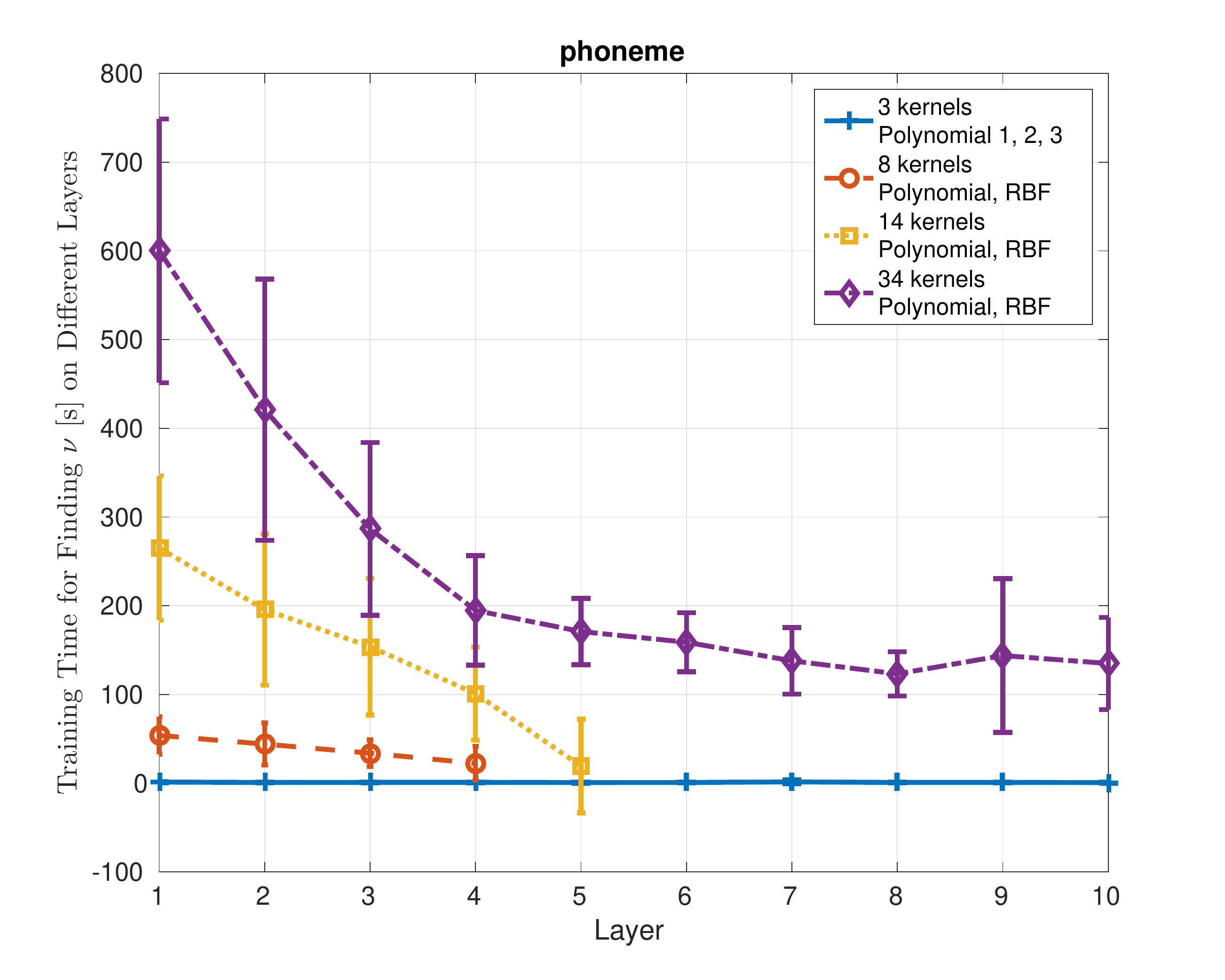}
\caption{Training time on the dataset phoneme with different numbers of kernels.}
\label{fig:phoneme_time_kernel}
\end{figure}

\begin{figure}[ht!]
\centering
\includegraphics[width=90mm]{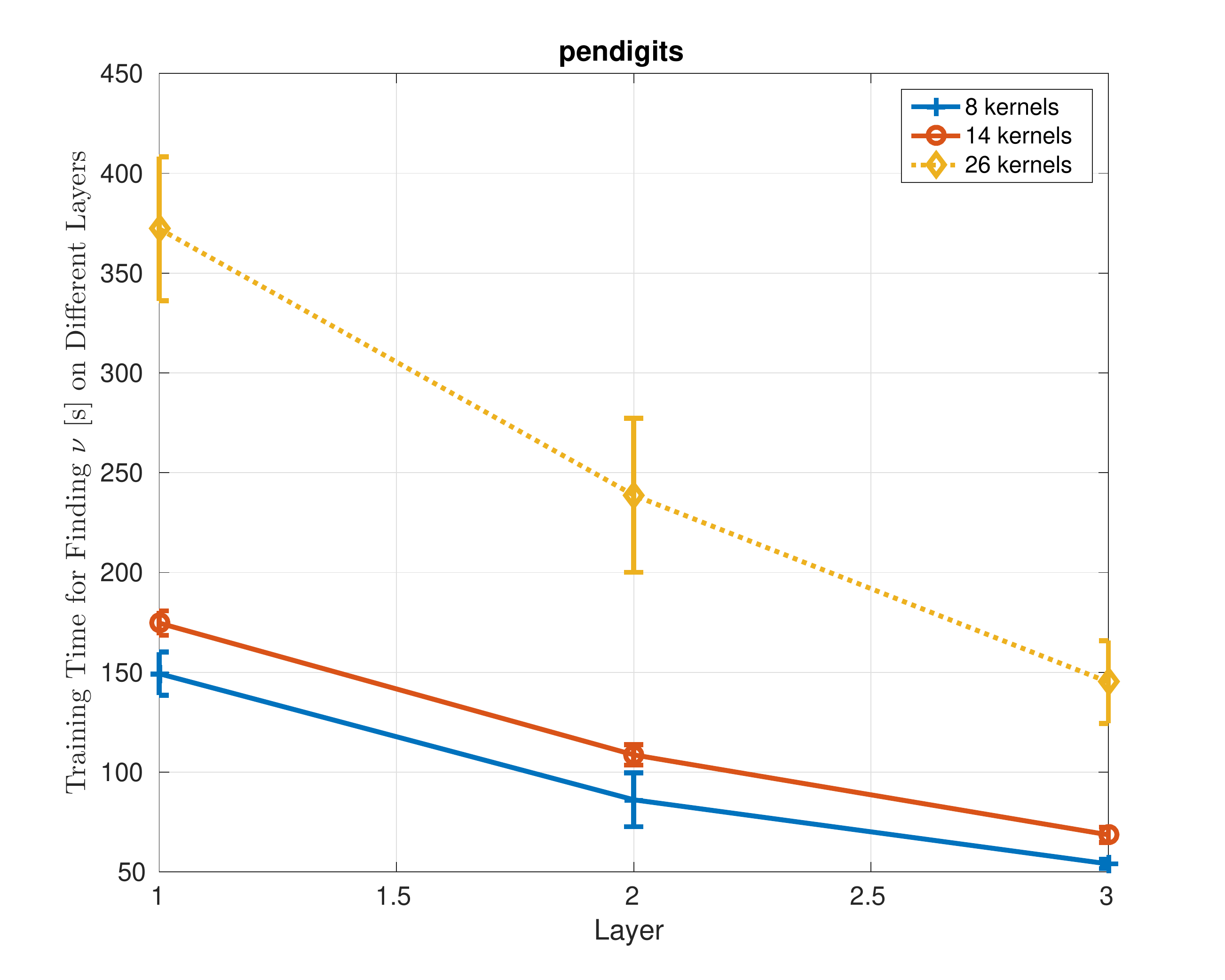}
\caption{Training time on the dataset pendigits with different numbers of kernels.}
\label{fig:pendigits_time_kernel}
\end{figure}

\begin{figure}[ht!]
\centering
\includegraphics[width=90mm]{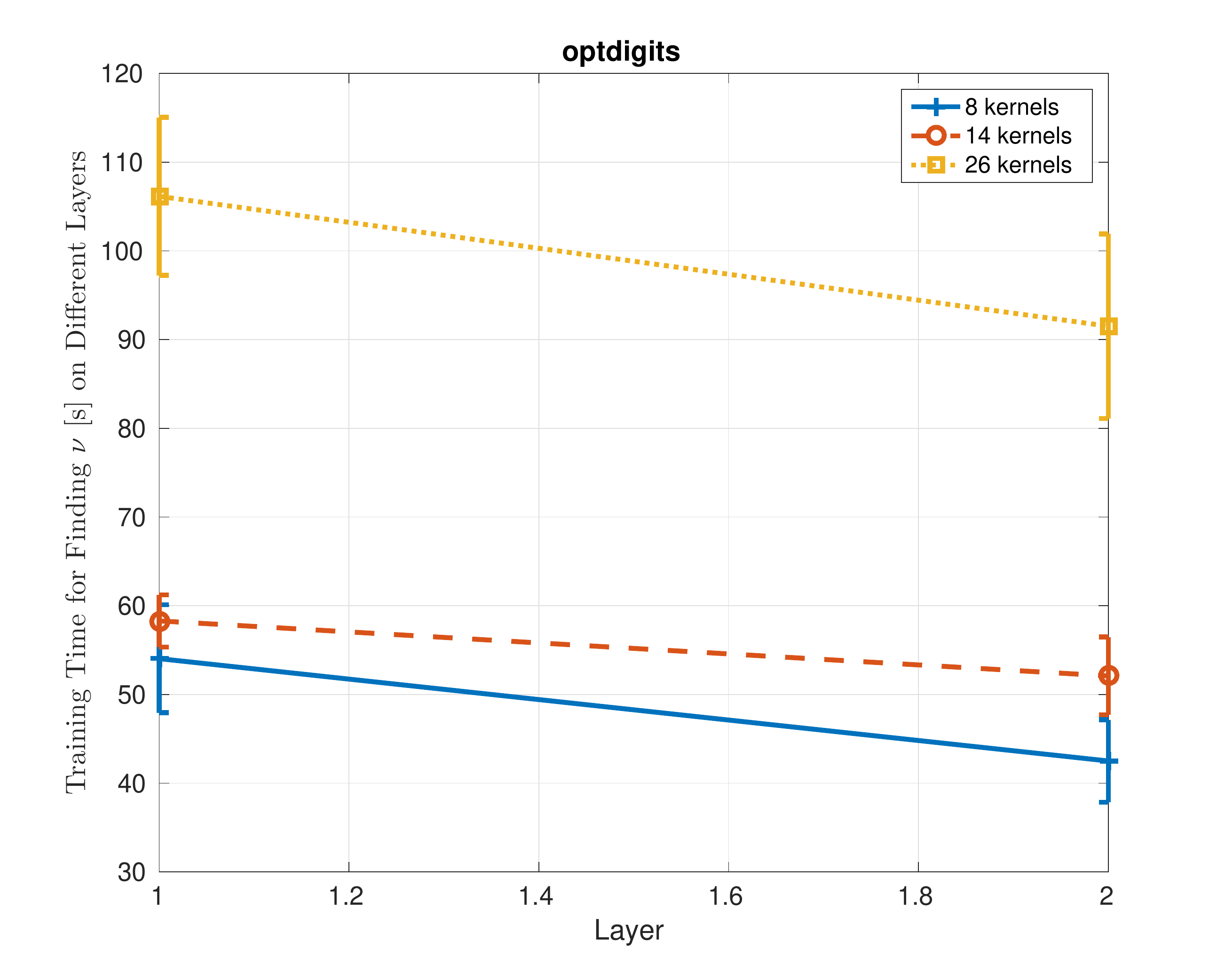}
\caption{Training time on the dataset optdigits with different numbers of kernels.}
\label{fig:optdigits_time_kernel}
\end{figure}

\begin{figure}[ht!]
\centering
\includegraphics[width=90mm]{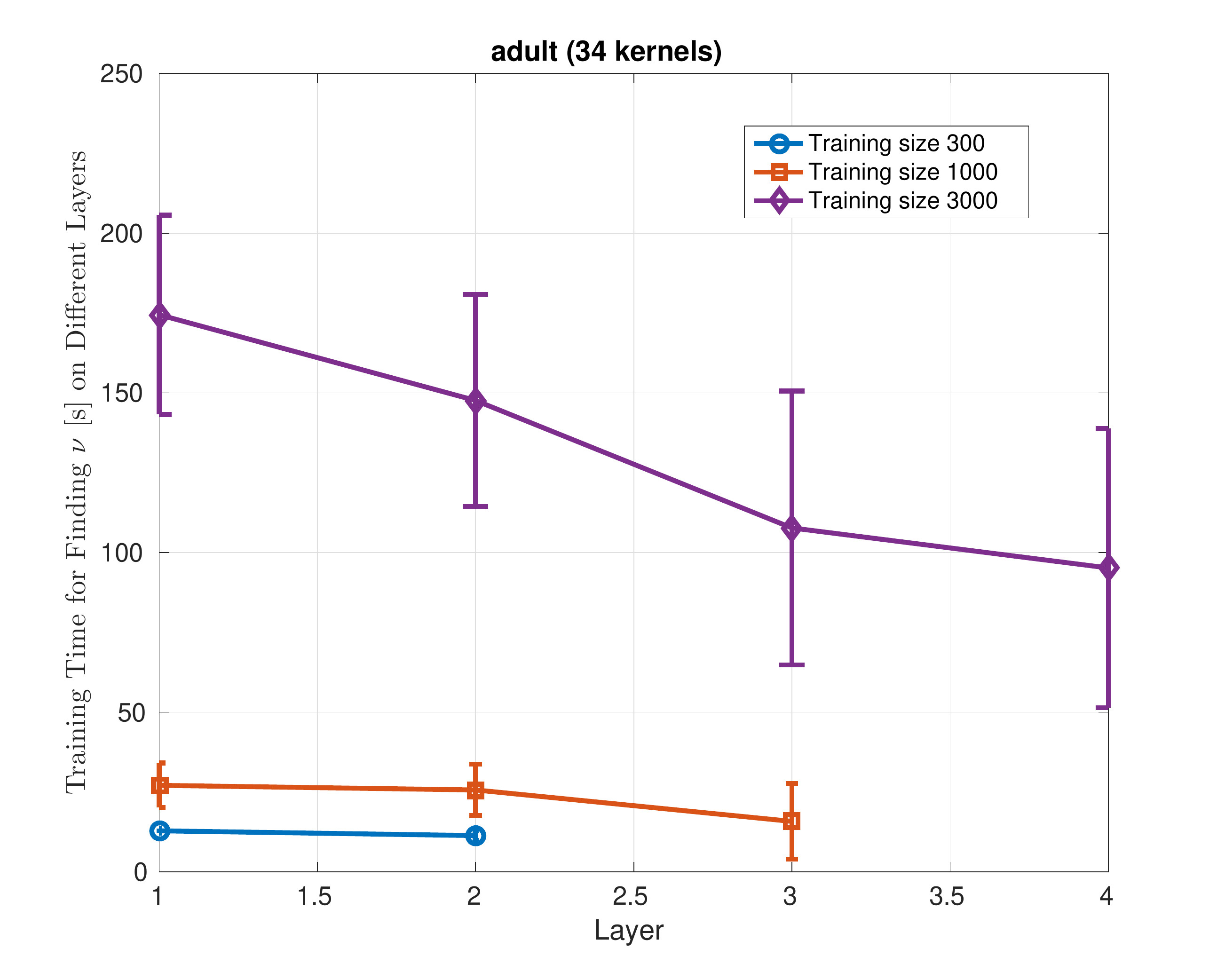}
\caption{Training time on the dataset adult with different training sizes.}
\label{fig:adult_time_size}
\end{figure}

\begin{figure}[ht!]
\centering
\includegraphics[width=90mm]{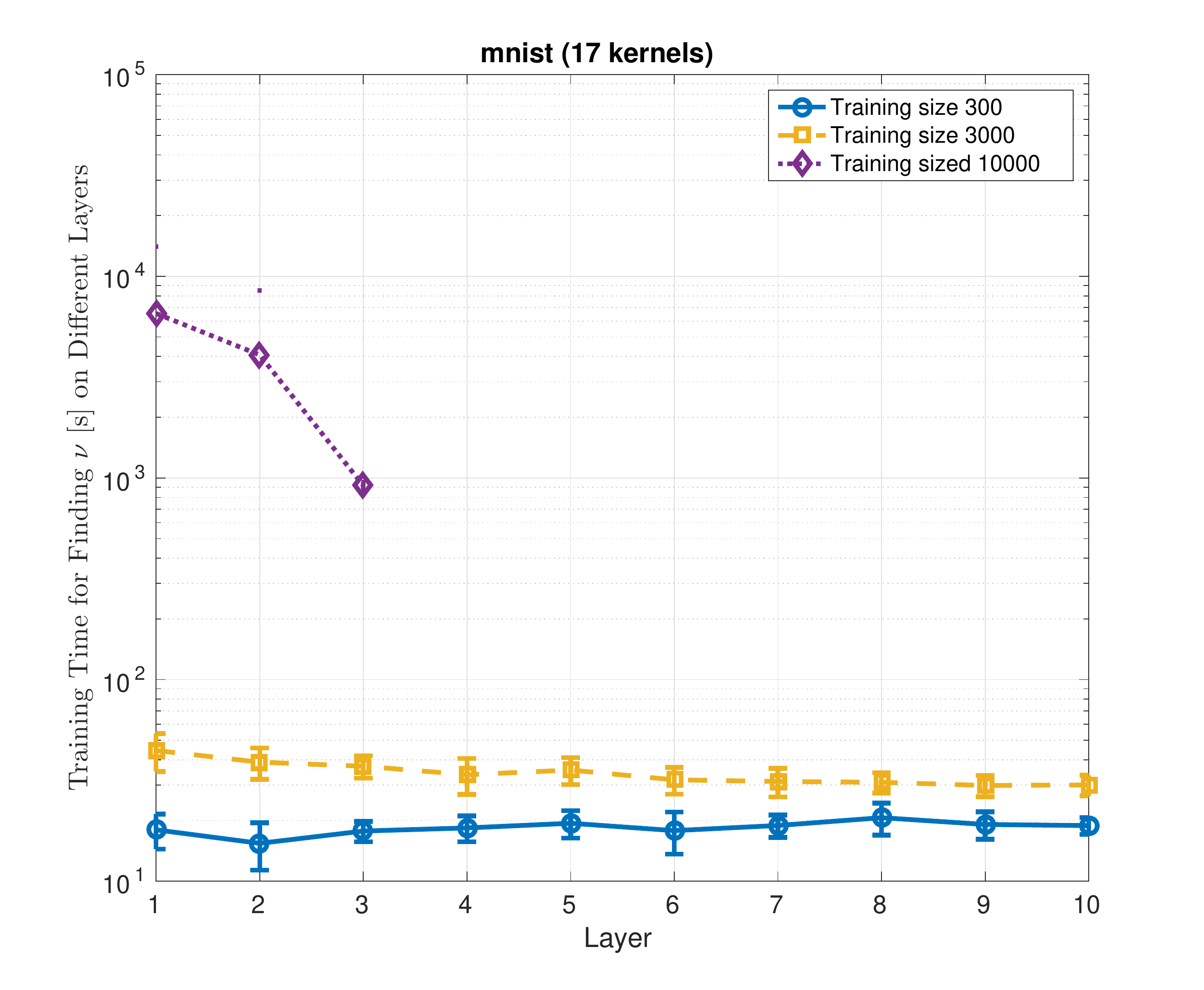}
\caption{Training time on the dataset mnist with different training sizes.}
\label{fig:mnist_time_size}
\end{figure}

\begin{figure}[ht!]
\centering
\includegraphics[width=90mm]{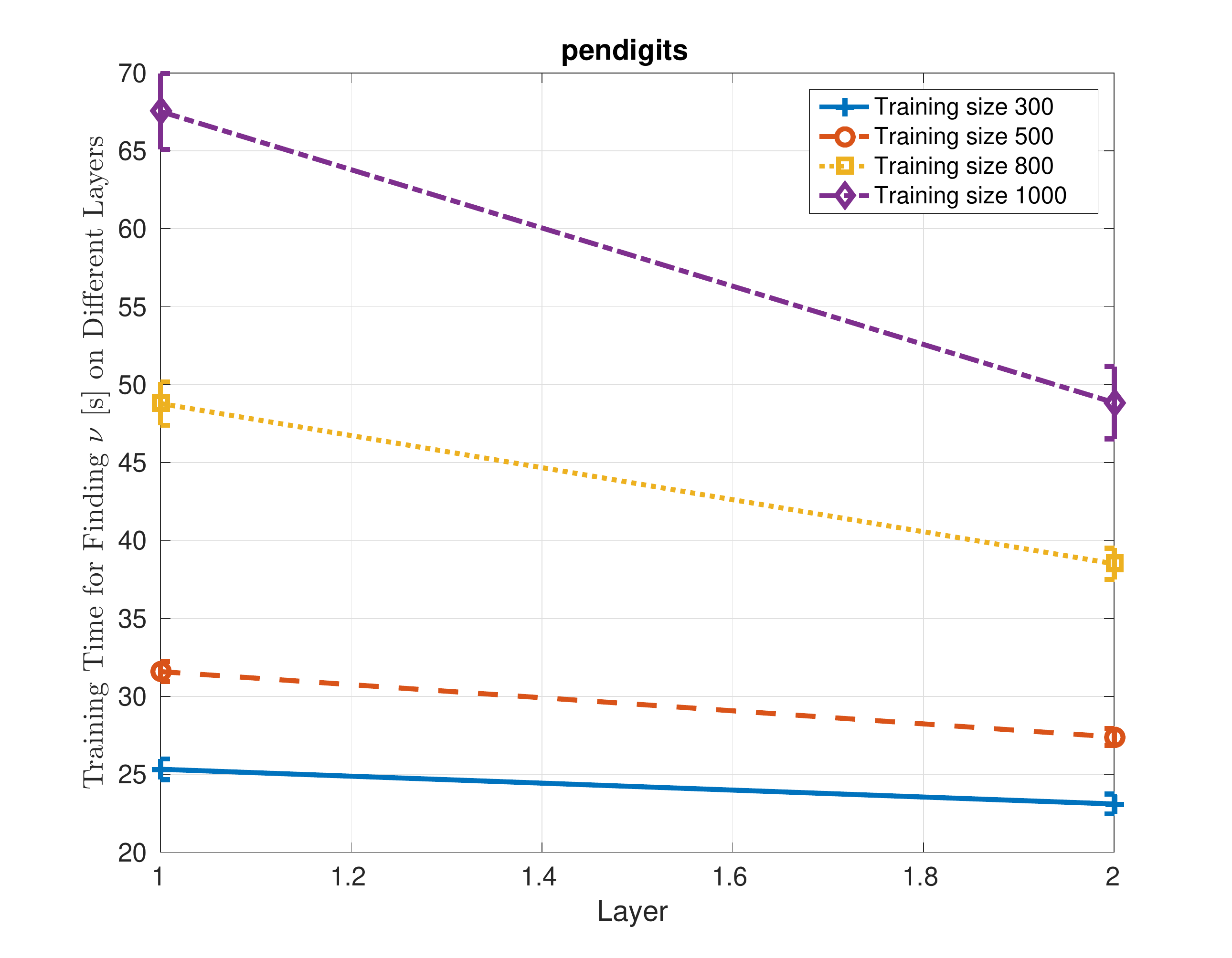}
\caption{Training time on the dataset pendigits with different training sizes.}
\label{fig:pendigits_time_size}
\end{figure}

\subsubsection{Dimension of Resulting Feature Space vs Number of Layers}
In this section, we show the resulting dimensionality of the feature space using the hierarchical CLASMK-ML learning technique. As shown in Fig.~\ref{fig:banana_dim_v_layer} to Fig.~\ref{fig:adult_dim_size}, feature dimension mainly increases with respect to the number of layers in a linear fashion due to the feature augmentation scheme. This is not ideal for large scale learning models, which gives rise to the importance of feature pruning as a future direction.

\begin{figure}[ht!]
\centering
\includegraphics[width=90mm]{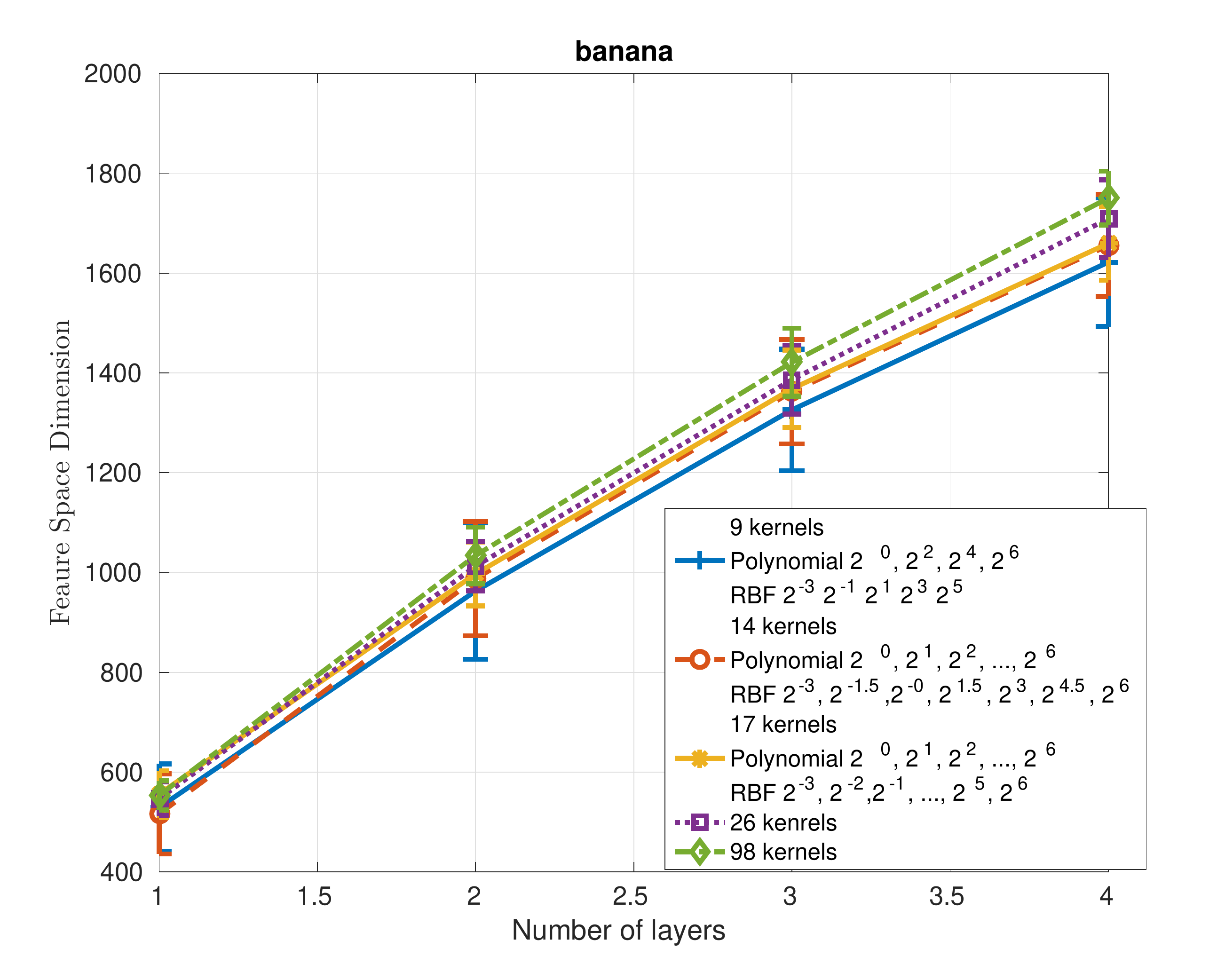}
\caption{Resulting feature dimension on the dataset banana with different numbers of kernel functions.}
\label{fig:banana_dim_v_layer}
\end{figure}
\begin{figure}[ht!]
\centering
\includegraphics[width=90mm]{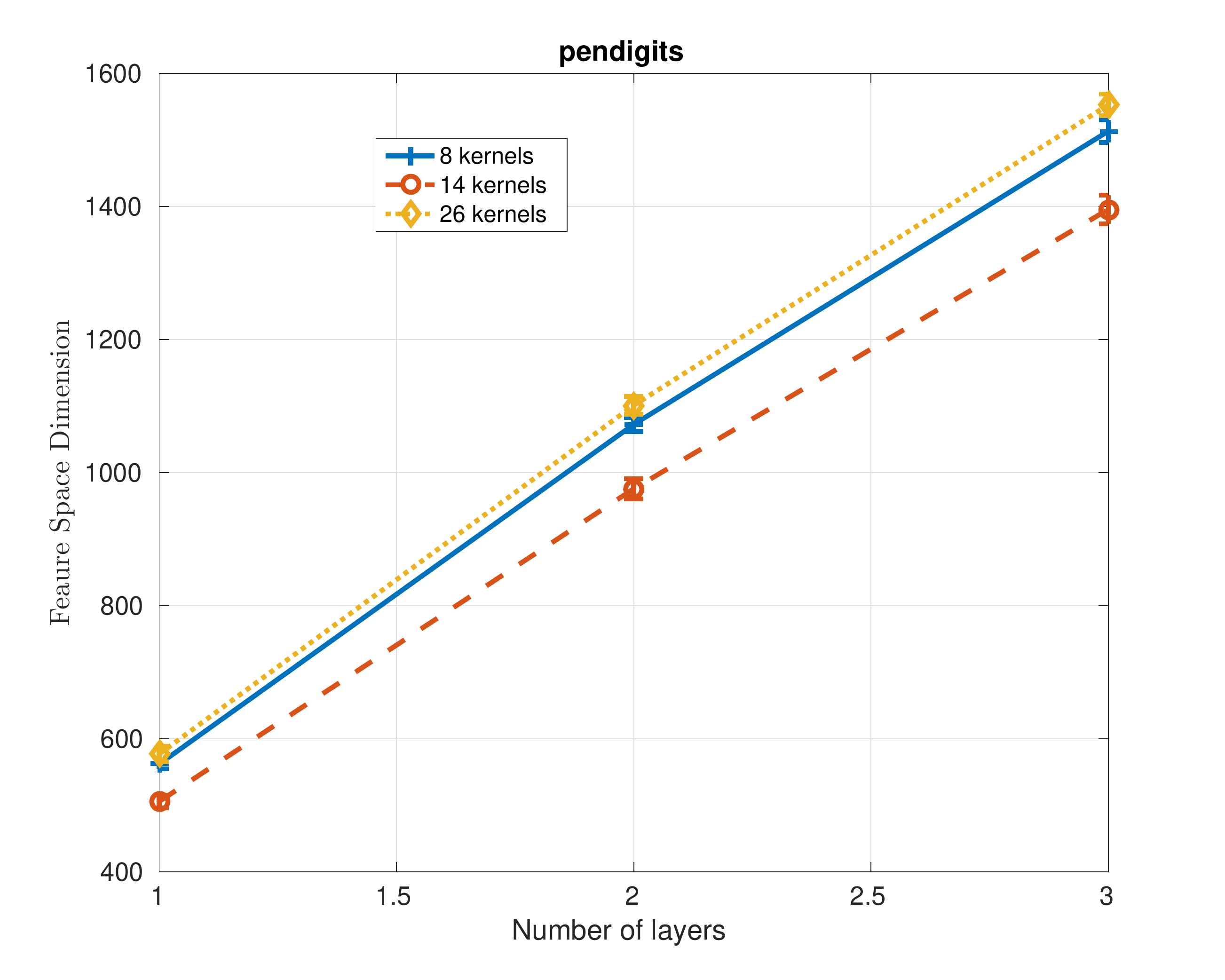}
\caption{Resulting feature dimension on the dataset pendigits with different numbers of kernel functions.}
\label{fig:pendigits_dim_kernel}
\end{figure}
\begin{figure}[ht!]
\centering
\includegraphics[width=90mm]{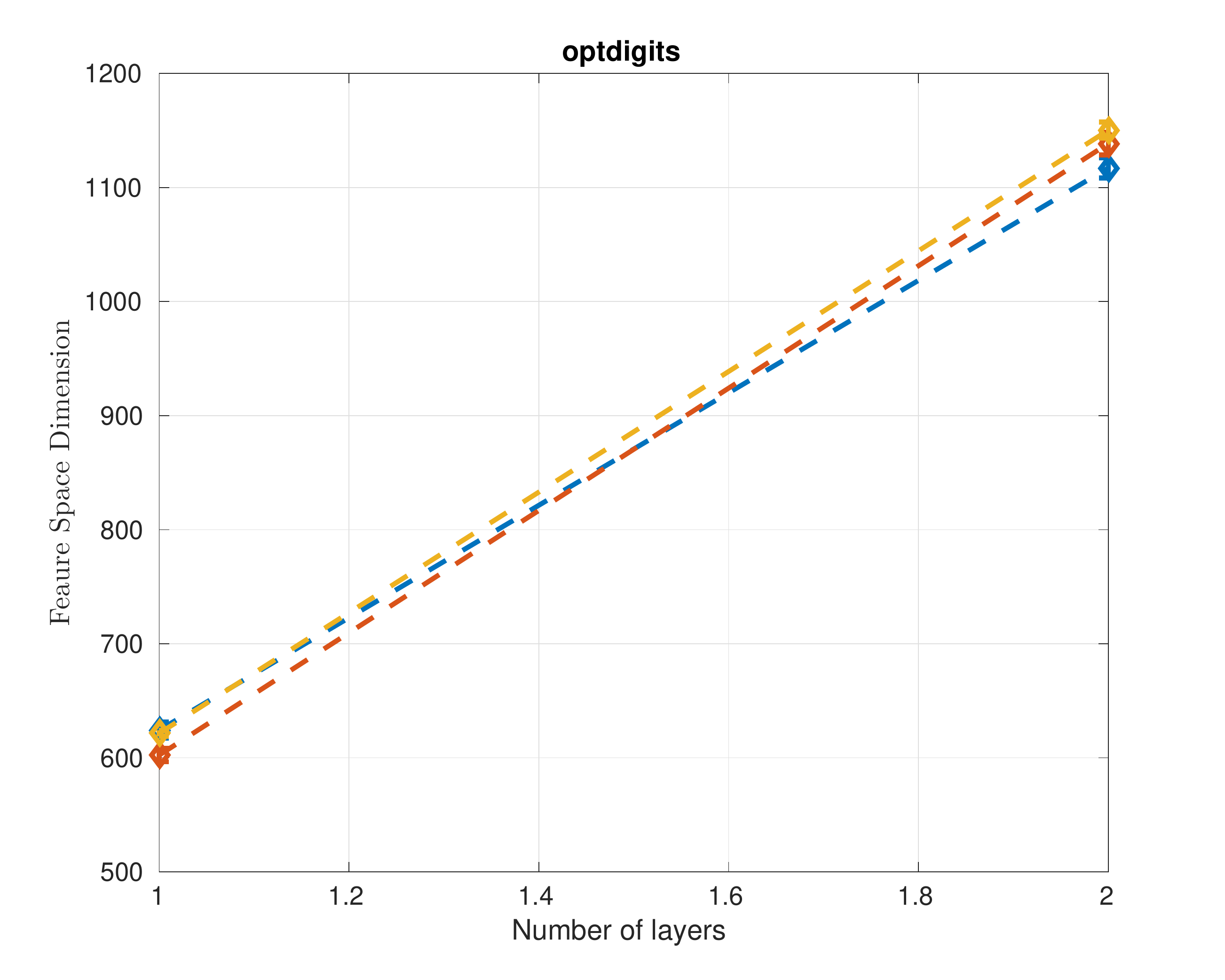}
\caption{Resulting feature dimension on the dataset optdigits with different numbers of kernel functions.}
\label{fig:optdigits_dim_kernel}
\end{figure}

\begin{figure}[ht!]
\centering
\includegraphics[width=90mm]{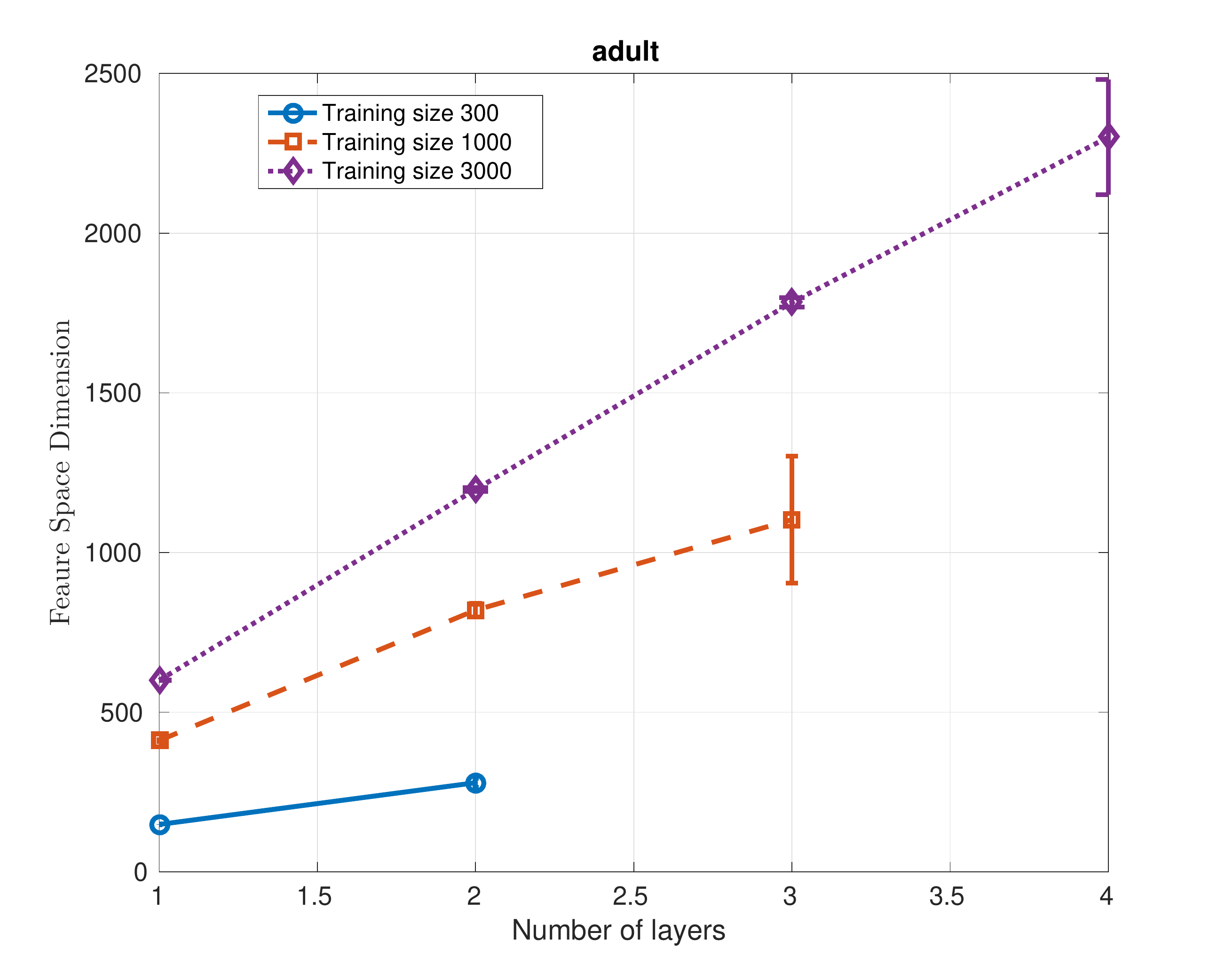}
\caption{Resulting feature dimension on the dataset adult with different training sizes.}
\label{fig:adult_dim_size}
\end{figure}
\begin{figure}[ht!]
\centering
\includegraphics[width=90mm]{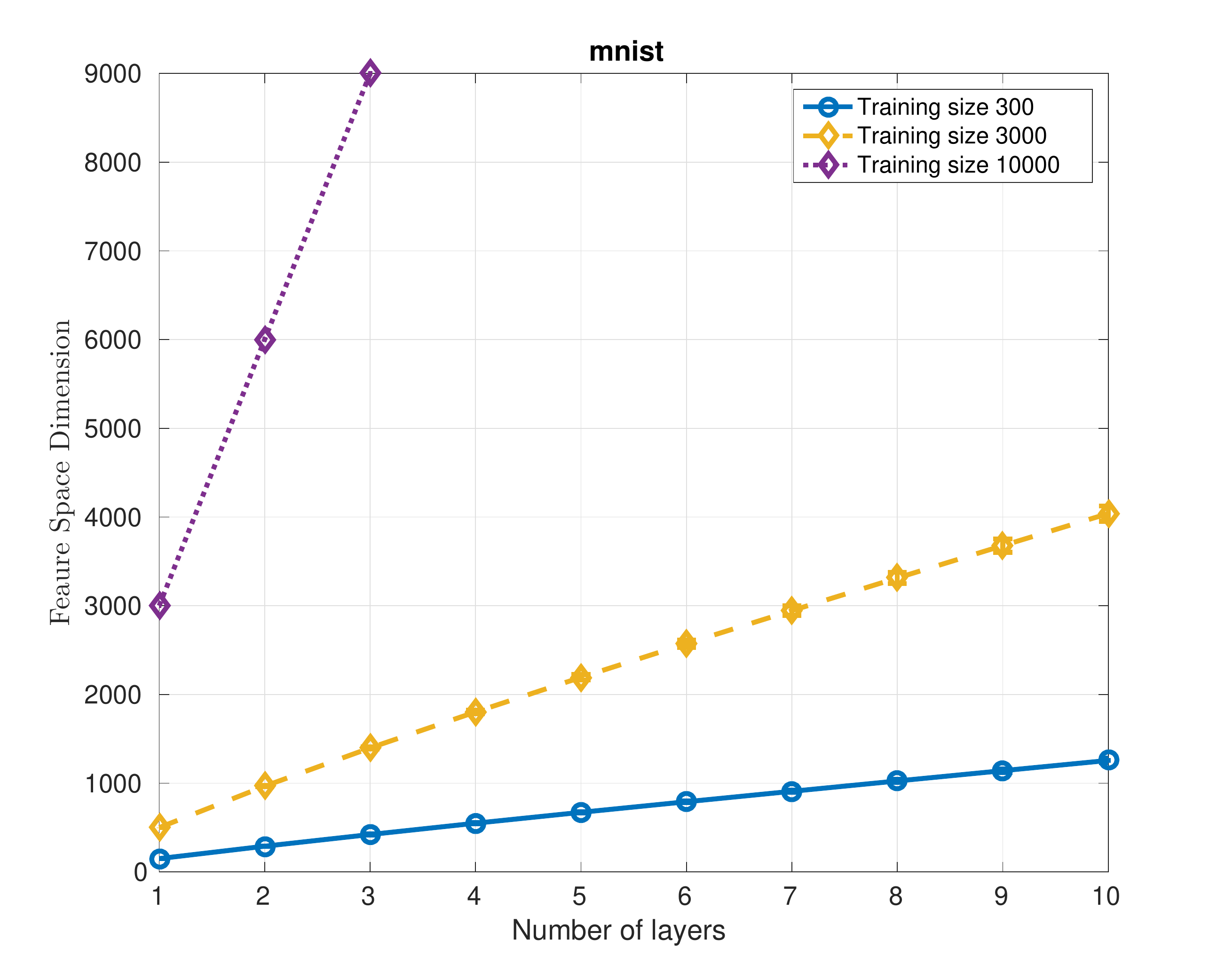}
\caption{Resulting feature dimension on the dataset mnist with different training sizes.}
\label{fig:adult_dim_size}
\end{figure}

\subsection{Visual Examples of the Estimated Weights}
We have shown some examples of the estimated weight matrix $\bss{\nu}\in\mathbb{R}^{C\times K}$ with $C$ classes and $K$ kernel functions in Fig.~\ref{fig:example_nu_banana}, Fig.~\ref{fig:example_nu_pendigits}, Fig.~\ref{fig:example_nu_optdigits} and Fig.~\ref{fig:example_nu_wdbc} for visual inspection.
The examples are shown as heat maps of the matrix $\bss{\nu}$, where a lighter color represents a higher value.
The datasets used are banana, pendigits, optdigits and wdbc. One can find the descriptions of the datasets in Table~\ref{tab:datasets_2class} and Table~\ref{tab:datasets2}. Despite the large variance on the dataset wdbc due to the small training size, we observe a fairly consistent estimation using random subsets for the other datasets.
\begin{figure}[ht!]
\centering
\includegraphics[width=90mm]{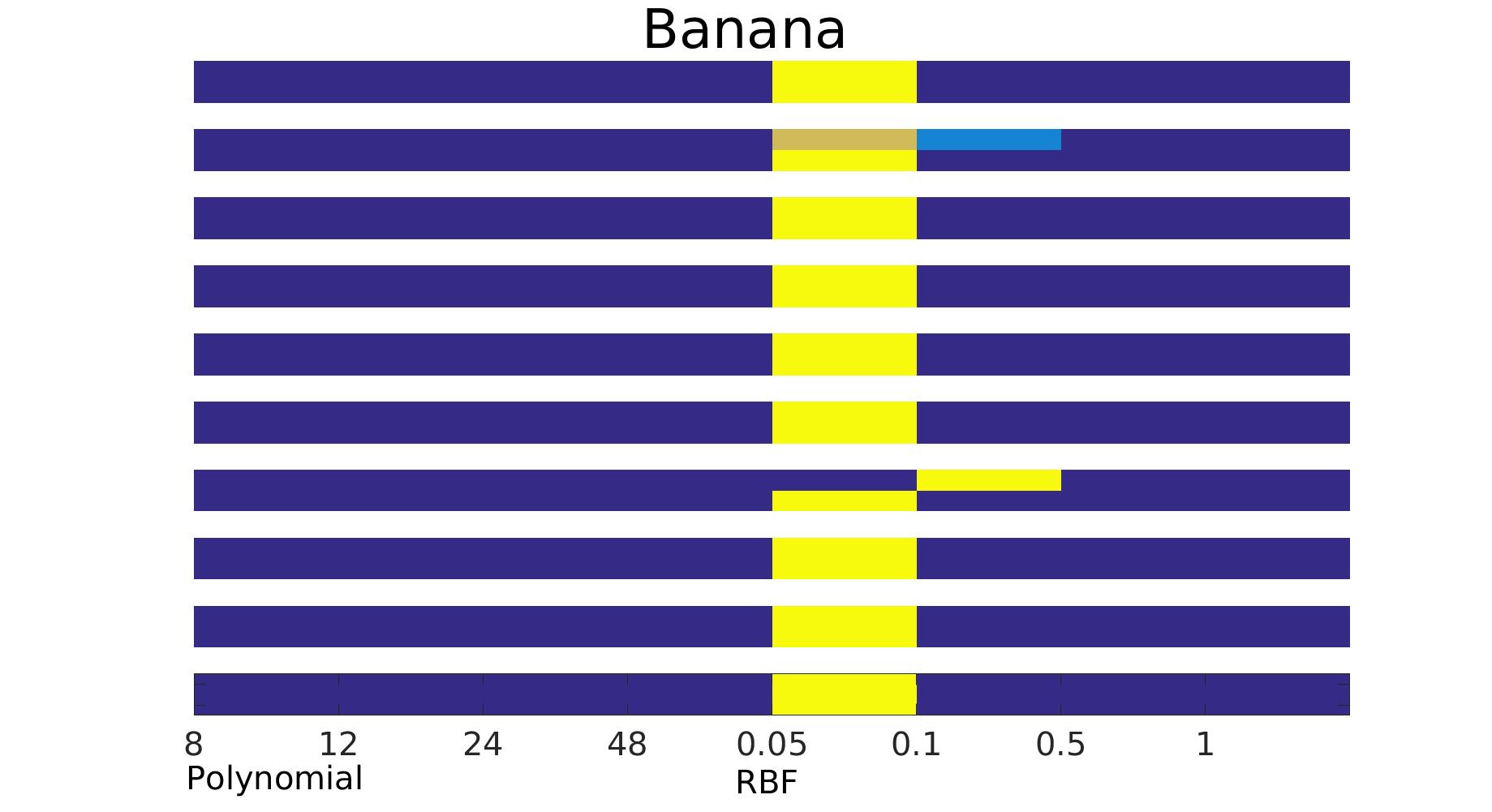}
\caption{In this figure, each row represents one example of the estimated weight matrix $\bss{\nu}\in\mathbb{R}^{C\times K}$ on the dataset banana, where $C$ is the number of classes and $K$ is the number of kernel functions. We repeat the experiment 10 times with randomized 3-fold testing and plot them in 10 different rows. The kernel functions are: the polynomial kernels with $d\in\{8,12,24,48\}$ and the RBF kernels with $\sigma\in\{0.05,0.1,0.5,1\}$.}
\label{fig:example_nu_banana}
\end{figure}

\begin{figure}[ht!]
\centering
\includegraphics[width=90mm]{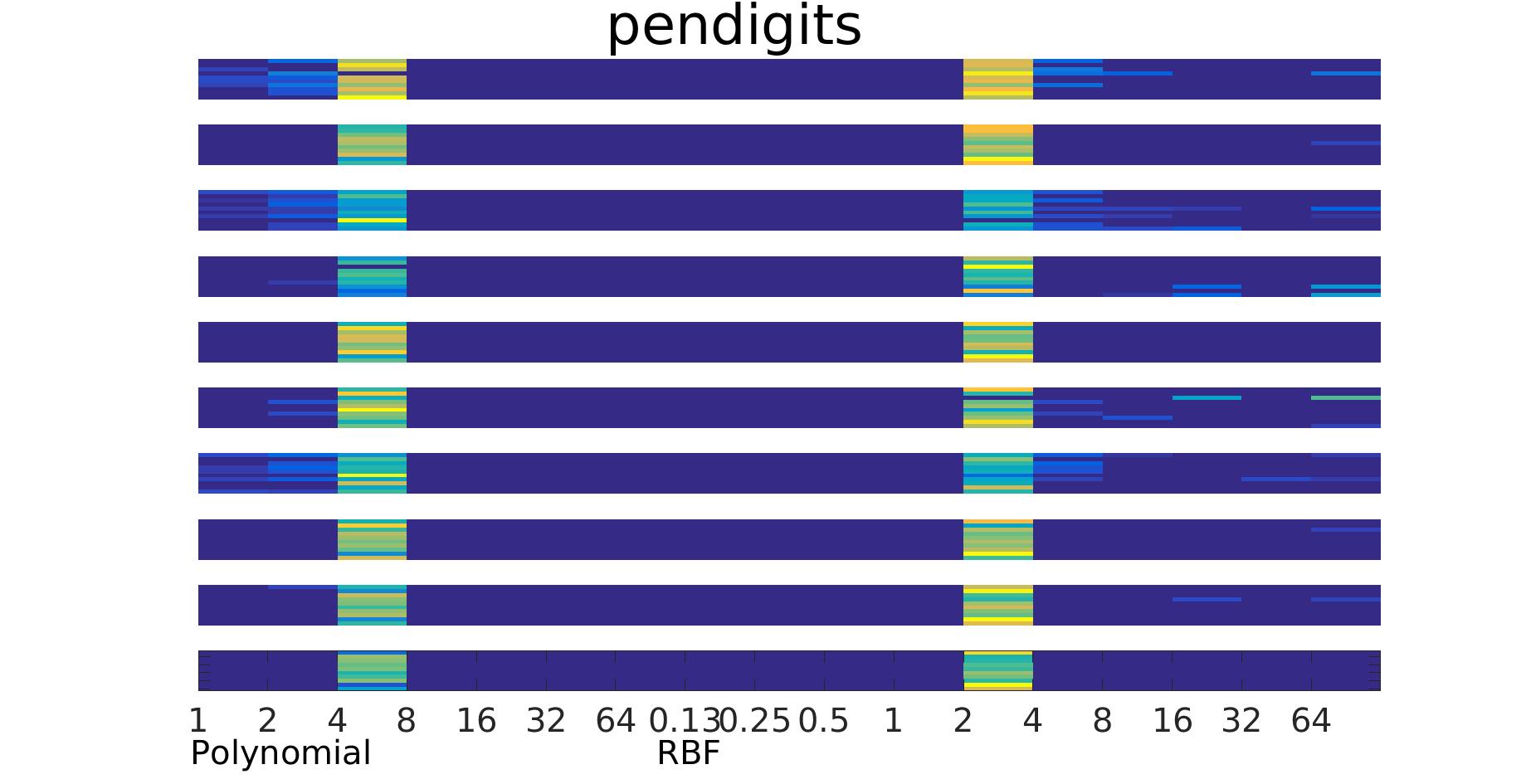}
\caption{Similar to Fig.~\ref{fig:example_nu_banana}, this figure shows an example of the estimated $\bss{\nu}$ on the dataset pendigits using the polynomial kernels with $d\in\{2^0,2^1,\cdots,2^6\}$ and the RBF kernels with $\sigma\in\{2^{-3},2^{-2},\cdots,2^6\}$.}
\label{fig:example_nu_pendigits}
\end{figure}
\begin{figure}[ht!]
\centering
\includegraphics[width=90mm]{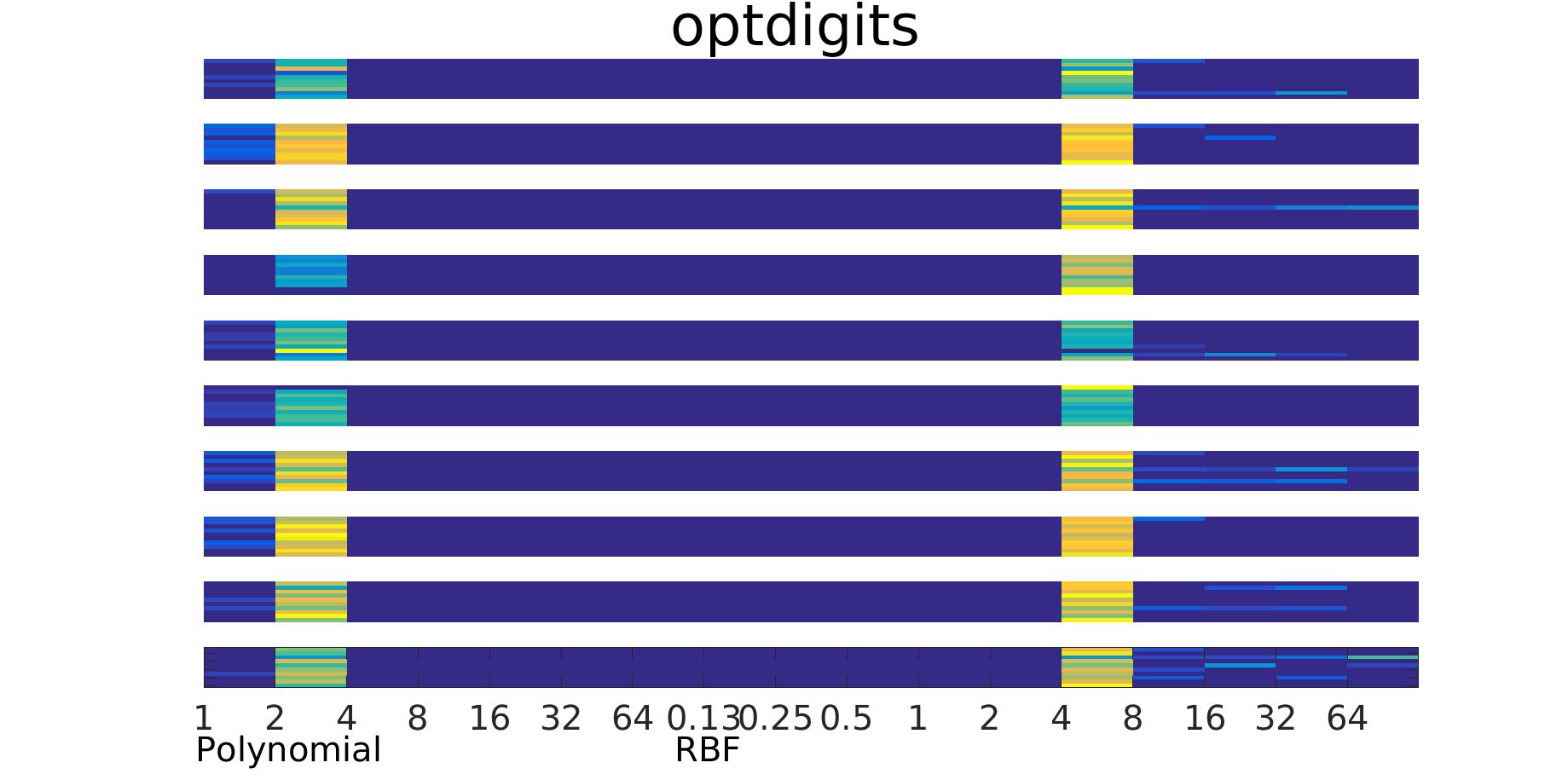}
\caption{This figure shows an example of the estimated $\bss{\nu}$ on the dataset optdigits. It shares the same setup as Fig.~\ref{fig:example_nu_pendigits}.}
\label{fig:example_nu_optdigits}
\end{figure}
\begin{figure}[ht!]
\centering
\includegraphics[width=90mm]{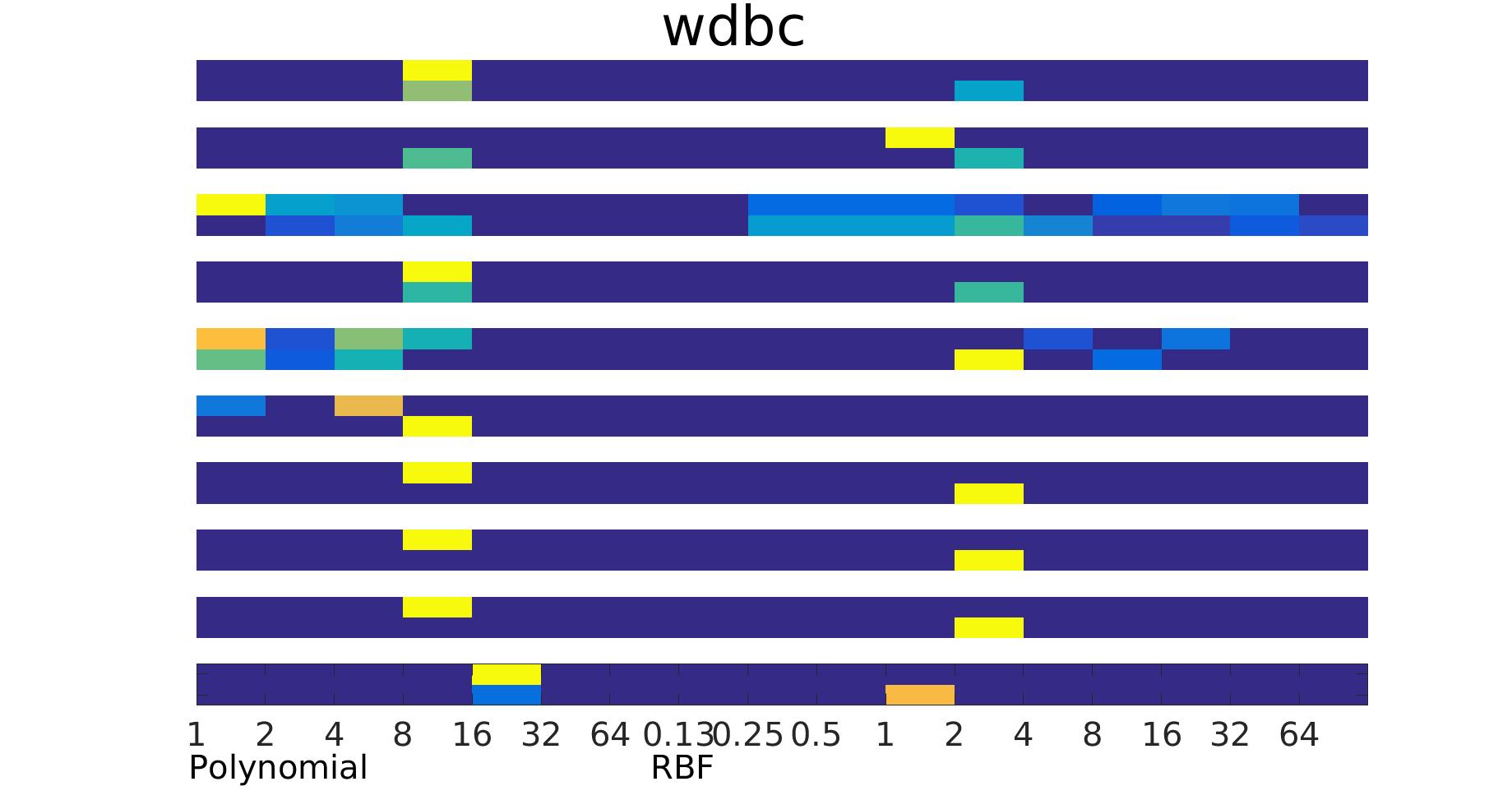}
\caption{This figure shows an example of the estimated $\bss{\nu}$ on the dataset wdbc. It shares the same setup as Fig.~\ref{fig:example_nu_pendigits}.}
\label{fig:example_nu_wdbc}
\end{figure}

\section{Conclusion}
\label{sec:conclusion}
In this paper, an automatic model selection technique has been presented for kernel classification methods using class-specific multiple kernel functions.
We motivate the proposal from a metric learning viewpoint, where the goal is to find a metric space such that the within-class distance is smaller than the between-class distance from a statistical point of view. Essentially, the selection is based on the underlying subspace model in the kernel-induced feature space. By evaluating an upper bound of the objective probability, we can select the best kernel function with respect to the lowest upper bound for each class using the CLAss-specific Subspace Kernel (CLASMK). Moreover, to further enhance the flexibility of the learning model, we introduce the CLAss-Specific Multiple-Kernel (CLASMK) model and a metric learning technique called CLASMK-Metric Learning for identifying the weighting coefficient of each feature vector induced by the corresponding kernel function.
A hierarchical learning structure is also proposed to improve the classification performance for a given base classifier by feature augmentation.
Empirical tests have shown promising results on various datasets.
As a future direction, tests using more types of kernel functions are under progress.
Moreover, feature pruning strategies are needed at each layer for large scale datasets. We are also investigating the possibilities of integrating the feature augmentation technique into a deep kernel network.

\section*{Acknowledgement}
This work has in part been funded by the Swedish Research Council (Vetenskapsr{\aa}det) under the contract number A0462701 which is gratefully acknowledged.

\appendix
\section{Lemma~\ref{lemma:lemma2}}
\begin{Lemma}
\label{lemma:lemma2}
Given the class-specific kernel model, and random vectors $\bss{\varphi}_c$, $\bss{\nu}_{\tilde{c}}$ from class $c$ and $\tilde{c}\neq c$, respectively. For any $\lambda>0$, such that
\begin{equation}
\label{eqa:condition2}
{\frac{\frac{1}{C-1}\left(\sum_{i\neq c}\mathbb{E}\left(\|\bs{U}_{i}^T\bss{\varphi}_{(c,i)}\|_2^2\right)+\sum_{j\neq \tilde{c}}\mathbb{E}\left(\|\bs{U}_{j}^T\bss{\nu}_{(\tilde{c},j)}\|_2^2\right)\right)}{\mathbb{E}\left(\|\bs{U}_{c}^T\bss{\varphi}_{(c,c)}\|_2^2\right)+\mathbb{E}\left(\|\bs{U}_{\tilde{c}}^T\bss{\nu}_{(\tilde{c},\tilde{c})}\|_2^2\right)}\leq \lambda}
\end{equation}
then $\mathbb{E}\left(D_{c,\tilde{c}}\right)\geq 2C\left( 1 - \frac{(C\lambda - \lambda +1 )(1-\sigma_e^2)}{C}\right)$.
\end{Lemma}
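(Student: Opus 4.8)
The plan is to convert the desired lower bound on $\mathbb{E}(D_{c,\tilde c})$ into an upper bound on a single sum of cross inner products, and then control that sum using the ratio hypothesis Eq.~\eqref{eqa:condition2} in the same spirit as Lemma~\ref{lemma:theorem1}. First I would expand the between-class distance block by block. Writing $\bss{\varphi}_c$ and $\bss{\nu}_{\tilde c}$ through the block-diagonal model of Eq.~\eqref{eqa:model_decomp2} and using the per-block normalization $\|\bss{\varphi}_{(c,c'')}\|_2^2=\|\bss{\nu}_{(\tilde c,c'')}\|_2^2=1$, I obtain $\mathbb{E}(D_{c,\tilde c})=\sum_{c''=1}^C\mathbb{E}(\|\bss{\varphi}_{(c,c'')}-\bss{\nu}_{(\tilde c,c'')}\|_2^2)=2C-2S$, where $S=\sum_{c''=1}^C\mathbb{E}(\bss{\varphi}_{(c,c'')}^T\bss{\nu}_{(\tilde c,c'')})$. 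It therefore suffices to show $S\le(C\lambda-\lambda+1)(1-\sigma_e^2)$, since substituting this into $2C-2S$ reproduces the claimed bound exactly.

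Second, I would simplify each block inner product. Substituting $\bss{\varphi}_{(c,c'')}=\bs{U}_{c''}\bss{\beta}_{(c,c'')}+\bs{e}_{(c,c'')}$ and the analogous expression for $\bss{\nu}_{(\tilde c,c'')}$, the column-orthonormality $\bs{U}_{c''}^T\bs{U}_{c''}=\bs{I}$ together with the zero-mean and uncorrelatedness assumptions kills the two signal–noise cross terms; because $c\ne\tilde c$ the two noise vectors arise from independent draws and are mutually uncorrelated, so the noise–noise term vanishes as well. Independence of the draws then collapses the remaining term to $\mathbb{E}(\bss{\varphi}_{(c,c'')}^T\bss{\nu}_{(\tilde c,c'')})=\bs{m}_{(c,c'')}^T\bs{m}_{(\tilde c,c'')}$. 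I would next apply Cauchy–Schwarz followed by AM–GM, $\bs{m}_{(c,c'')}^T\bs{m}_{(\tilde c,c'')}\le\tfrac12(\|\bs{m}_{(c,c'')}\|_2^2+\|\bs{m}_{(\tilde c,c'')}\|_2^2)$, and finally Jensen's inequality, $\|\bs{m}_{(c,c'')}\|_2^2=\|\mathbb{E}\bss{\beta}_{(c,c'')}\|_2^2\le\mathbb{E}(\|\bss{\beta}_{(c,c'')}\|_2^2)=\mathbb{E}(\|\bs{U}_{c''}^T\bss{\varphi}_{(c,c'')}\|_2^2)$, and symmetrically for the $\bss{\nu}$ means, so that $S$ is bounded by a half-sum of the very quantities appearing in Eq.~\eqref{eqa:condition2}.

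Third, I would assemble the sum: $S\le\tfrac12\big(\sum_{c''}\mathbb{E}\|\bs{U}_{c''}^T\bss{\varphi}_{(c,c'')}\|_2^2+\sum_{c''}\mathbb{E}\|\bs{U}_{c''}^T\bss{\nu}_{(\tilde c,c'')}\|_2^2\big)$. The bookkeeping step is to peel off, in each sum, the single ``diagonal'' term ($c''=c$ in the first sum, $c''=\tilde c$ in the second) from the remaining $C-1$ ``off-diagonal'' terms. For the diagonal blocks the subspace model holds and the normalization forces $\mathbb{E}(\|\bss{\beta}_{(c,c)}\|_2^2)=1-\sigma_e^2$, so each diagonal term equals $1-\sigma_e^2$ and the denominator of Eq.~\eqref{eqa:condition2} is exactly $2(1-\sigma_e^2)$. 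The surviving $2(C-1)$ off-diagonal terms are precisely the numerator of Eq.~\eqref{eqa:condition2}, so the hypothesis yields $\sum_{i\ne c}\mathbb{E}\|\bs{U}_i^T\bss{\varphi}_{(c,i)}\|_2^2+\sum_{j\ne\tilde c}\mathbb{E}\|\bs{U}_j^T\bss{\nu}_{(\tilde c,j)}\|_2^2\le 2(C-1)\lambda(1-\sigma_e^2)$. Combining gives $S\le(1-\sigma_e^2)\big(1+(C-1)\lambda\big)=(C\lambda-\lambda+1)(1-\sigma_e^2)$, and $\mathbb{E}(D_{c,\tilde c})=2C-2S$ delivers the claim.

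I expect the main obstacle to be this index bookkeeping together with the choice of the correct final inequality. Unlike Lemma~\ref{lemma:theorem1}, whose bound is governed by $\sqrt{\lambda}$, the linear-in-$\lambda$ form $C\lambda-\lambda+1$ forces the geometric mean $\sqrt{ab}$ to be relaxed by AM–GM to the arithmetic mean $\tfrac12(a+b)$ rather than kept under a square root; recognizing that this relaxation, not a Cauchy–Schwarz in expectation as in Lemma~\ref{lemma:theorem1}, is what produces the correct constant is the key insight. A secondary point to verify carefully is that the diagonal blocks genuinely contribute the full $1-\sigma_e^2$ (which requires $\bs{U}_{c''}\perp\bs{e}_{(c,c'')}$ on those blocks), so that $a_c+b_{\tilde c}$ matches the hypothesis denominator and the factor $C-1$ in that denominator combines cleanly with $2(1-\sigma_e^2)$ to generate the $(C-1)\lambda$ term.
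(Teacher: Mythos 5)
Your proof is correct and follows essentially the same route as the paper's: expand $\mathbb{E}(D_{c,\tilde c})=2C-2S$ blockwise, relax the cross terms to $\tfrac12\sum_{i=1}^C\left(\mathbb{E}\left(\|\bss{\beta}_{(c,i)}\|_2^2\right)+\mathbb{E}\left(\|\bss{\eta}_{(\tilde c,i)}\|_2^2\right)\right)$, then peel off the two diagonal blocks (each contributing $1-\sigma_e^2$ by the normalization) from the $2(C-1)$ off-diagonal terms, which Eq.~\eqref{eqa:condition2} bounds by $2\lambda(C-1)(1-\sigma_e^2)$, yielding exactly $(C\lambda-\lambda+1)(1-\sigma_e^2)$. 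The only cosmetic deviation is in the middle step: you factor $\mathbb{E}\left(\bss{\varphi}_{(c,i)}^T\bss{\nu}_{(\tilde c,i)}\right)$ through the means (using independence of the two draws) and then apply Cauchy--Schwarz, AM--GM and Jensen to the mean vectors, whereas the paper applies Cauchy--Schwarz and AM--GM pointwise inside the expectation, reaching the identical intermediate bound without needing to invoke independence explicitly.
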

\begin{proof}
Let $\bss{\varphi}_c = \bs{U}\bss{\beta}_c+\bs{e}_c$ and $\bss{\nu}_{\tilde{c}} = \bs{U}\bss{\eta}_{\tilde{c}}+\bs{e}_{\tilde{c}}$ defined in Eq.~\eqref{eqa:model_decomp2}. We can then compute the between-class distance as follows:
\begin{eqnarray}
\nonumber
\mathbb{E}\left(D_{c,\tilde{c}}\right)&=&2\left(C - \sum_{i=1}^C \mathbb{E}\left(\bss{\beta}_{(c,i)}^T\bss{\eta}_{(\tilde{c},i)}\right)\right)\\
\nonumber
&\geq & 2\left(C - \sum_{i=1}^C \mathbb{E}\left(\|\bss{\beta}_{(c,i)}\|_2\|\bss{\eta}_{(\tilde{c},i)}\|_2\right)\right)\\
\label{eqa:proof_lemma2_0}
&\geq & 2\left(C - \frac{1}{2}\sum_{i=1}^C \mathbb{E}\left(\|\bss{\beta}_{(c,i)}\|_2 ^2+\|\bss{\eta}_{(\tilde{c},i)}\|_2^2\right) \right)
\end{eqnarray}
Let $\small{s = \sum_{i\neq c}\mathbb{E}\left(\|\bs{U}_{i}^T\bss{\varphi}_{(c,i)}\|_2^2\right)+\sum_{j\neq \tilde{c}}\mathbb{E}\left(\|\bs{U}_{j}^T\bss{\nu}_{(\tilde{c},j)}\|_2^2\right)}$. From Eq.~\eqref{eqa:condition2} and $\|\bss{\beta}_{(c,c)}\|_2^2=\|\bss{\eta}_{(\tilde{c},\tilde{c})}\|_2^2=1-\sigma_e^2$, we know that
\begin{eqnarray*}
s\leq 2\lambda(C-1)(1-\sigma_e^2)
\end{eqnarray*}
Therefore, we have:
\begin{eqnarray*}
\label{eqa:proof_lemma2_1}
\mathbb{E}\left(D_{c,\tilde{c}}\right)&\geq& 2\left(C - \frac{1}{2}\left(\|\bss{\beta}_{(c,c)}\|_2^2+\|\bss{\eta}_{(\tilde{c},\tilde{c})}\|_2^2 + s\right)\right)\\
&\geq &2\left(C - \frac{1}{2}\left(2\left(1-\sigma_e^2\right) + 2\lambda(C-1)(1-\sigma_e^2)\right)\right)\\
& = & 2C\left( 1 - \frac{(C\lambda - \lambda +1 )(1-\sigma_e^2)}{C}\right)
\end{eqnarray*}
\end{proof}

\section{Lemma~\ref{lemma:d_multiclass}}
\label{app:d_multiclass}
In this Lemma, we discuss the between-class distance for all classes in a one-against-one fashion. Generalization to unbalanced label problems can be readily derived using the mechanism in Theorem.~\ref{thm:theorem1}.
\begin{Lemma}
\label{lemma:d_multiclass}
Given the class-specific model, assume that $p_1=\cdots=p_C=\frac{1}{C}$. If $\exists\lambda<1$, such that
\begin{equation}
\label{eqa:condition3}
{\frac{\frac{1}{C-1}\sum_{\forall c}\sum_{i\neq c}\mathbb{E}\left(\|\bs{U}_{i}^T\bss{\varphi}_{(c,i)}\|_2^2\right)}{\sum_{\forall c}\mathbb{E}\left(\|\bs{U}_{c}^T\bss{\varphi}_{(c,c)}\|_2^2\right)}\leq \lambda}
\end{equation}
then
\begin{equation}
\label{eqa:lemma3}
\mathbb{E}\left(D_{b}\right)\geq 2C\left( 1 - \frac{(C\lambda - \lambda +1 )(1-\sigma_e^2)}{C}\right)
\end{equation}
\end{Lemma}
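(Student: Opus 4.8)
The plan is to mirror the single-pair computation of Lemma~\ref{lemma:lemma2} but to aggregate over all ordered class pairs, so that the aggregate hypothesis Eq.~\eqref{eqa:condition3} can be invoked directly (rather than a pair-specific condition). First I would invoke the identity established in the proof of Theorem~\ref{thm:theorem1},
\begin{equation*}
\mathbb{E}(D_b) = \sum_{\forall c}\sum_{\forall \tilde{c}\neq c}\frac{p_cp_{\tilde{c}}}{1-p_c}\mathbb{E}(D_{c,\tilde{c}}),
\end{equation*}
which under the balanced assumption $p_c=\frac{1}{C}$ collapses to $\mathbb{E}(D_b)=\frac{1}{C(C-1)}\sum_{\forall c}\sum_{\forall \tilde{c}\neq c}\mathbb{E}(D_{c,\tilde{c}})$.

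Next I would reproduce the opening lines of Lemma~\ref{lemma:lemma2}: writing $\bss{\varphi}_c=\bs{U}\bss{\beta}_c+\bs{e}_c$ and $\bss{\nu}_{\tilde{c}}=\bs{U}\bss{\eta}_{\tilde{c}}+\bs{e}_{\tilde{c}}$, and using $\bs{U}^T\bs{U}=\bs{I}$, the block structure, and the uncorrelatedness of the noise across classes, to obtain
\begin{equation*}
\mathbb{E}(D_{c,\tilde{c}}) = 2\Big(C-\sum_{i=1}^C\mathbb{E}(\bss{\beta}_{(c,i)}^T\bss{\eta}_{(\tilde{c},i)})\Big)\geq 2\Big(C-\tfrac12\sum_{i=1}^C\mathbb{E}(\|\bss{\beta}_{(c,i)}\|_2^2+\|\bss{\eta}_{(\tilde{c},i)}\|_2^2)\Big),
\end{equation*}
the inequality following from Cauchy--Schwarz and AM--GM exactly as in Eq.~\eqref{eqa:proof_lemma2_0}. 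Here I would also record the two identities $\mathbb{E}\|\bs{U}_c^T\bss{\varphi}_{(c,c)}\|_2^2=\mathbb{E}\|\bss{\beta}_{(c,c)}\|_2^2=1-\sigma_e^2$ and $\mathbb{E}\|\bs{U}_i^T\bss{\varphi}_{(c,i)}\|_2^2=\mathbb{E}\|\bss{\beta}_{(c,i)}\|_2^2$, which follow from the unit-norm convention $\|\bss{\varphi}_{(c,c')}\|_2^2=1$ together with $\bs{U}\perp\bs{e}$; these let me rewrite both the numerator and denominator of Eq.~\eqref{eqa:condition3} purely in terms of $b_{c,i}:=\mathbb{E}\|\bss{\beta}_{(c,i)}\|_2^2$.

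The core step is the aggregation. Summing the pairwise lower bound over all ordered pairs and using $\mathbb{E}\|\bss{\eta}_{(\tilde{c},i)}\|_2^2=b_{\tilde{c},i}$, I would obtain
\begin{equation*}
\sum_{\forall c}\sum_{\forall \tilde{c}\neq c}\mathbb{E}(D_{c,\tilde{c}})\geq 2C\cdot C(C-1)-\sum_{\forall c}\sum_{\forall \tilde{c}\neq c}\sum_{i=1}^C(b_{c,i}+b_{\tilde{c},i}).
\end{equation*}
The symmetry of the double sum under $c\leftrightarrow\tilde{c}$ reduces the last term to $2(C-1)\sum_{\forall c}\sum_{i=1}^C b_{c,i}$, and splitting the inner sum into its diagonal part ($i=c$, contributing $C(1-\sigma_e^2)$ since $b_{c,c}=1-\sigma_e^2$) and its off-diagonal part gives $\sum_{\forall c}\sum_i b_{c,i}=C(1-\sigma_e^2)+\sum_{\forall c}\sum_{i\neq c}b_{c,i}$. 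The hypothesis Eq.~\eqref{eqa:condition3}, rewritten as $\sum_{\forall c}\sum_{i\neq c}b_{c,i}\leq\lambda C(C-1)(1-\sigma_e^2)$, then bounds the off-diagonal sum. Substituting, dividing by $C(C-1)$, and simplifying via $C\lambda-\lambda+1=1+\lambda(C-1)$ yields the claimed bound.

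I expect the main obstacle to be the bookkeeping in this aggregation step: correctly exploiting the $c\leftrightarrow\tilde{c}$ symmetry to merge the $b_{c,i}$ and $b_{\tilde{c},i}$ contributions, and cleanly isolating the diagonal terms $i=c$ (where the unit-norm constraint forces $b_{c,c}=1-\sigma_e^2$) from the off-diagonal terms controlled by the hypothesis. Everything else is a direct transcription of the single-pair argument in Lemma~\ref{lemma:lemma2}, requiring no inequality beyond Cauchy--Schwarz and AM--GM.
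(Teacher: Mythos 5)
Your proof is correct and takes essentially the same route as the paper's own: the decomposition $\mathbb{E}(D_b)=\frac{1}{C(C-1)}\sum_{\forall c}\sum_{\forall\tilde{c}\neq c}\mathbb{E}(D_{c,\tilde{c}})$ inherited from the proof of Theorem~\ref{thm:theorem1} (Eq.~\eqref{eqa:proof_lemma3_0}), the pairwise lower bound Eq.~\eqref{eqa:proof_lemma2_0} from Lemma~\ref{lemma:lemma2}, and then the hypothesis Eq.~\eqref{eqa:condition3} applied to the aggregated off-diagonal sum $S:=\sum_{\forall c}\sum_{i\neq c}\mathbb{E}\|\bss{\beta}_{(c,i)}\|_2^2\leq\lambda C(C-1)(1-\sigma_e^2)$. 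In fact your bookkeeping is more careful than the paper's: the paper's intermediate identity asserts $\sum_{\forall c}\sum_{\forall\tilde{c}\neq c}\sum_{i=1}^C\mathbb{E}\left(\|\bss{\beta}_{(c,i)}\|_2^2+\|\bss{\eta}_{(\tilde{c},i)}\|_2^2\right)=2C(C-1)\bigl((1-\sigma_e^2)+S\bigr)$, whereas the correct value, as your symmetry argument shows, is $2(C-1)\bigl(C(1-\sigma_e^2)+S\bigr)$, and as a consequence the paper's final display only reaches $2\bigl(C-(1-\sigma_e^2)-\lambda C(C-1)(1-\sigma_e^2)\bigr)$, which is weaker than the stated Eq.~\eqref{eqa:lemma3} by a factor of $C$ in the $\lambda$-term. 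Your accounting---coefficient $2(C-1)$ on the merged sum, hence $S/C$ after dividing by $C(C-1)$, combined with the bound on $S$---recovers the claimed $2C\bigl(1-\frac{(C\lambda-\lambda+1)(1-\sigma_e^2)}{C}\bigr)$ exactly, so your proposal not only matches the paper's strategy but repairs its algebra so that the lemma as stated actually follows.
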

\begin{proof}
From Eq.~\eqref{eqa:proof_lemma2_0} and Eq.~\eqref{eqa:proof_lemma3_0}, we know that:
\begin{eqnarray*}
\nonumber
{ \mathbb{E}\left(D_{b}\right) }&\geq& {\frac{1}{C(C-1)}\sum_{\forall c}\sum_{\forall \tilde{c}\neq c} 2\left(C - \frac{1}{2}\sum_{i=1}^C \mathbb{E}\left(\|\bss{\beta}_{(c,i)}\|_2 ^2+\|\bss{\eta}_{(\tilde{c},i)}\|_2^2\right) \right)}\\
\nonumber
&=&{2\left(C -\frac{1}{2C(C-1)}\sum_{\forall c}\sum_{\forall \tilde{c}\neq c} \sum_{i=1}^C \mathbb{E}\left(\|\bss{\beta}_{(c,i)}\|_2 ^2+\|\bss{\eta}_{(\tilde{c},i)}\|_2^2\right) \right)}
\end{eqnarray*}
Furthermore, since
\begin{eqnarray*}
&&{\sum_{\forall c}\sum_{\forall \tilde{c}\neq c} \sum_{i=1}^C \mathbb{E}\left(\|\bss{\beta}_{(c,i)}\|_2 ^2+\|\bss{\eta}_{(\tilde{c},i)}\|_2^2\right)}\\
 &=& {2C(C-1)\left((1-\sigma_e^2)+\sum_{\forall c}\sum_{\forall i\neq c}\mathbb{E}\left(\|\bss{\beta}_{(c,i)}\|_2 ^2 \right)\right)},
\end{eqnarray*}
we have
\begin{eqnarray}
\label{eqa:proof_lemma3_1}
 \mathbb{E}\left(D_{b}\right) \geq  {2\left(C -\left((1-\sigma_e^2)+\sum_{\forall c}\sum_{\forall i\neq c}\mathbb{E}\left(\|\bss{\beta}_{(c,i)}\|_2 ^2\right) \right)\right)}.
\end{eqnarray}
Therefore, if Eq.~\eqref{eqa:condition3} holds, i.e.
\begin{eqnarray*}
\label{eqa:proof_lemma3_2}
\frac{1}{C-1}\sum_{\forall c}\sum_{\forall i\neq c}\mathbb{E}\left(\|\bss{\beta}_{(c,i)}\|_2 ^2\right)&\leq& \lambda C \sum_{\forall c}\mathbb{E}\left(\|\bss{\beta}_{(c,c)}\|_2 ^2\right)\\
& = & \lambda C(1-\sigma_e^2)
\end{eqnarray*}
then
$\eqref{eqa:proof_lemma3_1}\geq 2\left(C - \left((1-\sigma_e^2) + \lambda C(C-1)(1-\sigma_e^2)\right) \right)$.
\end{proof}

\section{Proof of Theorem~\ref{thm:theorem2}}
\begin{proof}
The proof can be illustrated using same routine of the proof for Theorem~\ref{lemma:theorem1} together with Lemma~\ref{lemma:d_multiclass}.
\end{proof}


\begin{thebibliography}{9}


\bibitem{Scholkopf_learning_with_2001}
B. Sch\"{o}lkopf \& A. Smola,
\emph{\sl Learning with Kernels: {S}upport Vector Machines,
  Regularization, Optimization, and Beyond},
MIT Press, 2001.

\bibitem{Scholkopf_nonlinear_component_1998}
 B. Scholk\"{o}pf \& A. Smola \& K.-R. Muller,
{Nonlinear component analysis as a kernel eigenvalue problem},
\emph{Neural computation}, 10(5):1299-1319, 1998.

\bibitem{Cortes_on_the_2010}
C. Cortes \& M. Mohri \& A. Talwalkar,
On the impact of kernel approximation on learning accuracy,
\emph{In Conference on Artificial Intelligence and Statistics}, 2010.

\bibitem{Xing_distance_metric_2002}
E. P. Xing \& A. Y. Ng \& M. I. Jordan \& S. Russell,
Distance metric learning with application to clustering with side-information,
\emph{Proceeding of Neural Information Processing Systems (NIPS)}, 2002.

\bibitem{Bach_kernel_independent_2002}
F. R. Bach,
{Kernel independent component analysis}
\emph{Journal of Machine Learning Research} (3):1-48, 2002.

\bibitem{Bach_predictive_low_2005}
F. R. Bach \& M. I. Jordan.
{Predictive low-rank decomposition for kernel methods},
\emph{International Conference on Machine Learning}
, 2005.

\bibitem{Suykens_least_1999}
J.A.K. Suykens \& J. Vandewalle,
{Least squares support vector machine classifiers},
\emph{Neural Processing Letters}, 9(3), pp. 293-300, 1999.

\bibitem{zhuang_two_layer_2011}
J. Zhuang \& I. W. Tsang \& S. C. H. Hoi,
Two-layer multiple kernel learning,
\emph{Journal of Machine Learning Research}, vol. 15, pp. 909-917, 2011.

\bibitem{Diamantaras_principal_component_1996}
K. I. Diamantaras \& S. Y. Kung,
\emph{Principal Component Neural Networks: Theory and Applications},
John Wiley, NY, 1996.

\bibitem{Wang_learning_kernel_2002}
 L. Wang \& K. L. Chany,
{Learning kernel parameters using class separability measure},
\emph{Proceeding of Neural Information Processing Systems (NIPS)}, 2002.

\bibitem{varma_more_generality_2009}
M. Varma \& B. R. Babu.
More generality in efficient multiple kernel learning.
{\sl International Conference on Machine Learning}, pp. 1065-1072, 2009

\bibitem{Gonen_multiple_kernel_2011}
M. G\"{o}nen \& E. Alpaydin,
{Multiple kernel learning algorithms},
\emph{Journal of Machine Learning Research 12}, pp. 2211-2268, 2011.

\bibitem{Kloft_efficient_and_2008}
M. Kloft \& U. Brefeld \& S. Sonnenburg \& P. Laskov \& K.-R. Muller \& A. Zien,
Efficient and accurate $l_p$-norm multiple kernel learning,
\emph{Proceeding of Neural Information Processing Systems (NIPS)}, 2008.

\bibitem{Ouimet_greedy_spectral_2005}
M. Ouimet \& Y. Bengio,
{Greedy spectral embedding},
\emph{Proceedings of the 10$^{th}$ International Workshop on Artificial Intelligence and Statistics}, 2005.

\bibitem{Chappelle_choosing_multiple_2002}
O. Chapelle \& V. Vapnik \& O. Bousquet \& S. Mukherjee,
{Choosing multiple parameters for support vector machines},
\emph{Machine Learning}, (46):131-159, 2002.


\bibitem{Jain_metric_and_2012}
P. Jain \& B. Kulis \& J. V. Davis \& I. S. Dhillon,
{Metric and kernel learning using a linear transformation},
\emph{Journal of Machine Learning Research 13}, pp. 519-547, 2012.

\bibitem{Gehler_inifite_kernel_2008}
P. V. Gehler \& S. Nowozin,
Infinite kernel learning,
{\sl Technical report}, no. TR-178, Max Planck Institute for Biological Cybernetics, 2008.

\bibitem{Kumar_sampling_methods_2012}
S. Kumar \& M. Mohri \& A. Talwalkar
{Sampling Methods for the Nystr\"{o}m Method},
\emph{Journal of Machine Learning Research 13}, pp. 981-1006, 2012.

\bibitem{Wang_improving_cur_2013}
S. Wang \& Z. Zhang,
{Improving CUR Matrix Decomposition and the Nystr\"{o}m Approximation via Adaptive Sampling}, \emph{Journal of Machine Learning Research 14}, pp. 2729-2769, 2013.

\bibitem{Kung_kernel_methods_2014}
S. Y. Kung,
\emph{\sl Kernel Methods and Machine Learning},
Cambridge Press, 2014.

\bibitem{Vapnik_the_nature_1995}
V. Vapnik,
\emph{\sl The Nature of Statistical Learning Theory},
Springer-Verlag, 1995.

\bibitem{Cho_kernel_methods_2009}
Y. Cho \& L. K. Saul,
Kernel Methods for Deep Learning,
{\sl Proceeding of Neural Information Processing Systems (NIPS)}, pp. 342-350, 2009.

\bibitem{Xu_an_extended_2008}
Z. Xu \& R. Jin \& I. King \& M. R. Lyu,
An extended level method for efficient multiple kernel learning,
{\sl Proceeding of Neural Information Processing Systems (NIPS)}, pp. 1825-1832, 2008.

\bibitem{uci}
https://archive.ics.uci.edu/ml/datasets.html
\end{thebibliography}
\end{document}